%% file: neurips_2025.tex
\documentclass{article}

 \usepackage[main, final]{neurips_2025}

\usepackage[utf8]{inputenc} 
\usepackage[T1]{fontenc}    
\usepackage{hyperref}       
\usepackage{url}            
\usepackage{booktabs}       
\usepackage{amsfonts}       
\usepackage{nicefrac}       
\usepackage{microtype}      
\usepackage{xcolor}         

\usepackage{amsmath,amssymb} 
\usepackage{color}
\usepackage{algorithm}
\usepackage{algorithmic}
\usepackage{graphicx}
\usepackage{multirow}
\usepackage{subfigure}
\usepackage{comment}
\usepackage{stackengine}
\usepackage{colortbl}
\usepackage{amsthm}     
\usepackage{caption}
\usepackage{wrapfig}
\usepackage{physics}
\usepackage{amssymb}
\usepackage{amsmath}
\usepackage{mathtools}
\usepackage{cleveref}

\newtheorem{lemma}{Lemma}
\newtheorem{proposition}{Proposition}
\newtheorem*{proposition*}{Proposition}
\newtheorem{assumption}{Assumption}
\renewcommand{\algorithmicrequire}{ \textbf{Input:}} 
\renewcommand{\algorithmicensure}{ \textbf{Output:}} 

\newtheorem{theorem}{Theorem}
\newtheorem*{theorem*}{Theorem}

\makeatletter
\newcommand{\STATEx}{\item[]}
\makeatother

\newcommand{\bw}{\text{\boldmath{$w$}}}

\newcommand{\bu}{\boldsymbol{u}}

\newcommand{\bS}{\boldsymbol{S}}

\newcommand{\balpha}{\boldsymbol{\alpha}}

\newcommand{\bI}{\boldsymbol{I}}

\newcommand{\bA}{\boldsymbol{A}}

\newcommand{\cD}{\mathcal{D}}

\newcommand{\cL}{\mathcal{L}}

\newcommand{\cP}{\mathcal{P}}

\newcommand{\cF}{\mathcal{F}}
\newcommand{\cA}{\mathcal{A}}

\newcommand{\cH}{\mathcal{H}}

\newcommand{\cO}{\mathcal{O}}

\newcommand{\train}{\mathrm{train}}
\newcommand{\val}{\mathrm{val}}

\title{TANDEM: Bi-Level Data Mixture Optimization \\ with Twin Networks}

\author{%
  Jiaxing Wang\thanks{Equal Contribution.}\hspace{1ex}$^{1}$, \quad Deping Xiang\footnote[1]{}\hspace{1ex}$^{1}$, \quad Jin Xu$^{2}$, \quad Mingyang Yi\thanks{Corresponding to Pengzhang Liu}~\hspace{0.8ex}$^{3}$, \quad Guoqiang Gong$^{1}$, \\
  \textbf{Zicheng Zhang$^{1}$,\quad Haoran Li$^{4}$,\quad Pengzhang Liu\footnote[2]{}\hspace{1ex}$^{1}$, \quad Zhen Chen$^{1}$, \quad Ke Zhang$^{1}$}, \\
  \textbf{Ju Fan$^{3}$, \quad Qixiang Jiang$^{1}$} \\
  $^{1}$JD.com \quad $^{2}$ University of Oxford \quad $^{3}$Renmin University of China \\
  $^4$University of Chinese Academy of Sciences \\
  \texttt{\{wangjiaxing41,xiangdeping1,gongguoqiang1,zhangzicheng6\}@jd.com} \\
  \texttt{\{liupengzhang,chenzhen48,zhangke323,jiangqixia\}@jd.com} \\
  \texttt{\{yimingyang,fanj\}@ruc.edu.cn} \\
  \texttt{jin.xu@stats.ox.ac.uk} \\
  \texttt{lihaoran21@mails.ucas.ac.cn} \\
}

\begin{document}

\maketitle

\begin{abstract}
The capabilities of large language models (LLMs) significantly depend on training data drawn from various domains. Optimizing domain-specific mixture ratios can be modeled as a bi-level optimization problem, which we simplify into a single-level penalized form and solve with twin networks: a proxy model trained on primary data and a dynamically updated reference model trained with additional data. Our proposed method, Twin Networks for bi-level DatA mixturE optiMization (TANDEM), measures the data efficacy through the difference between the twin models and up-weights domains that benefit more from the additional data. TANDEM provides theoretical guarantees and wider applicability, compared to prior approaches. 
Furthermore, our bi-level perspective suggests new settings to study domain reweighting such as data-restricted scenarios and supervised fine-tuning, where optimized mixture ratios significantly improve the performance. Extensive experiments validate TANDEM's effectiveness in all scenarios.

\end{abstract}

\input{sections/introduction.tex}
\input{sections/methods.tex}
\input{sections/related.tex}
\input{sections/experiment.tex}
\input{sections/conclusion.tex}

\bibliographystyle{plain}
\bibliography{bibfile}

\appendix
\input{sections/appendix.tex}

\end{document}

%% file: sections/introduction.tex
\section{Introduction}
\label{sec:introduction}

The success of large language models (LLMs) largely relies on extensive training data collected from diverse domains, including chat logs~\cite{kopf2023openassistant}, academic writings~\cite{ross2022galactica}, mathematical problems~\cite{yu2024metamath}, and code repositories~\cite{li2023starcoder}. The emergent capabilities observed in LLMs are substantially influenced by the specific composition of cross-domain corpora~\cite{guo2022sample,xie2023dataselection}. Therefore, it is important to carefully balance the proportions of domain-specific data in training sets to ensure models develop intended and balanced capabilities for target domains.

Optimizing the mixture ratios of data domains can be formulated as a bi-level optimization problem, in which the inner loop optimizes model parameters for a fixed ratio on training data, while the outer loop searches for the best mixture ratio on validation data. 
Due to the difficulty of exactly solving this bi-level problem, we transform it into a single-level optimization problem. Specifically, the inner-level optimization is viewed as a Lagrangian penalty within the outer-level objective. This perspective naturally motivates the introduction of twin networks: a proxy model trained exclusively on the primary training data, and a reference model exposed to additional validation data.
Interestingly, this twin-network formulation relates closely to prior methods such as DoReMi \citep{xie2023doremi} and DoGE \citep{fan2024doge}, offering insights into addressing their limitations. Based on this formulation, we propose Twin Networks for bi-level DatA mixturE optiMization (TANDEM), an algorithm that measures the domain data efficacy through the twin models' disparities and ultimately approximates the original bi-level optimization problem. 
Unlike DoReMi, which employs a static reference model, TANDEM dynamically updates both models. 
Additionally, TANDEM enhances stability and provides theoretical convergence guarantees compared to DoGE, as it aggregates multiple updating steps and avoids relying directly on gradient estimation, thereby mitigating issues related to high variance.

The bi-level formulation also emphasizes that future research on data domain weighting could focus more on scenarios with limited domain-specific data and supervised fine-tuning (SFT), rather than exclusively on traditional pretraining settings with abundant data. From a bi-level optimization viewpoint, assigning equal mixture ratios becomes a valid solution for single-epoch training when domain data is abundant, aligning with recent empirical findings highlighting uniform mixing strategies as competitive baselines \citep{chen2025aioli}. Despite the prevalence of big data, limited data scenarios are quite common, particularly within specific domains, since large datasets frequently consist of many heterogeneous smaller datasets \citep{big_data_small_data}. Furthermore, SFT often requires domain data to be visited multiple times, which creates generalization gap that leads to non-trivial solution for the bi-level problem. It is precisely in these cases that optimizing data mixture ratios can yield significant improvements. 

While previous methods like DoReMi and those built on data mixing laws~\cite{ye2025dml,liu2025regmix,kang2025autoscale} are not directly applicable to these newer scenarios, TANDEM can be effectively extended. Our experimental results demonstrate TANDEM's effectiveness in such settings.

Our contributions can be summarized as follows:
\begin{itemize}
\item We introduce TANDEM, an effective and efficient algorithm for data mixture optimization that utilizes twined proxy and reference networks to approximate the bi-level objective. TANDEM enjoys theoretical convergence guarantees. 
\item We highlight that data mixture optimization is particularly beneficial in scenarios with limited data availability 
rather than traditional pretraining setups with abundant domain data.
\item We empirically show TANDEM’s effectiveness across standard and data-limited scenarios, showing its superiority over a set of competitive data mixture optimization methods.
\end{itemize}

%% file: sections/methods.tex
\section{Methodology}
\label{sec:methodology}
We formulate the problem of finding the optimal data mixture ratio as bi-level optimization. To solve the problem, we propose our penalty-based algorithm TANDEM, as presented in  Figure~\ref{fig:computation_precdure} and Algorithm \ref{alg:workflow} in Appendix~\ref{sec:convergence}. TANDEM draws insights from many previous works, while improving upon them. Besides, by inspecting the bi-level optimization formulation, we suggest broadening the scope of research on data mixture optimization under limited-domain data and supervised fine-tuning (SFT), rather than focusing solely on conventional data-rich pretraining settings. The notations used in this paper are summarized in Appendix~\ref{sec:summary_of_notations}.







\begin{figure*}[t]
\centering
\begin{minipage}{0.24\linewidth}
    \centering
    \subfigure[The Two-Stage DMO.]{
    \label{fig:two_stage_dmo}
    \includegraphics[width=0.95\textwidth]{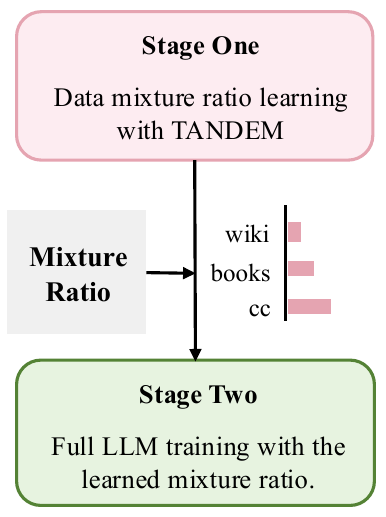}}
\end{minipage}
\begin{minipage}{0.72\linewidth}
	\centering
 	\subfigure[Computation Procedure of TANDEM]{	
        \label{fig:computation_precdure}
	\includegraphics[width=0.88\linewidth]{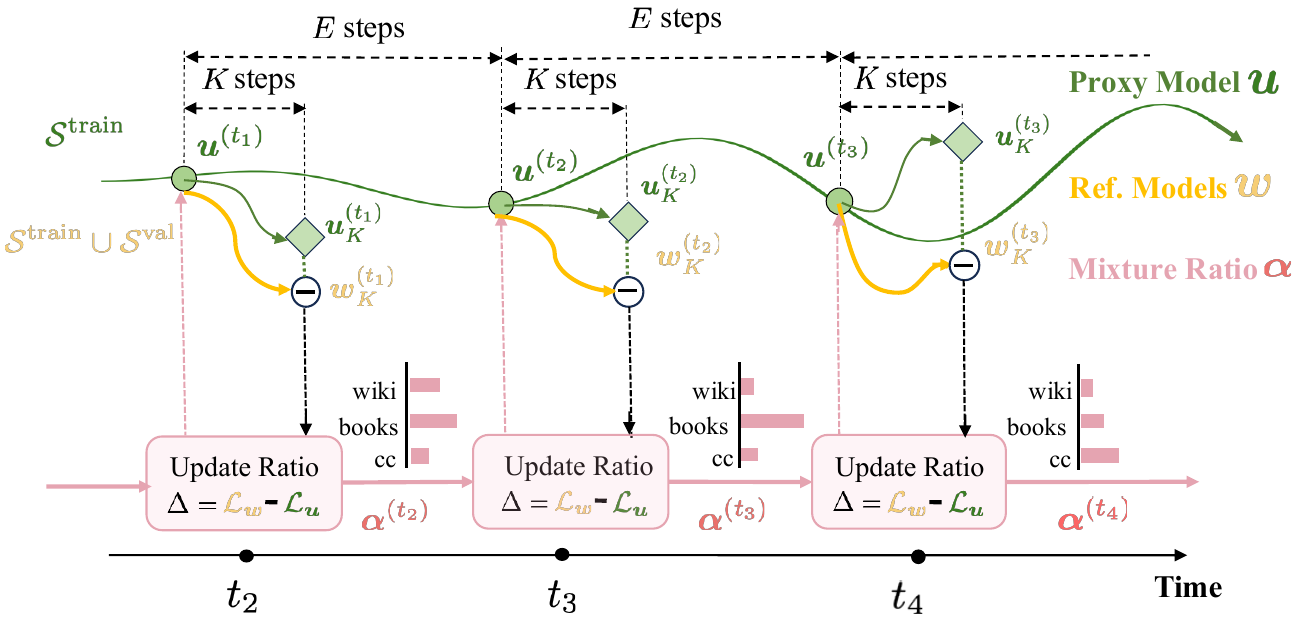}}
\end{minipage}
\caption{(a) The two-stage data mixture optimization. Optimal mixtures are first learned and then utilized to train the final model. (b) The computation procedure of TANDEM, twined proxy model (green) and reference model (orange) are used to determine the update of the mixture ratio (pink).}
\end{figure*}

\subsection{Problem Formulation}

Consider training an LLM on a data composition from $M$ domains, $\left\{\mathcal{D}_1, \mathcal{D}_2, \dots, \mathcal{D}_M \right\}$~(e.g. Wikipedia, CommonCrawl). Data mixture optimization~(DMO) refers to the problem of finding the optimal proportions of data for each domain $\balpha = \left[\alpha_1, \alpha_2, \dots, \alpha_M\right]$ over probability simplex $\mathcal{A}:=\{\balpha \in \mathbb{R}^M \mid \sum_{m=1}^M \boldsymbol{\alpha}_m=1, \alpha_m \geq 0 \}$. 
For any data mixture ratio $\balpha$, and its corresponding model parameter $\bw(\balpha)$ obtained by training loss, our goal is to minimize the validation loss $\mathcal{L}_{\val}(\bw(\balpha))$ over $\balpha$. The optimization problem is formulated as follows:  
\begin{equation}
\small
\label{eq:bi-level}
    \min_{\balpha \in \mathcal{A}} \cL_{\val}(\bw(\balpha)): = \sum_{m=1}^M \mathcal{L}^m_{\val}\left(\bw(\balpha)\right) \ \  \text { s.t. } \bw(\balpha)\in\arg \min_{\boldsymbol{w}} \cL_{\train}(\balpha, \bw) := \sum_{m=1}^M \boldsymbol{\alpha}_m\mathcal{L}_{\text{train}}^{m}(\boldsymbol{w}).
\end{equation}
By splitting the data into training and validation sets, we can construct the validation and training loss $\cL_{\val}^{m}(\bw)$ and $\cL_{\train}^{m}(\bw)$ on domain $\cD_{m}$. Intuitively, in the bi-level problem, the outer level problem seeks the optimal domain weights for validation loss with the model weights obtained by the inner level reweighted training loss. 
Similar to~\cite{xie2023doremi,fan2024doge}, given the learned final mixture ratio $\boldsymbol{\alpha}^*$, we construct the final training set by sampling $\cD_{\boldsymbol{\alpha}^*} \triangleq \sum_{m=1}^M \boldsymbol{\alpha}^* \cdot \operatorname{UNIF}\left(\cD_m\right)$ upon which the final model is trained~(Figure~\ref{fig:two_stage_dmo}).




\subsection{Twin Networks for Bi-level Data Mixture Optimization}
Solving the bi-level optimization problem~\eqref{eq:bi-level} is challenging due to its nested structure. By viewing the inner-level problem as a constraint and subsequently incorporating it into the outer-level problem as a Lagrangian penalty, \eqref{eq:bi-level} can be reformulated as a single-level problem~\cite{shen23onpenalty,kwon2023f2sa}:
\begin{equation}
\label{eq:bi-level-penalty}
    \min_{\balpha\in\cA, \boldsymbol{w}} \cH_{\gamma}(\boldsymbol{\alpha}, \boldsymbol{w}):= \mathcal{L}_{\text{val}}( \boldsymbol{w})+ \gamma \left(\mathcal{L}_{\text{train}}( \boldsymbol{\alpha}, \boldsymbol{w})- \min_{\boldsymbol{u}} \mathcal{L}_{\text{train}}(\boldsymbol{\alpha}, \boldsymbol{u})\right).
\end{equation}
Here, the auxiliary variable $\boldsymbol{u}$ is introduced as a proxy of $\boldsymbol{w}(\balpha)\in S^{*}(\balpha) := \arg\min_{\bw}\cL_{\train}(\balpha, \bw)$. The constrain in \eqref{eq:bi-level} is transferred into the penalization $\cL_{\train}(\balpha, \bw) - \min_{\bu}\cL_{\train}(\balpha, \bu)$. Clearly, by properly invoking $\gamma\to \infty$, the solution of \eqref{eq:bi-level-penalty} will approximate the original \eqref{eq:bi-level}. This claim is justified by Proposition 3 in \citep{shen23onpenalty}. We refer readers to the Appendix~\ref{sec:convergence} for more information. Next, we proceed to illustrate our algorithm of optimizing the penalized Lagrange problem \eqref{eq:bi-level-penalty}. 

\paragraph{Algorithm Procedure}
Besides the mixture ratio $\boldsymbol{\alpha}$, optimizing \eqref{eq:bi-level-penalty} deals with two models: a proxy model $\boldsymbol{u}$ and a reference model $\boldsymbol{w}$. As shown in Figure\ref{fig:computation_precdure}, the optimization processes of $\balpha$, $\bw$, and $\bu$ are indexed by $t$, $k$ and $k$.

\textit{\textbf{Update on $\boldsymbol{u}$}}: Firstly, given a data mixture ratio $\balpha^{(t)}$, we find the $\bu\in \arg\min_{\bu}\cL_{\train}(\balpha^{(t)}, \bu)$ in the penalization term under certain mixture ratio $\boldsymbol{\alpha}$. 
Our $\boldsymbol{u}$ is updated for $K$ steps to approximate the optimal one:
\begin{equation}
\label{eq:updata_u}
    \boldsymbol{u}^{(t)}_{k+1}=\boldsymbol{u}^{(t)}_{k}-\eta_u \nabla_{{\boldsymbol{u}}} \mathcal{L}_{\text{train}}\left(\boldsymbol{\alpha}^{(t)}, \boldsymbol{u}^{(t)}_{k}\right)
\end{equation}
~(green line with arrow in Figure~\ref{fig:computation_precdure}). The proxy model is trained on the whole training set and maintained through the DMO process. 


\textit{\textbf{Update on $\boldsymbol{w}$}}:
$\boldsymbol{w}$ serves as a reference model updated on both the training and the validation sets. Similar to the proxy model $\bu$, $\bw$ is updated for $K$ steps before one data mixture ratio $\balpha$ update, which is used to optimize the inner problem \eqref{eq:bi-level-penalty}. Note that the term related to $\boldsymbol{w}$ is $\mathcal{L}_{\text {val}}(\boldsymbol{w})+\gamma \mathcal{L}_{\text {train }}(\boldsymbol{\alpha}, \boldsymbol{w})$, the update rule then becomes:
\begin{equation}
\label{eq:update_w}
    \boldsymbol{w}^{(t)}_{k+1} = \boldsymbol{w}^{(t)}_{k} - \eta_{\boldsymbol{w}} \left( \nabla_{\boldsymbol{w}} \mathcal{L}_{\text{val}}\left(\boldsymbol{\alpha}^{(t)}, \boldsymbol{w}^{(t)}_k \right) + \gamma\nabla_{\boldsymbol{w}}\mathcal{L}_{\text{train}}\left(\boldsymbol{\alpha}^{(t)}, \boldsymbol{w}^{(t)}_k \right)  \right),
\end{equation}
~(orange line in Figure~\ref{fig:computation_precdure}). Intuitively, training $\boldsymbol{w}$ for multiple steps provides more subtle 
guidance for mixture ratio update. This will be elaborated in the $\boldsymbol{\alpha}$ update part. 
\par
Unlike the proxy model $\bu$, which is maintained throughout training~(green line in Figure~\ref{fig:computation_precdure}), we do not maintain an independent reference model $\boldsymbol{w}$, but rather synchronize the starting point of $\boldsymbol{w}$ and $\boldsymbol{u}$ by setting $\boldsymbol{w}_0^{(t)} = \boldsymbol{u}_0^{(t)}$ as a initialization of the $K$ updates. By doing so, we control the disparity between $\boldsymbol{w}$ and $\boldsymbol{u}$ during the optimization. Intuitively, $\boldsymbol{w}$ and $\boldsymbol{u}$ should not diverge from each other as $\boldsymbol{u}$ acts as a proxy of $\bw(\balpha)\in S^{*}(\balpha)$ and $\bw$ approximates $\bw_{\gamma}^{*}(\balpha)\in S_{\gamma}^{*}(\balpha) := \arg\min_{\bw}\cH_{\gamma}(\balpha, \bw)$. Clearly, the ideal $\bw_{\gamma}^{*}(\balpha)$ under penalized problem will approximate one $\bw(\balpha)$ when $\gamma\to \infty$, since the penalized Lagrange problem \eqref{eq:bi-level-penalty} approximates the original problem \eqref{eq:bi-level}.  

\textit{\textbf{Update on $\boldsymbol{\alpha}$}}:
The mixture ratio update in \eqref{eq:update_alpha} seeks a solution of \eqref{eq:bi-level-penalty}. That says, a data mixture ratio yields a trained model with good performance on data from the validation set under all domains. The term in \eqref{eq:bi-level-penalty} related to $\boldsymbol{\alpha}$ is $\gamma \left(\mathcal{L}_{\text{train}}( \boldsymbol{\alpha}, \boldsymbol{w})- \min_{\boldsymbol{u}} \mathcal{L}_{\text{train}}(\boldsymbol{\alpha}, \boldsymbol{u})\right)$. Recall that $\mathcal{L}_{\text{train}}(\boldsymbol{\alpha}, \cdot)$, by definition, is the $\boldsymbol{\alpha}$ weighted domain-wise loss. Applying the projected gradient descent gives the following update rule:
\begin{equation}
\begin{aligned}
\label{eq:update_alpha}
    \boldsymbol{\alpha}^{(t+1)} = \Pi_{\mathcal{A}} \left( \boldsymbol{\alpha}^{(t)} -  \eta_{\boldsymbol{\alpha}} \gamma \left(\underbrace{ \mathcal{L}_{\text{train}}^{1:M} \left( \boldsymbol{\alpha}^{(t)}, \boldsymbol{w}^{(t)}_{K}  \right)}_{\textbf{reference model}} -\underbrace{\mathcal{L}_{\text{train}}^{1:M}\left(\boldsymbol{\alpha}^{(t)}, \boldsymbol{u}^{(t)}_{K}\right)}_{\textbf{proxy model}}  \right) \right),
\end{aligned}
\end{equation}
(Pink line in Figure~\ref{fig:computation_precdure}.) where $\Pi_{\mathcal{A}}(\cdot)$ projects the updated mixture ratio to the probabilistic simplex.
The post-updated $\bu_K^{(t)}$ and $\bw_{K}^{(t)}$ are applied to capture the domain-wise loss difference $\mathcal{L}^{m}_{\text{train}}(\boldsymbol{w}_K^{(t)}) - \mathcal{L}^{m}_{\train}(\boldsymbol{u}_K^{(t)})$. 
Since the $\bu_{K}^{(t)}$ and $\bw_{K}^{(t)}$ are respectively trained on training set and training set plus validation set, the aforementioned gap captures the gain of incorporating the additional validation data. Larger loss differences indicate that the model gets significant improvement by consuming more data, thus the corresponding domains are up-weighted. 

In each episode $t$, $K$ steps training on $\bu$ and 
$\bw$ are conducted to probe the proper direction of $\balpha$ update. Notably, the parameters of proxy model $\bu$ and reference model $\bw$ are synchronized at the beginning of probing, forming a Twin Networks for bi-level DatA mixturE optiMization~(TANDEM) framework. Since the updating of $\balpha$ requires $\bu, \bw$ probing, in practice, we decrease the frequency of updating $\balpha$ to reduce the computational cost, and leave the proxy model $\bu$ trained freely for $E$ steps before the next $\balpha$ update~(see Figure~\ref{fig:computation_precdure}).
Altogether the updates of $\boldsymbol{u}$, $\boldsymbol{w}$, $\boldsymbol{\alpha}$, the data mixture optimization~(DMO) problem can be solved efficiently. The overall computation graph of TANDEM is outlined in Figure~\ref{fig:computation_precdure}, and a detailed workflow is summarized in Algorithm~\ref{alg:workflow} in Appendix~\ref{sec:convergence}.

\paragraph{Convergence Analysis} Next, we explore the convergence rate of our proposed method. For a non-convex bi-level optimization problem, it is standard to study its first-order stationary convergence result e.g., \cite{shen23onpenalty,kwon2023f2sa}. Notably, our problem \eqref{eq:bi-level-penalty} is a constrain problem over $\balpha\in\cA$. Thus, it should be considered in first-order stationary condition as in \citep{ghadimi2016mini,shen23onpenalty}. The convergence result of our method is summarized in the following theorem.    

\begin{theorem}[Informally]
\label{thm:convergence}
For sufficiently large $T$, under mild assumptions~(detailed in Appendix~\ref{sec:convergence}), for $\balpha^{(t)}$ obtained in TANDEM Algorithm \ref{alg:workflow}, it converges to first-order stationary point of problem \eqref{eq:bi-level-penalty} in the rate of $\cO(T^{-\frac{1}{4}})$ by properly selecting $\gamma, K, E, \eta_{\balpha}, \eta_{\bw}$, and $\eta_{\bu}$. 
\end{theorem}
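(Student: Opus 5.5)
We plan to follow the penalty-based bilevel template of \cite{shen23onpenalty,kwon2023f2sa}, viewing TANDEM as an inexact projected gradient method on the value function
\[
\Phi_\gamma(\balpha) := \min_{\bw}\cH_\gamma(\balpha,\bw) = \min_{\bw}\bigl(\cL_{\val}(\bw)+\gamma\cL_{\train}(\balpha,\bw)\bigr) - \gamma\min_{\bu}\cL_{\train}(\balpha,\bu).
\]
The assumptions we will impose (to be stated in Appendix~\ref{sec:convergence}) are: each $\cL^m_{\train}$ and $\cL^m_{\val}$ is Lipschitz and $L$-smooth in $\bw$; $\cL_{\train}(\balpha,\cdot)$ is $\mu$-strongly convex, or at least $\mu$-PL, uniformly over $\balpha\in\cA$; and $\gamma$ is large enough that $\cL_{\val}+\gamma\cL_{\train}(\balpha,\cdot)$ is also strongly convex/PL with modulus of order $\gamma\mu$. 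Since $\cL_{\train}$ is linear in $\balpha$, Danskin's theorem gives
\[
\nabla_{\balpha}\Phi_\gamma(\balpha) = \gamma\bigl(\cL^{1:M}_{\train}(\bw^*_\gamma(\balpha)) - \cL^{1:M}_{\train}(\bu^*(\balpha))\bigr),
\]
which is exactly the direction in \eqref{eq:update_alpha} with $\bw^*_\gamma,\bu^*$ replaced by $\bw^{(t)}_K,\bu^{(t)}_K$. The first step is therefore to show $\Phi_\gamma$ is $L_\Phi$-smooth on $\cA$. Since $\bw^*_\gamma$ and $\bu^*$ are Lipschitz in $\balpha$ by the implicit function theorem / strong convexity, one expects $L_\Phi=\cO(\gamma)$. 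We will also note, via Proposition 3 of \cite{shen23onpenalty}, that stationary points of $\Phi_\gamma$ approximate those of \eqref{eq:bi-level}.

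Second, we will control the gradient estimation error $e_t := \hat g_t - \nabla\Phi_\gamma(\balpha^{(t)})$. By Lipschitzness of $\cL^m_{\train}$,
\[
\|e_t\|\le \gamma C\bigl(\|\bw^{(t)}_K-\bw^*_\gamma(\balpha^{(t)})\|+\|\bu^{(t)}_K-\bu^*(\balpha^{(t)})\|\bigr).
\]
Linear convergence of gradient descent for $K$ steps (respectively $K+E$ steps for $\bu$, since $\bu$ is also trained freely for $E$ steps) contracts these distances by factors $(1-\eta\mu)^{K}$. The warm start matters here. The $\bu$ iterate is carried over, and its drift between episodes is at most $\mathrm{Lip}(\bu^*)\,\|\balpha^{(t+1)}-\balpha^{(t)}\|\le \mathrm{Lip}(\bu^*)\,\eta_{\balpha}\|\hat g_t\|$. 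This gives a recursion $\delta_{t+1}\le \rho\,\delta_t + c\,\eta_{\balpha}\gamma$. For $\bw$, the synchronization $\bw^{(t)}_0=\bu^{(t)}_0$ means its initial error is bounded by $\|\bu^*-\bw^*_\gamma\|+\delta_t$, and $\|\bu^*-\bw^*_\gamma\|=\cO(1/\gamma)$ by comparing the optimality conditions. This $\cO(1/\gamma)$ term is then multiplied by $\gamma$ in $e_t$, which is why $K$ must grow like $\log\gamma$ so that the contraction kills it.

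Third, we will apply the standard inexact projected-gradient descent lemma for $L_\Phi$-smooth functions over a convex set \cite{ghadimi2016mini}. With the gradient mapping $G_t=\eta_{\balpha}^{-1}(\balpha^{(t)}-\Pi_\cA(\balpha^{(t)}-\eta_{\balpha}\nabla\Phi_\gamma))$ and $\eta_{\balpha}\le 1/(2L_\Phi)$, this gives
\[
\tfrac{1}{T}\sum_t\|G_t\|^2 \lesssim \frac{\Phi_\gamma(\balpha^{(0)})-\inf\Phi_\gamma}{\eta_{\balpha}T}+\frac1T\sum_t\|e_t\|^2 .
\]
The initial gap $\Phi_\gamma(\balpha^{(0)})-\inf\Phi_\gamma$ is $\cO(1)$ up to the penalty term, which is bounded since $\cA$ is compact. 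We then take $\eta_{\balpha}\sim 1/\gamma$, which makes the first term $\cO(\gamma/T)$. Choosing $K,E=\Theta(\log(\gamma T))$ makes the error term $\cO(1/T)$. Finally, to measure stationarity with respect to the original problem (where the $\gamma$-penalty bias contributes $\cO(1/\gamma)$), we balance $\gamma/T$ against $1/\gamma^2$, or in the cruder version of the argument against $1/\gamma$. Taking $\gamma\sim\sqrt T$ yields $\min_t\|G_t\|=\cO(T^{-1/4})$, matching the claimed rate.

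The main obstacle is the second step: the error $e_t$ is amplified by $\gamma$, and we must show the coupled recursion for $\bu$, with the warm start and $\balpha$ drift, stays bounded uniformly. We would first try a Lyapunov function $\Phi_\gamma(\balpha^{(t)})+c\gamma\|\bu^{(t)}-\bu^*(\balpha^{(t)})\|^2$ and tune $c$ so that the drift terms are absorbed. If strong convexity is unavailable, we would fall back on a PL condition with quadratic-growth bounds, as in \cite{shen23onpenalty}.
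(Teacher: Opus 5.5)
Your proposal is correct and follows essentially the same route as the paper: inexact projected gradient descent on the penalized value function $\cH_{\gamma}(\balpha,\bw_{\gamma}^{*}(\balpha))$, whose $\cO(\gamma)$-Lipschitz gradient is approximated with error at most $\gamma B(\|\bw_{K}^{(t)}-\bw_{\gamma}^{*}\|+\|\bu_{K}^{(t)}-\bw^{*}\|)$; linear convergence of the $K$ (resp.\ $K+E$) inner gradient steps under the PL condition; and the $\cO(1/\gamma)$ gradient-gap lemma to pass back to the original bilevel problem. The differences are minor. The paper never needs the warm-start recursion or Lyapunov function you flag as the main obstacle, because it bounds each episode's initial suboptimality by the uniform loss bound $D$ (resp.\ $(1+\gamma)D$ for $\bw$) and lets $K,E$ grow logarithmically; it also takes $\gamma=T^{1/4}$ with a cruder $\cO(\gamma^{2}/T)$ error term, whereas your choice $K,E=\Theta(\log(\gamma T))$ absorbs the factor $\gamma$ and permits $\gamma\sim\sqrt{T}$, and both yield $\cO(T^{-1/4})$.
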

The proof is left in Appendix~\ref{sec:convergence}. 
Theorem~\ref{thm:convergence} indicates that TANDEM theoretically ensures the optimality of the learned mixture ratio.

\subsection{TANDEM Improves Existing DMO Methods}
\label{sec:compare_doremi_doge}
We discuss the relationship between TANDEM and two existing methods, DoReMi~\cite{xie2023doremi} and DoGE~\cite{fan2024doge}. By comparing the hyper-gradient $\Delta$~\footnote{DoReMi and DoGE utilize exponential gradient descent to update $\boldsymbol{\alpha}$ where $\boldsymbol{\alpha}^{(t+1)} \propto \boldsymbol{\alpha}^{(t)} \exp(-\eta_{\alpha} \Delta)$} that determines $\boldsymbol{\alpha}$ updates in different methods, we show that our TANDEM draws common insights from previous works and improves upon them.  

\begin{figure}
\makeatletter\def\@captype{table}\makeatother
\setlength{\tabcolsep}{3 pt}{
\begin{minipage}{0.58\textwidth}
\centering
\captionsetup{margin=0.00cm}
\caption{Summary of $\Delta$ in DoReMi, DoGE and TANDEM. All three methods update $\boldsymbol{\alpha}$ with two models.} 
\label{tab:method_compare}
\small
\setlength\tabcolsep{-1.5 pt}
\begin{tabular}{@{}lc@{}}
\toprule
Method & Hyper-gradient $\Delta$ \\ \midrule
DoReMi &     $- \max \{\underbrace{\mathcal{L}_{\text{train}}^{1:M}(\boldsymbol{\alpha}^{(t)}, \boldsymbol{u}^{(t)})}_{\textbf{proxy model}}- \underbrace{\mathcal{L}_{\text{train}}^{1:M}\left(\bar{\boldsymbol{\alpha}}, \bar{\boldsymbol{w}} \right)}_{\textbf{reference model}}, 0\}$                   \\
DoGE   &  
$\underbrace{\mathcal{L}_{\text{train}}^{1:M}(\boldsymbol{\alpha}^{(t)}, \boldsymbol{u}^{(t)} - \eta\nabla \mathcal{L}_{\text{val}}(\boldsymbol{u}^{(t)}))}_{\textbf{reference model}} - \underbrace{\mathcal{L}_{\text{train}}^{1:M}(\boldsymbol{\alpha}^{(t)}, \boldsymbol{u}^{(t)})}_{\textbf{proxy model}} $ \\
TANDEM &        $\underbrace{ \mathcal{L}_{\text{train}}^{1:M} ( \boldsymbol{\alpha}^{(t)}, \boldsymbol{w}^{(t)}_{K}  )}_{\textbf{reference model}} -\underbrace{\mathcal{L}_{\text{train}}^{1:M}\left(\boldsymbol{\alpha}^{(t)}, \boldsymbol{u}^{(t)}_{K}\right)}_{\textbf{proxy model}} $                 \\ \bottomrule
\end{tabular}
\end{minipage}
\hspace{3ex}
\makeatletter\def\@captype{figure}\makeatother
\begin{minipage}{0.35\textwidth}
\captionsetup{margin=0.00cm}
\caption{SFT exhibits lower gradient alignment than pretraining}
\label{fig:gradient_variance}
\includegraphics[width=1.0\textwidth]{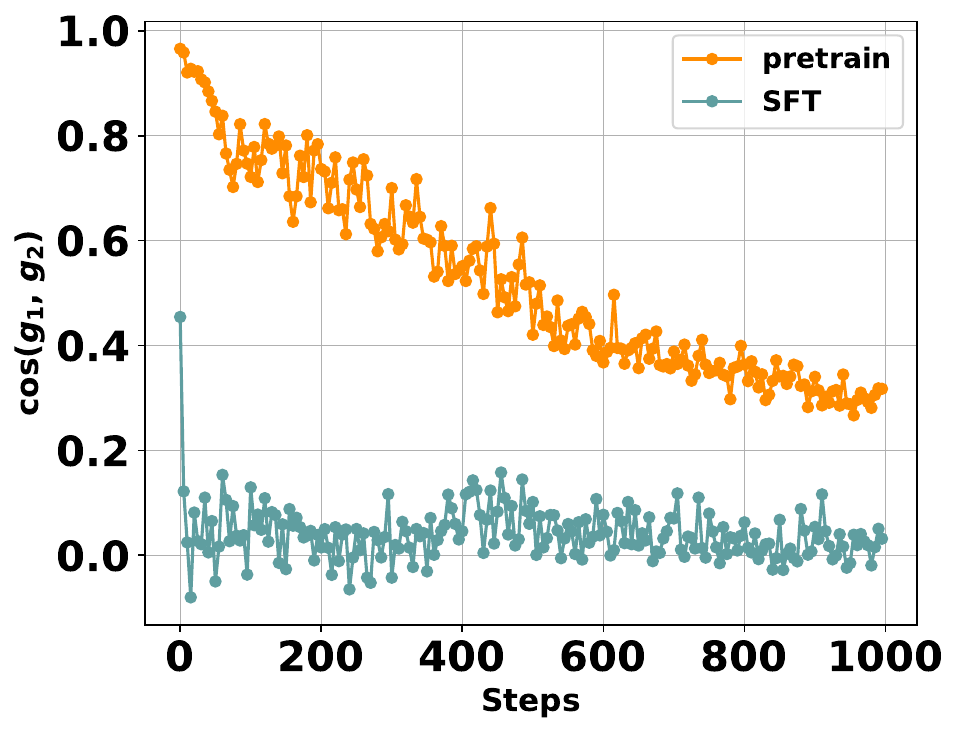}
\end{minipage}
}
\end{figure}



DoReMi updates the mixture ratio according to the excess loss of a proxy model $\boldsymbol{u}$ relative to a reference model $\bar{\boldsymbol{w}}$ trained on uniformly sampled data. 
DoGE~\cite{fan2024doge} pioneers bi-level optimization to settle data mixtures and tracks the influence~\cite{garima2020influence} of each domain on the validation data. Specifically, $\Delta_{\text{DoGE}} = \left\langle\nabla \mathcal{L}_{\text {train}}\left(\boldsymbol{\alpha^{(t)}, \boldsymbol{u^{(t)}}}\right), \nabla \mathcal{L}_{\text {val}}\left(\boldsymbol{\alpha^{(t)}, \boldsymbol{u^{(t)}}}\right)\right\rangle$. The inner product $\langle \cdot \rangle$ is essentially a first-order approximation of the domain-wise loss difference between a proxy model and a reference model, where the reference model is obtained by one-step update on the validation data. 

Outlined in Table~\ref{tab:method_compare}, we see that $\Delta$ of all three methods takes the form of per-domain loss difference between a proxy model and a reference model.  Nevertheless, contrasting DoReMi, which adopts a fixed reference model, the reference model $\boldsymbol{w}$ in TANDEM is dynamic, which better captures the current training status. In its original form, DoGE incurs significant memory overhead as it maintains the per-domain gradients. Relaying explicitly on the gradient estimation also makes it vulnerable to instability issues in high gradient variance scenarios like supervised fine-tuning~(SFT). Our TANDEM better adapts, as increasing the probing step K helps reduce the variance on $\Delta$, as will be elaborated in Section~\ref{subsec:ablation_studies}. In Figure~\ref{fig:gradient_variance}, we show that SFT exhibits higher gradient variance than pretraining. The alignment of $g_1$ and $g_2$ 
$\cos (g_1, g_2)$ is used as a proxy of the gradient variance, where $g_1$ and $g_2$ are gradients evaluated on different batches. Lower alignment $\cos (g_1, g_2)$ indicates larger gradient variance.



\subsection{Domain Reweighting Beyond Traditional Settings}
\label{sec:trivil_solution}
In the above, we discuss our method to determine the data mixture ratio by solving bi-level problem \eqref{eq:bi-level}. The circumstances under which DMO is most effective remain to be explored.
Theoretically, the standard training setting sets $\balpha_{m} = 1 / M$ for every $m$. Let us check the following proposition. 
\begin{proposition}
\label{prop:trivil_solution}
Assume $\mathcal{L}^m_{\text{train}} = \mathcal{L}^m_{\text{val}}$, 
the uniform data mixture ratio $\bar{\alpha}_m = \frac{1}{M}$ for $m = 1, \dots, M$ constitutes a valid solution of the bi-level mixture optimization \eqref{eq:bi-level}.
\end{proposition}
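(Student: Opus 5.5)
Under the assumption $\mathcal{L}^m_{\train}=\mathcal{L}^m_{\val}$ for every $m$, the outer objective is
\[
\cL_{\val}(\bw)=\sum_{m=1}^M\mathcal{L}^m_{\val}(\bw)=\sum_{m=1}^M\mathcal{L}^m_{\train}(\bw)=M\,\cL_{\train}(\bar{\balpha},\bw),
\]
where $\bar{\balpha}$ is the uniform ratio with $\bar\alpha_m = 1/M$. The plan is to use this identity to show that the inner problem at $\bar{\balpha}$ already minimizes the outer objective over all $\bw$. Hence no choice of $\balpha\in\cA$ can do better.

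The argument proceeds in three steps.

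\textbf{Step 1: characterize the inner solution at $\bar{\balpha}$.} By the identity above, $\cL_{\train}(\bar{\balpha},\cdot)$ is a positive multiple of $\cL_{\val}$. Therefore
\[
S^*(\bar{\balpha})=\arg\min_{\bw}\cL_{\train}(\bar{\balpha},\bw)=\arg\min_{\bw}\cL_{\val}(\bw).
\]
So any $\bw(\bar{\balpha})\in S^*(\bar{\balpha})$ is a global minimizer of $\cL_{\val}$.

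\textbf{Step 2: compare against an arbitrary feasible ratio.} Take any $\balpha\in\cA$ and any admissible $\bw(\balpha)\in S^*(\balpha)$. Then
\[
\cL_{\val}(\bw(\balpha))\ \ge\ \min_{\bw}\cL_{\val}(\bw)\ =\ \cL_{\val}(\bw(\bar{\balpha})).
\]
The left-hand side is at least the unconstrained minimum of $\cL_{\val}$, and the right-hand equality is Step 1. Hence $\bar{\balpha}$ attains the minimal value of the outer problem \eqref{eq:bi-level}, so $(\bar{\balpha},\bw(\bar{\balpha}))$ is a valid solution.

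\textbf{Step 3: handle non-unique lower-level solutions.} When the $\arg\min$ is set-valued, the above holds for every selection $\bw(\bar{\balpha})\in S^*(\bar{\balpha})$, because all elements of that set attain the same value $\min_{\bw}\cL_{\val}$. The argument therefore works under both optimistic and pessimistic readings of the bi-level problem.

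The main potential obstacle is existence: the inner $\arg\min$ must be nonempty, which needs a mild standing assumption such as coercivity or attainment of the minimum. I would state that assumption explicitly. I would also remark that the argument shows only that $\bar{\balpha}$ is \emph{a} solution, not the unique one: any $\balpha$ whose inner solutions happen to minimize $\cL_{\val}$ is also optimal.
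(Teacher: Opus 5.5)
Your proof is correct and follows the same route as the paper: both use $\cL_{\val}(\bw)=M\,\cL_{\train}(\bar{\balpha},\bw)$, so any inner minimizer at $\bar{\balpha}$ is a global minimizer of the outer objective and therefore does at least as well as $\bw(\balpha)$ for every $\balpha\in\cA$. Your remarks on set-valued $\arg\min$ (optimistic versus pessimistic readings) and on existence of the inner minimizer are sensible additions that the paper leaves implicit.
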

The proof is left in Appendix~\ref{sec:proof_trivil}. As can be seen, when the generalization gap $| \mathcal{L}^m_{\text{val}}-\mathcal{L}^m_{\text{train}}|$ approaches zero, uniform weighting emerges as a valid solution, thus making the reweighting less significant. This holds for the conventional data-abundant scenario. When $\bar{\alpha}_m = \frac{1}{M}$, the train data and validation data are independently identically distributed, so the loss gap is approximately zero for the first epoch training.~\footnote{In the data-abundant training scenario, samples are consumed only once.}. 
This aligns with the empirical observations that uniform weighting is highly competitive that many DMO methods can not consistently outperform~\citep{chen2025aioli}. 


When there is limited data in specific domains or the trained model overfits to some domain, since both phenomenons result in a large generalization gap, the reweighting technique over data domains becomes significant. In fact, even though LLMs are trained on massive datasets overall, it is common for specific domains to be relatively small, e.g., specialized scientific literature, low-resource languages, or domain-specific user interactions. In practice, the data-restricted scenario is ubiquitous. Furthermore, in SFT, repeated passes over the same domain data exacerbate overfitting and widen the generalization gap~\cite{goodfellow2016deeplearning}, which presents more interesting opportunities for domain reweighting. Please refer to Appendix~\ref{sec:generalization_gap_analyses} for empirical evidence.

%% file: sections/related.tex
\section{Related Work}
\label{sec:related}
Data mixture optimization is drawing increasing attention for designing LLMs with comprehensive and balanced capabilities. Our work follows the recent trend of formulating the DMO as a bi-level optimization. We summarize the related literature from these two strands of research in the following.

\paragraph{Data Mixture Optimization} Conventional industry practices determine the optimal cross-domain composition with human expertise
~\cite{du2022galm,touvron2023llama,grattafiori2024llama3}. To circumvent the exhaustive trial-and-error, various heuristics has been explored. DoReMi~\cite{xie2023doremi} settles the mixture ratio by minimizing the worst-domain excess loss relative to a well-trained reference model, pursuing good performance in all domains. 
~\cite{liu2025regmix,ye2025dml,kang2025autoscale} fit global data mixing laws, predict the performance on other mixtures, and search for those with good performances. Other works capture the inter-domain relations through domain embeddings~\cite{xie2025chameleon} or training several models~\cite{chen2023skillit}. 
\begin{wraptable}{r}{6.5cm}
\caption{Computational complexity of different DMO methods.}
\label{tab:computational_complexity}
\begin{tabular}{@{}lc@{}}
\toprule
Method        & Computational Complexity                                                                                                                                               \\ \midrule
Vanilla Train & $C T E$      \\
DoReMi   &  $\frac{7}{3}C T E$     \\
DoGE     &  $2CTE$  \\
Skill-it   & $(\frac{M}{4} + 1)CTE + \frac{M}{3}CT $  \\
Aioli     & $CTE + MCTK + \frac{1}{3}M^2C T $ \\
TANDEM        & $C T E  +  2 C  T  K + \frac{2}{3}C T$ \\ \bottomrule
\end{tabular}
\end{wraptable}Skill-It~\cite{chen2023skillit} then utilize the inter-domain relations to establish a local data mixing law that predicts the per-domain loss. Aioli~\cite{chen2025aioli} improves Skill-It by dynamically updating the inter-domain relation matrix according to current model states. Nevertheless, these approaches remain theoretically ungrounded due to their inductive nature, and incorporating the fitting of the mixing law incurs additional approximation error. DOGE~\cite{fan2024doge} pioneers bi-level optimization to settle data mixtures and tracks the influence~\cite{garima2020influence} of each domain on the validation set. However, assessing the influence relies directly on the per-domain gradient estimation, which undermines its efficacy in high gradient variance scenarios while incurring large memory overhead. Comparison of the computational complexity of these strategies is given in Table~\ref{tab:computational_complexity}, where $C$ is the complexity of one step training of a model. $T$ is the update number of $\boldsymbol{\alpha}$. Typically, the overall computational cost of these methods is of the order: Aioli > Skill-it > DoReMi $\approx$ DoGE $\approx$ TANDEM~\footnote{We assume backward takes 2$\times$ computation as the forward process.} ~\footnote{Skill-it trains $M$ models with data from each domain for $H$ steps, we use $H = \frac{1}{4}T E$ in our analysis. }. Note that, although the complexity of DoGE is not particularly large, it requires per-domain gradient computation, which is unfriendly to the computing kernel and takes significantly more time in practice.




\paragraph{Bi-level Optimization}
Bi-level optimization has been an important research topic in many scientific disciplines, however, solving the bi-level optimization problem is challenging due to the complicated dependency of the upper-level and lower-level problems. Typical bi-level optimization algorithms~\cite{dagreou2022bilevel,choe2024metalearning,shaban2019tuncated,grazzi2023bilevel,chen2021tight,hong2023twotimescale} requires estimation of the implicit gradient, which requires second-order derivatives on the lower-level variables. Incorporating the  Hessian makes it computationally prohibitive for large-scale problems. Recently~\cite{shen23onpenalty,kwon2023f2sa,kwon2024onpenalty} pioneer the penalized methods, where the inner-level problem is reformulated into an penalty. As first-order gradient-based approaches, the penalized methods avoid the estimation of the Hessian or the Jacobian. Though effective in theory, their practical applications in large-scale LLM settings remain rarely explored. The most similar to ours is the recent ScaleBio~\cite{pan2025scalebio}, which belongs to this category, and is specially tailored for large-scale data mixture optimization problems. Nevertheless, the solving procedure is different, TANDEM enjoys more stable training by synchronizing $\boldsymbol{u}$ and $\boldsymbol{w}$ periodically. 

%% file: sections/experiment.tex
\section{Experiments}
\label{sec:experiments}
In this section, we compare TANDEM to state-of-the-art algorithms in Section~\ref{subsec:compare_sota}. Then we analyse the effectiveness of each design ingredient in Section~\ref{subsec:ablation_studies}.  

\begin{figure}[t]
\centering
\begin{minipage}{0.325\linewidth}
    \centering
    \subfigure{
    \label{fig:data_aboundant_pretrain}
    \includegraphics[width=1.0\textwidth]{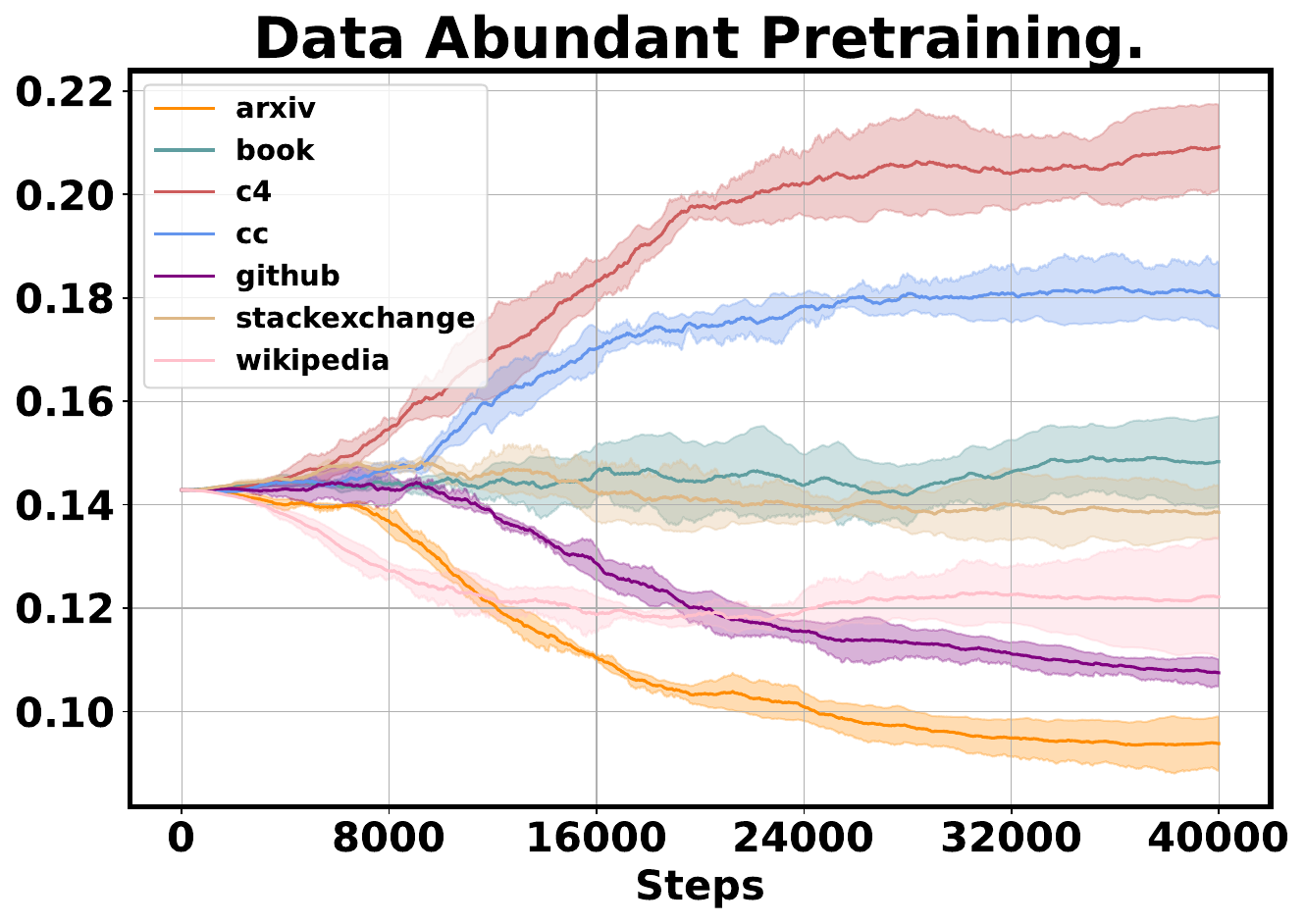}}
\end{minipage}
\begin{minipage}{0.325\linewidth}
    \centering
    \subfigure{	
    \label{fig:data_scarce_pretrain}
    \includegraphics[width=0.98\linewidth]{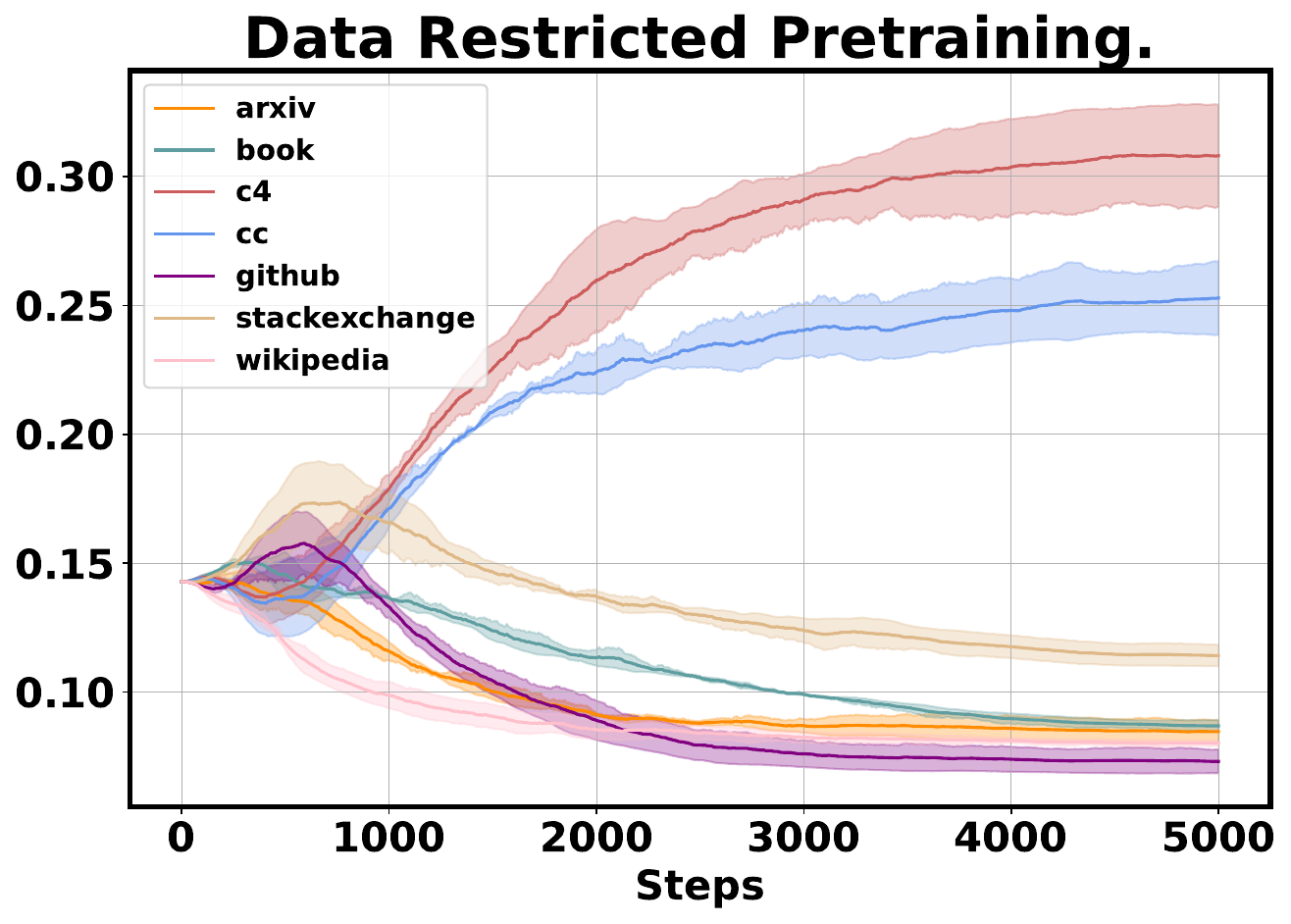}}
\end{minipage}
\begin{minipage}{0.325\linewidth}
    \centering
    \subfigure{	
    \label{fig:instruction_tuning}
    \includegraphics[width=1.0\linewidth]{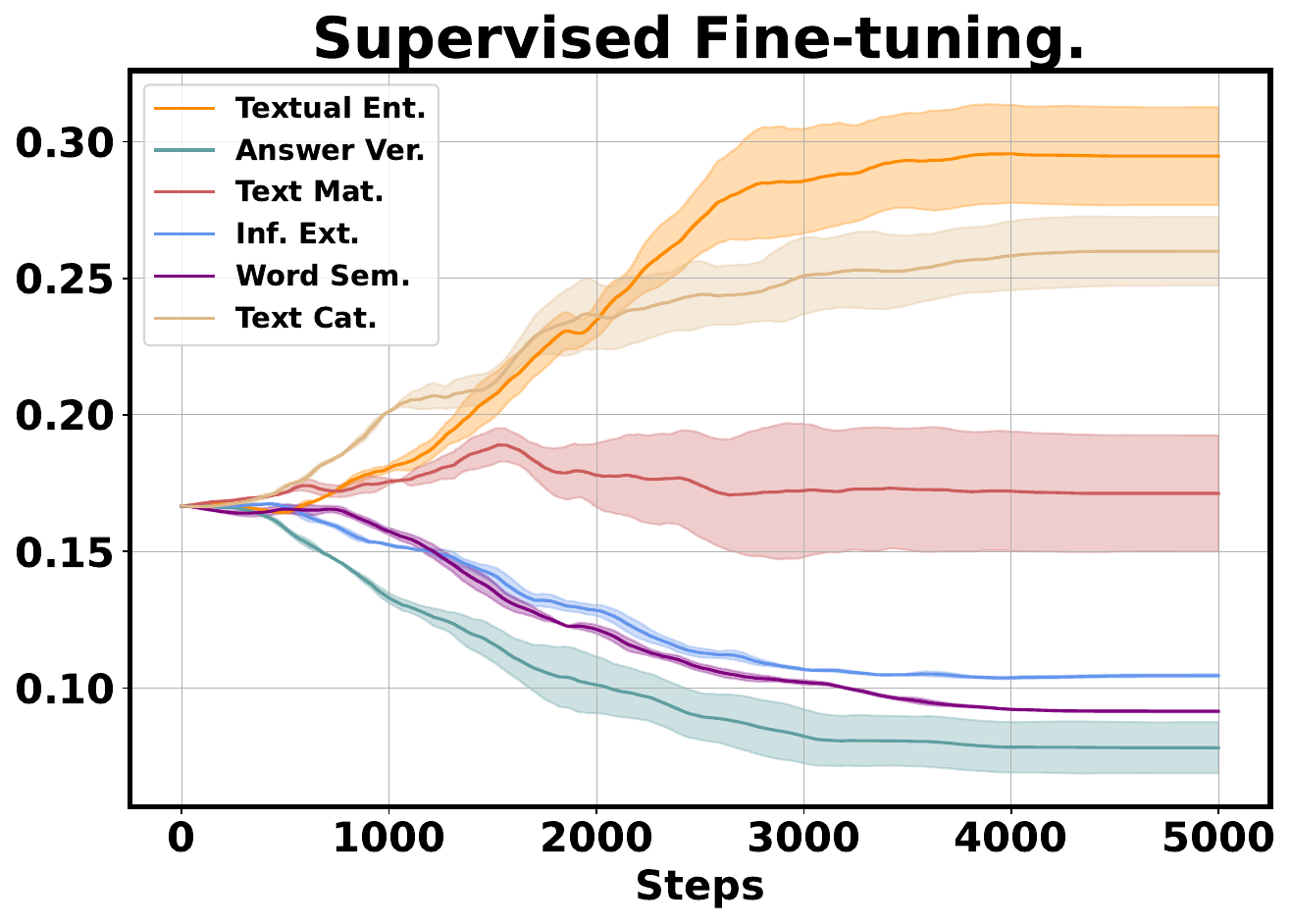}}
\end{minipage}
\caption{Step-wise data mixture ratio evolution under three scenarios. 
We repeat the DMO 3 times and the 95$\%$ confidence intervals are given.
(a) data-abundant pretraining  (b) data-restricted pretraining and (c) supervised fine-tuning.}
\label{fig:mixture_ratio_evolve}
\end{figure}

\subsection{Experimental Setup}
\label{subsec:exp_setup}
We consider three application scenarios: conventional data-abundant pretraining, data-restricted training, and supervised fine-tuning. A brief summarization of the experimental setup is introduced below, while complete hyper-parameter settings and implementation details are in Appendix~\ref{sec:hyper_params}.

\paragraph{Data-Abundant Scenario:} For the data-aboundant scenario, we train 160M GPT-style LMs~\cite{biderman2023pythia}  on a 6B sampled version of SlimPajama~\cite{soboleva2023slimpajama} as in~\cite{chen2025aioli}. SlimPajama consists of 7 domains: ArXiv, Books, CommonCrawl, C4, Github, StackExchange, and Wikipedia. The statistics of this sampled corpus are given in Figure~\ref{fig:slim_pajama_statistics}.
We set $E=20$, $K=5$, train with batch size 8 and context length 2048 for 40000 steps~(with respect to updates of proxy model $\boldsymbol{u}$, so the mixture ratio $\boldsymbol{\alpha}$ is updated for $\sim$2000 steps.) as~\cite{chen2025aioli}.
Though the SlimPajama-6B corpus exhibits significant domain imbalance, 40K steps of training doesn't deplete even the smallest domain, so this setting constitutes a data-abundant one-epoch scenario. The penalty constant $\gamma$ is set to 1 across all the experiments. Experiments are conducted on eight NVIDIA Hopper H-100s.

\paragraph{Data-Restricted Scenario:} For the data-restricted scenario, we construct a 300M sampled version of SlimPajama and keep the domain distribution unchanged. GPT-style LMs of size 160M, 410M and 1B are trained to examine the scalability of TANDEM. In this scenario, we train with $K=E=5$, batch size 128, and context length 512 for 5000 steps, which ensures that samples in small domains like Arxiv, Books, StackExchange, and Wikipedia are exposed more than once. After DMO, the learned mixture ratio is utilized to resample the 300M corpus for the final model training.

\paragraph{Supervised Fine-tuning:} For supervised fine-tuning, we use 6 major categories (containing 99 tasks) from Natural Instructions~\cite{mishra2022crosstask,wang2022supernatural}, Textual Entailment, Answer Verification, Text Matching, Information Extraction, Word Extraction and Text Categorization. Prioritizing diversity of capabilities and formats, this corpus is comprised of open-ended text generation, multiple choice, and True/False tasks. Due to the limited space, the statistics are given in Appendix~\ref{sec:sft_statistics}. In this scenario, we train a pretrained Qwen2-500M model~\cite{yang2024qwen2} with $K=10$, $E=10$, batch size 64, and context length 512 for 5000 steps. 
Different from pretraining where the majority of samples are only trained once, in instruction tuning, each sample is on average exposed 1.15 times.


\subsection{Comparisons with State of the Arts}
\label{subsec:compare_sota}
We compare TANDEM against various state-of-the-art methods. \textbf{Uniform} A simple baseline that uniformly mixes groups and requires zero extra training runs. \textbf{DoReMi}~\cite{xie2023doremi} adopts the idea of distributionally robust optimization and searches for the data mixture ratio by minimizing the worst-domain excess loss over a reference model trained with the uniform strategy. \textbf{DOGE}~\cite{fan2024doge} solves the DMO problem by tracking the data influence of each domain on the validation set and up-weights the most influential domains. \textbf{Skill-It}~\cite{chen2023skillit} trains several models to fit an inter-domain relation matrix, which is later used to establish an incremental data mixing law that induces a mixture ratio $\boldsymbol{\alpha}$ update rule. \textbf{Aioli}~\cite{chen2025aioli} improves Skill-It by dynamically updating the inter-domain relation matrix according to current model states. For the baselines, We use the published implementations.~\footnote{\textbf{DoReMi}: https://github.com/sangmichaelxie/doremi,  \textbf{DOGE}: https://github.com/Olivia-fsm/DoGE, \textbf{Skill-It} and \textbf{Aioli}: https://github.com/HazyResearch/aioli} 
Averaged results from 3 runs are reported. Due to limited space, the standard deviation is given in Appendix~\ref{sec:appendix_std}


\begin{table}[]
\caption{Comparison for the 160M GPT-style model on SlimPajama in the data-abundant scenario~(Upper) and data-restricted scenario~(Lower). Per-domain perplexity is reported and “Average” represents the exponential of the average loss across all domains as in~\cite{fan2024doge,chen2025aioli}. $\dagger$ denotes the results reported use the mixture ratio given in Aioli~\cite{chen2025aioli}. }
\label{tab:sota_slimpajama_160M}
\setlength\tabcolsep{2.5pt}
\small
\centering
\begin{tabular}{@{}llcccccccc@{}}
\toprule
                                                                                                       & Methods  & Arxiv & Book & C4 & CommonCrawl & Github & Stackexchange & Wikipedia & Average  \\ \midrule
\multicolumn{1}{l|}{\multirow{6}{*}{\begin{tabular}[c]{@{}l@{}}Data\\ Abundent\\ Regime\end{tabular}}} & Uniform  &   \textbf{11.46}   &  62.53   &  66.43  &  59.29   &   6.71    &   13.98    &   28.31    &   25.74         \\
\multicolumn{1}{l|}{}                                                                                  & DoReMi$\dagger$   &   12.71    &   80.09   &  82.60  &  71.76   &   5.75    &   14.26      &  29.54    &    28.32      \\
\multicolumn{1}{l|}{}                                                                                  & DoGE$\dagger$     &   12.89    &  \textbf{51.50}   &  \textbf{54.32}  &   \textbf{49.34}    &   8.48   &    16.77     &   37.21    &    26.60      \\
\multicolumn{1}{l|}{}                                                                                  & Skill-It$\dagger$ &   11.76    &   62.24   &  64.58  &    59.84     &   \textbf{6.36}   &  \textbf{12.36}            &    34.87       &   25.87         \\
\multicolumn{1}{l|}{}                                                                                  & Aioli$\dagger$    &   11.47    &   61.89   &  65.52  &    58.24     &   6.74    &    14.08      &    28.48     &    25.66        \\
\multicolumn{1}{l|}{}                                                                                  & TANDEM     &   11.53    &   61.82   &  65.92  &   58.86        &   6.63     &   13.76     &     \textbf{27.26}      &    \textbf{25.43}           \\ \midrule \midrule
\multicolumn{1}{l|}{\multirow{6}{*}{\begin{tabular}[c]{@{}l@{}}Data\\ Restricted\\ Regime\end{tabular}}}   & Uniform  &   18.05    &   65.86   &  71.05  &    63.76    &  9.37      &    17.94    &   34.27   &      31.53          \\
\multicolumn{1}{l|}{}                                                                                  & DoReMi   &   18.90    &   80.29   &   89.05 &    79.02     & 10.24   &  19.74     &    43.20    &  36.91          \\
\multicolumn{1}{l|}{}                                                                                  & DoGE     &   17.76    &   60.88   &  65.08  &    58.81     &  9.00    &   17.71   &  33.94          &   30.10        \\
\multicolumn{1}{l|}{}                                                                                  & Skill-It &   20.93    &   \textbf{52.00}   &  57.11  &    \textbf{49.50}      &   \textbf{8.77}   &  \textbf{16.74}       &   40.49   &   29.24        \\
\multicolumn{1}{l|}{}                                                                                  & Aioli    &   17.68    &   62.48   &   69.02  &     61.44     &   9.26    &  17.79      &   33.06   &   30.67         \\
\multicolumn{1}{l|}{}                                                                                  & TANDEM     &   \textbf{16.85}    &   52.75   &   \textbf{56.82}  &     51.11     &   8.99   &    18.21     &    \textbf{32.52}  &   \textbf{28.07}             \\ \bottomrule
\end{tabular}
\end{table}




\paragraph{Data-Abundant Pretraining} 
In Table~\ref{tab:sota_slimpajama_160M} (Upper), we see that TANDEM discovers mixture ratios with comparative performance. As discussed in Section~\ref{sec:trivil_solution}, in this scenario, the uniform strategy is highly competitive. The mixture ratio for the baselines is obtained from Aioli~\cite{chen2025aioli}. We show the step-wise data mixture ratio evolution of TANDEM during DMO in Figure~\ref{fig:mixture_ratio_evolve}~(Left). Note that the uniform strategy is not the only valid solution to the DMO problem, and TANDEM finds another solution that performs equally well. The detailed learned mixture ratio of each method is given in Appendix~\ref{sec:visualization}).


\paragraph{Data-Restricted Pretraining}
For the data-restricted training scenario, TANDEM significantly outperforms baselines as shown in Table~\ref{tab:sota_slimpajama_160M}~(Lower). For instance, it achieves 28.07 averaged perplexity, surpassing the most competitive Skill-It by 1.17 and the uniform baseline by 3.46. In this scenario, the uniform strategy is no longer competitive. Equally assigning $\frac{1}{M}$ weights potentially leads to overfitting in small domains~(repeated multiple times), while leaving the large domains underfitting. The mixture ratios learned with TANDEM and other baselines are shown in Figure~\ref{fig:final_alpha_scarce_pt}, and the step-wise data mixture ratio evolution during the data mixture optimization is given in Figure~\ref{fig:mixture_ratio_evolve}~(Middle).
We see that after DMO, CommonCrawl and C4 take the majority, driven by their extensive lexical diversity and complex semantics. Nevertheless, compared to the original data distribution, 
TANDEM up-weights the small domains Arxiv, Books, Github, StackExchange and  Wikipedia from 3.4$\%$, 3.7$\%$, 4.2$\%$, 2.8$\%$, 3.1$\%$ to 8.9$\%$, 8.5$\%$, 7.6$\%$, 11.5$\%$, 7.9$\%$ respectively, preventing these small domains from being overwhelmed while avoiding potential overfitting. Besides, we inspect the scalability of TANDEM with three different scales (160M, 410M, 1B) in Table~\ref{tab:sota_slimpajama_modelsize}. TANDEM consistently outperforms the baselines with a large margin while not incurs much computational overhead. We omit DoGE for the 1B model experiment due to its large memory consumption. 

\begin{figure}
\makeatletter\def\@captype{figure}\makeatother
\setlength{\tabcolsep}{3 pt}{
\begin{minipage}{0.4\textwidth}
\centering
\captionsetup{margin=0.00cm}
\caption{SlimPajama-6B Statistics.} 
\vspace{-1.0ex}
\label{fig:slim_pajama_statistics}
\includegraphics[height=3.8cm]{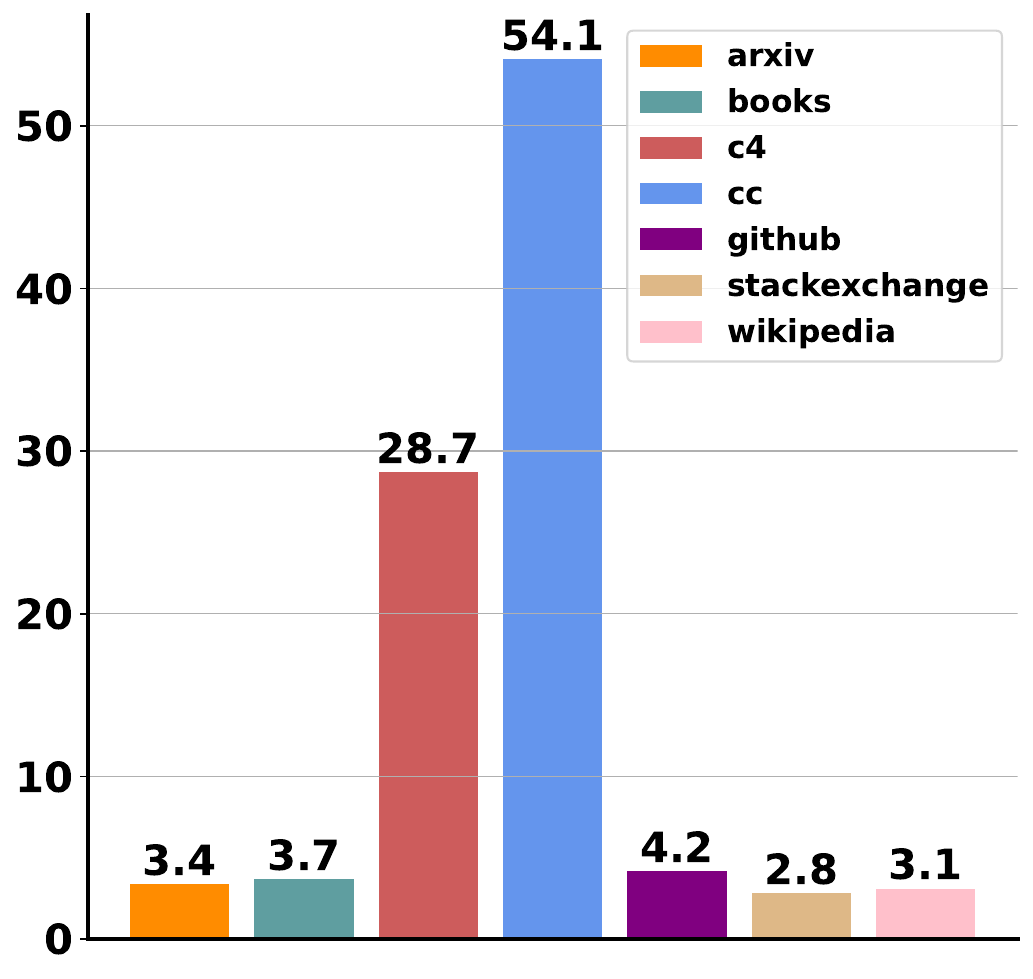}
\end{minipage}
\hspace{1.2ex}
\makeatletter\def\@captype{table}\makeatother
\begin{minipage}{0.56\textwidth}
\captionsetup{margin=0.00cm}
\caption{Comparison for models of different sizes.}
\label{tab:sota_slimpajama_modelsize}
\begin{tabular}{@{}lcccccc@{}}
\toprule
\multirow{2}{*}{} & \multicolumn{2}{c}{160M}                   & \multicolumn{2}{c}{410M}                   & \multicolumn{2}{c}{1B} \\ \cmidrule(l){2-7} 
                  & Avg. & \multicolumn{1}{l|}{Time} & Avg. & \multicolumn{1}{l|}{Time} & Avg.  & Time\\ \midrule
Uniform           &  31.53    &     -                                &  29.59    &     -                                &  29.91     &      -          \\
DoReMi            &  36.91    &     16$_\text{min}$                                &  54.61    &    31$_\text{min}$                                 &    56.53   &      41$_\text{min}$          \\
DoGE              &  30.10     &    65$_\text{min}$                               &  27.45    &   172$_\text{min}$                                 &   -    &    -            \\
Skill-It          &  29.24    &        28$_\text{min}$          &   27.70   &    59$_\text{min}$                                 &    27.15   &       80$_\text{min}$         \\
Aioli             &  31.19    &     36$_\text{min}$                             &   28.79   &         64$_\text{min}$                            &   28.07    &      93$_\text{min}$         \\
TANDEM              &  \textbf{28.07}   &     26$_\text{min}$                   &  \textbf{25.00}    &   57$_\text{min}$                                 &    \textbf{24.35}   &    77$_\text{min}$            \\ \bottomrule
\end{tabular}
\end{minipage}
}
\end{figure}

\begin{figure}[t]
\centering
\begin{minipage}{0.55\linewidth}
    \centering
    \includegraphics[width=1.0\textwidth]{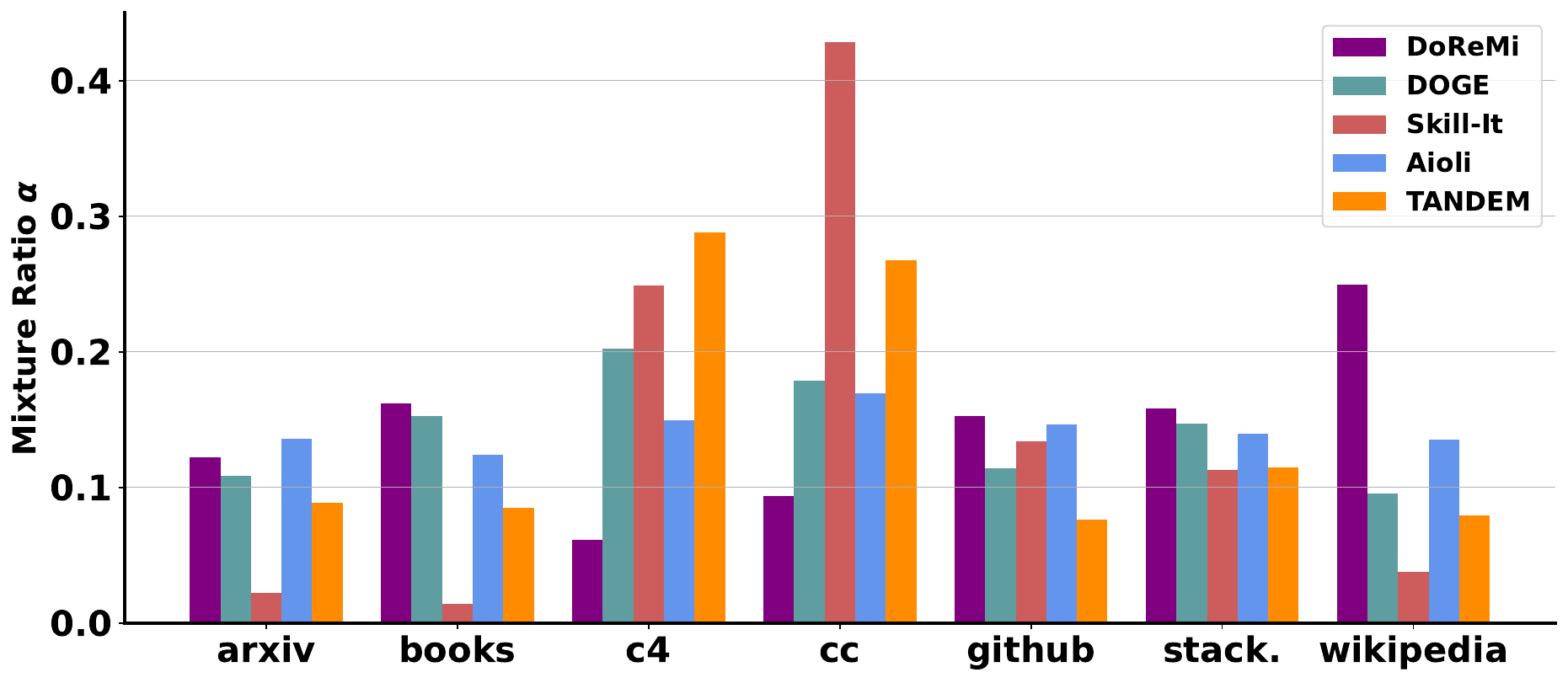}
    \caption{Mixture ratio learned by different methods.}
    \label{fig:final_alpha_scarce_pt}
\end{minipage}
\begin{minipage}{0.43\linewidth}
    \centering
    \includegraphics[width=0.85\linewidth]{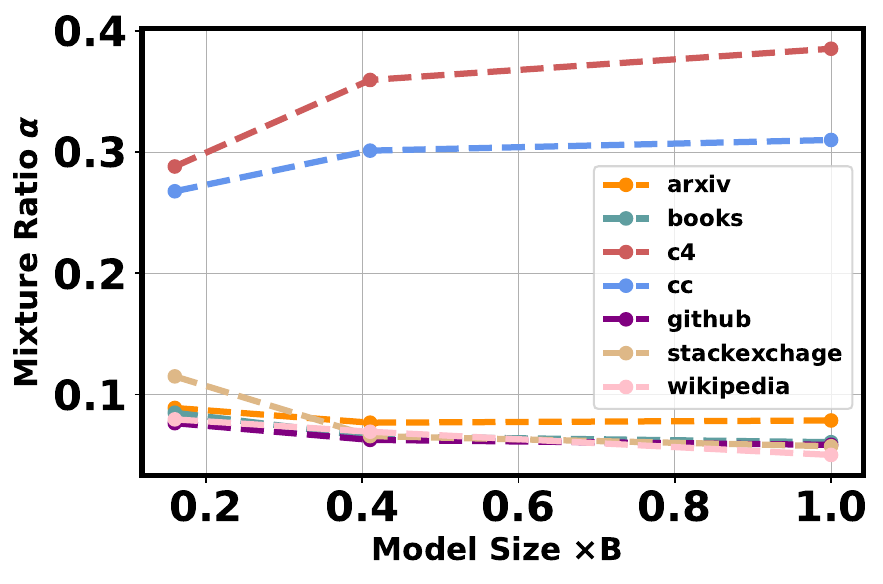}
    \caption{Impact of different model sizes.}
    \label{fig:wrt_model_sizes}
\end{minipage}
\end{figure}

\paragraph{Supervised Fine-tuning}
The results for supervised fine-tuning are given in Table~\ref{tab:sota_sft_500M}. The overall averaged accuracy as well as the test loss are reported. From Table~\ref{tab:sota_sft_500M}, TANDEM outperforms or is on par with the other data mixture optimization methods in 5 out of 6 task clusters. Showing its effectiveness in the supervised fine-tuning scenario. Besides, we consider a more fine-grained task-level SFT~(instead of the task cluster) case, please refer to Appendix~\ref{sec:sft_statistics}.

\begin{table}[]
\caption{Comparison for the 500M Qwen-2 model on a mixture of 6 major categories (99 tasks in total) of supervised fine-tuning tasks. 
}
\label{tab:sota_sft_500M}
\centering
\setlength\tabcolsep{2.5 pt}
\small
\begin{tabular}{@{}lcccccccc@{}}
\toprule
Method   & Textual Ent. & Answer Ver. & Text Mat. & Inf. Ext. & Word Sem.  & Text Cat. & Test Loss $\downarrow$ & Avg. Metric $\uparrow$ \\ \midrule
Uniform  & 84.4    & 75.1 & 86.2  & 77.8   & 88.3 & 83.2 & 0.231 & 82.5  \\
DoReMi   & 85.3    & 74.9 & 83.6  & 78.1   & 87.0 & 79.0 & 0.249 & 81.6  \\
DoGE     & 81.4    & 76.7 & \textbf{86.5}  & \textbf{78.6}   & 88.3 & 82.4 & 0.297 &  82.4   \\
Skill-It & 84.1    & 75.0 & 86.4  & 78.3   & 87.9 & 83.7 & 0.232 & 82.6  \\
Aioli    & 83.9    & 75.8 & 86.2  & 78.0   & 88.3 & 84.0 & 0.229 & 82.7   \\
TANDEM   & \textbf{85.3}    & \textbf{76.3} & 86.2  & \textbf{78.6}   & \textbf{88.5} & \textbf{84.9} & \textbf{0.208} & \textbf{83.3}  \\ \bottomrule
\end{tabular}
\end{table}

\subsection{Analyses}
\label{subsec:ablation_studies}
To evaluate the effectiveness of each design component, we conduct ablation experiments. The model used is the 160M GPT-style model unless specified. Due to limited space, we focus on the effect of synchronizing $\boldsymbol{u}$ and $\boldsymbol{w}$, the effect of the probing steps $K$, and the robustness of TANDEM to model scales. For a more comprehensive analysis, please refer to Appendix~\ref{sec:additional_exps}, including the effectiveness of DMO in the real-world large-scale data~(past chinchilla~\cite{hoffmann2022chhinchilla}) scenario, sensitivity of the final result on $K$ and $E$, the performance of the DMO pretrained model on downstream tasks, as well as the performance of the proxy model $\boldsymbol{u}$.

\paragraph{The Effect of Synchronizing $\boldsymbol{u}$ and $\boldsymbol{w}$}

\begin{figure}[t]
\centering
\begin{minipage}{0.6\linewidth}
    \centering
    \small
    \subfigure[$ \operatorname{Dist}(\boldsymbol{u}, \boldsymbol{w})$ with synchronization.]{
    \label{fig:dist_uw}
    \includegraphics[width=0.98\textwidth]{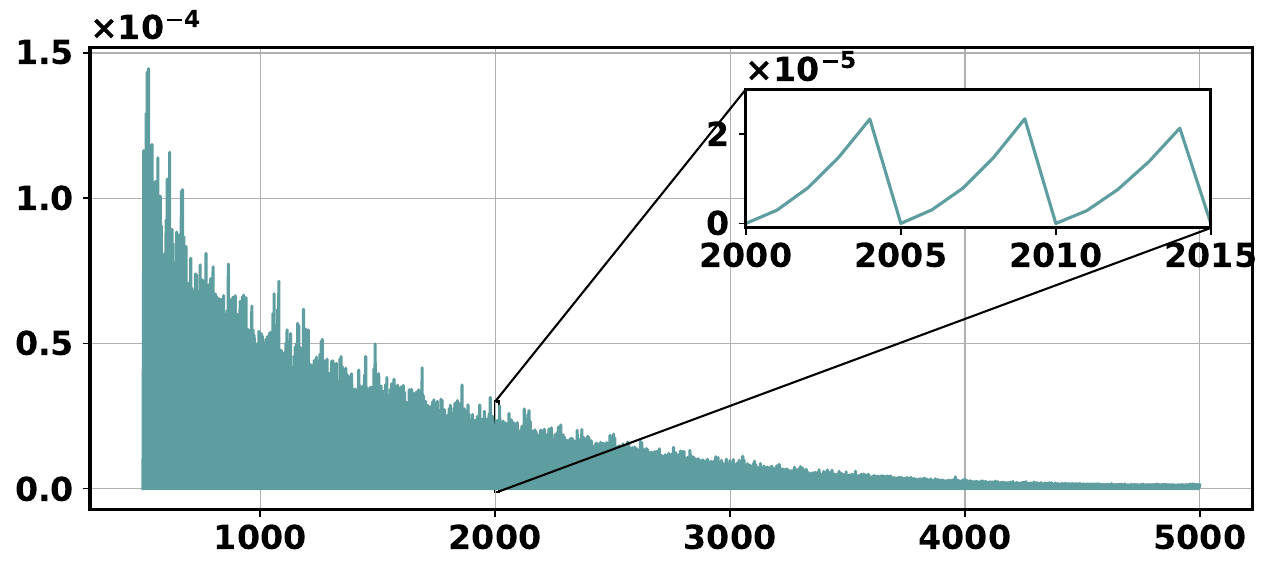}}
\end{minipage}
\begin{minipage}{0.36\linewidth}
	\centering
        \small
 	\subfigure[$\operatorname{Dist}(\boldsymbol{u}, \boldsymbol{w})$  w/o synchronization.]{	
     \label{fig:dist_uw_scalebio}
    \includegraphics[width=0.9\linewidth]{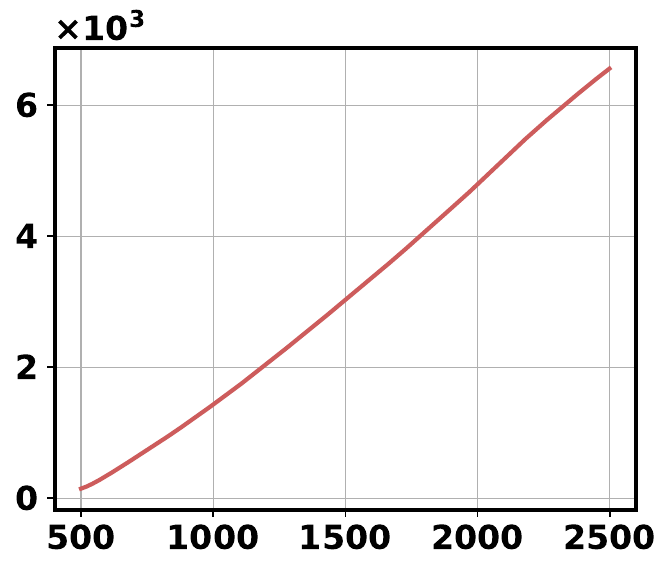}}
\label{fig:dist}
\end{minipage}
\caption{The $\operatorname{Dist}(\boldsymbol{u}, \boldsymbol{w})$ evolution comparison during DMO with and without $\boldsymbol{u}$, $\boldsymbol{w}$ synchronization.}
\label{fig:synchronizing_uw}
\end{figure}
One obvious characteristic of our method is that the proxy model $\boldsymbol{u}$ and the reference model $\boldsymbol{w}$ are synchronized by setting $\boldsymbol{w}_0^{(t)} = \boldsymbol{u}_0^{(t)}$. We show the effect of the synchronization by inspecting $\operatorname{Dist}(\boldsymbol{u}, \boldsymbol{w}) = \Vert \boldsymbol{u} - \boldsymbol{w} \Vert_2$ along the DMO training process. From Figure~\ref{fig:dist_uw}, we see that with synchronization, the distance between $\boldsymbol{u}$ and $\boldsymbol{w}$ is well controlled under $1.5\text{e}^{-4}$ and gradually contracts during the whole DMO process. This contraction is critical for $\bw$ and $\balpha$ in the penalized problem~\eqref{eq:bi-level-penalty} to converge to the original bi-level optimization problem~\eqref {eq:bi-level} as discussed in Section~\ref{sec:methodology}. On the contrary, independently maintaining the proxy model $\boldsymbol{u}$ and the reference model $\boldsymbol{w}$ incurs blown-up distance $\operatorname{Dist}(\boldsymbol{u}, \boldsymbol{w})$ as shown in Figure~\ref{fig:dist_uw_scalebio}.


\paragraph{The Effect of the Probing Steps K}
\begin{wrapfigure}{r}{6cm}
    \centering
    \vspace{-3ex}
	\includegraphics[width=0.4\textwidth]{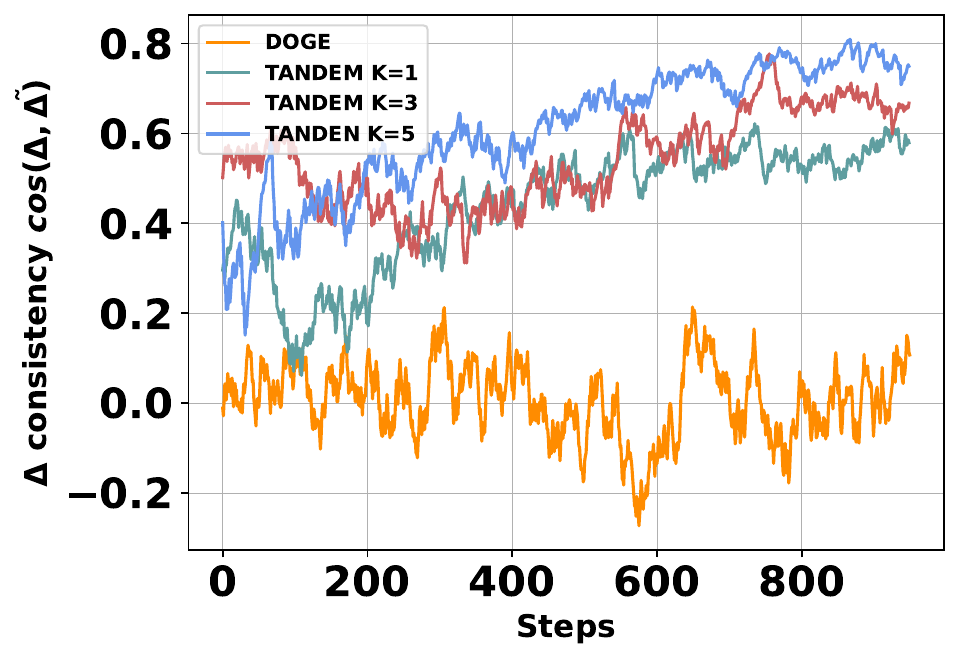}
	\caption{The variance of $\Delta$ decreases with more probing steps.}
    \vspace{-4ex}
	\label{fig:method_variance}
\end{wrapfigure}
During DMO, the hyper-gradient $\Delta$ determines the update of $\balpha$. To validate the effectiveness of K in reducing the variance of $\Delta$,  we trace $\cos\left(\Delta, \tilde{\Delta}\right)$ through the training. $\tilde{\Delta}$ is the hyper-gradient evaluated using another batch of data other than that of $\Delta$, so $\cos\left(\Delta, \tilde{\Delta}\right)$ serves as a proxy of the variance, the better $\Delta$ and $\tilde{\Delta}$ aligns, the smaller the variance of hyper-gradient is. We inspect under the SFT setting (with the 500M Qwen model) where the gradients exhibited large variance. During the training, $\boldsymbol{\alpha}$ is fixed to prevent the interference of inaccurate mixture ratio. 

From Figure~\ref{fig:method_variance}, we see that DoGE exhibits the largest $\Delta$ variance as it explicitly depends on the noisy parameter gradient estimation. As K increases, the variance of $\Delta$ decrease, leading to more reliable updates of the mixture ratio. Nevertheless, large K increases the computational cost. So in practice and we need to deliberately choose the proper K. 


\paragraph{The Effect of the Model Scale} 
To investigate how the model size will impact the final mixture ratio. In Figure~\ref{fig:wrt_model_sizes}, we compare $\boldsymbol{\alpha}$ learned with models of size 160M, 410M, and 1B. We see that learned $\boldsymbol{\alpha}$ with larger models are slightly "sharper" than the smaller ones. More specifically, the 1B model further increases the weights of the already large CommonCrawl and C4 while down-weights the others. For large models, due to the increasing capability of memorizing samples, smaller domains are less likely to be overwhelmed, while the risk of potential overfitting increases. The capability of capturing this subtle difference further demonstrates the effectiveness of TANDEM. Nevertheless, the optimal mixture ratios under different model scales share the same trend, so a smaller model can work as a valid proxy for efficient searching. 

%% file: sections/conclusion.tex
\section{Conclusion}
\label{sec:conclusion}
In this paper, we propose TANDEM, a principled, efficient and versatile data mixture optimization framework. By solving the DMO bi-level optimization problem, TANDEM ensures the optimality of the learned mixture ratios, along with a $\mathcal{O}(T^{-\frac{1}{4}})$ convergence rate. Besides the algorithmic contribution, from the bi-level optimization perspective, we further demonstrate the limitation of the conventional data-abundant setting in DMO and advocate new settings like data-restricted scenario as well as supervised fine-tuning. 
Extensive experiments and analysis are conducted to demonstrate the effectiveness of our approach.  Our work deepens the understanding of data mixture optimization and expands its application scenarios.

%% file: sections/appendix.tex
\newpage
\onecolumn

\section{Convergence of Bi-level Data Mixture Optimization}
\label{sec:convergence}
\subsection{The Proposed Algorithm}
As mentioned in the main part of this paper, the data mixture optimization problem is bi-level. Here we present a theoretical analysis of the proposed TANDEM. The original bi-level optimization problem 
\begin{equation}\label{eq:original}
    \cP: \quad \min_{\boldsymbol{\alpha} \in \mathcal{A}} \cL_{\val}\left(\bw^{*}(\balpha)\right) \ \  \text { s.t. } \bw^{*}(\balpha)\in S^{*}(\balpha) := \arg\min_{\boldsymbol{w}} \mathcal{L}_{\train}(\balpha, \bw)
\end{equation}
is difficult to solve owing to the imposed hard constraint. 
We turn to the Lagrangian problem of $\cP$ such that 
\begin{equation}\label{eq:lagrange}
    \cP_{\gamma}: \quad \min_{\balpha\in\cA, \bw}\mathcal{L}_{\val}\left(\bw\right) + \gamma\left(\cL_{\train}(\balpha, \bw) - \min_{\bw}\mathcal{L}_{\train}(\balpha, \bw)\right)
\end{equation}
There is a vast variety of existing literature, e.g. \citep{shen23onpenalty,kwon2023f2sa,xiao2023generalized} discuss the relationship between the Lagrangian problem $\cP_{\gamma}$ and the original problem $\cP$. In a word, when taking $\gamma$ sufficient large, the solution of $\cP_{\gamma}$ will approximate the solution of $\cP$. Thereafter, we can develop the algorithm to $\cP_{\gamma}$ to solve $\cP$, as we did in this paper.  
\begin{algorithm}[ht]
	\caption{Twin Networks for bi-level DatA mixturE optiMization (TANDEM)}
	\label{alg:workflow}
	\begin{algorithmic}[1]
	\footnotesize
	\REQUIRE ~~\\
	Train set $\mathcal{D}_{\train}$, validation set $\mathcal{D}_{\val}$ comprised of M domains. \\
    Episode number $T$, Episode length $E$, Probing length for each episode $K$,\\
    Learning rate $\eta_{\boldsymbol{w}}$, $\eta_{\boldsymbol{u}}$, $\eta_{\boldsymbol{\alpha}}$ for $\boldsymbol{w}$~(reference), $\boldsymbol{u}$~(proxy) and $\boldsymbol{\alpha}$~(mixture) respectively. 
    \STATE Initialize proxy model parameters $\boldsymbol{u}^0$ and domain weights $\boldsymbol{\alpha}^0$.
    \FOR{$t = 0 $ \ \textbf{to} \  $T - 1$ } 
        \vspace{1.5ex}
        \STATEx // \textit{Mixture ratio $\boldsymbol{\alpha}$ update.} \\
        \STATE Set $\boldsymbol{w}^{(t)}_0, \boldsymbol{u}^{(t)}_0 \leftarrow  \boldsymbol{u}^{(t)}$. \\
        \FOR{$k=0$ \textbf{to} $K - 1$}
        \STATE $\boldsymbol{u}^{(t)}_{k+1}=\boldsymbol{u}^{(t)}_{k}-\eta_{\bu} \nabla_{{\boldsymbol{u}}}\cL_{\train}(\boldsymbol{\alpha}^{(t)}, \boldsymbol{u}^{(t)}_{k})$. \\
        \STATE $\boldsymbol{w}^{(t)}_{k+1} = \boldsymbol{w}^{(t)}_{k} - \eta_{\boldsymbol{w}} ( \nabla_{\boldsymbol{w}}\cL_{\val}(\boldsymbol{w}^{(t)}_k) + \gamma\nabla_{\boldsymbol{w}}\cL_{\train}(\boldsymbol{\alpha}^{(t)}, \boldsymbol{w}^{(t)}_k ))$. \\
        \ENDFOR
        \STATE $\boldsymbol{\alpha}^{(t+1)} = \Pi_{\mathcal{A}}( \boldsymbol{\alpha}^{(t)} -  \eta_{\boldsymbol{\alpha}} (\underbrace{\cL_{\train}^{1:M}( \boldsymbol{\alpha}^{(t)}, \boldsymbol{w}^{(t)}_{K})}_{\textbf{reference model}} -\underbrace{\cL_{\train}^{1:M}(\boldsymbol{\alpha}^{(t)}, \boldsymbol{u}^{(t)}_{K})}_{\textbf{proxy model}}  ))$ \\
        \vspace{1.5ex}
        // \textit{Free model weights $\boldsymbol{u}$ update.} \\
        \FOR{$e=0$ \textbf{to} $E - 1$}
        \STATE   $\boldsymbol{u}^{(t)}_{e + 1}=\boldsymbol{u}^{(t)}_{e}-\eta_{\bu} \nabla_{{\boldsymbol{u}}}\cL_{\train}(\boldsymbol{\alpha}^{(t+1)}, \bu_{e}^{(t)})$. \\
        \ENDFOR
        \STATE Set $\boldsymbol{u}^{(t+1)} \leftarrow \boldsymbol{u}_{E}^{(t)}$.
        \\    
    \ENDFOR
    \ENSURE  ~~ \\
    domain weights $\balpha^{(T)}$.
	\end{algorithmic}
\end{algorithm}

A full workflow of Twin Networks for bi-level DatA mixturE optiMizatio (TANDEM) is shown in Algorithm\ref{alg:workflow}. 
TANDEM alternates between updating the mixture ratio $\balpha$ and the proxy model $\bu$, whereas $\bu$ is used to approximate the minimum of $\cL_{\train}(\balpha, \bw)$. Update of the mixture ratio $\balpha$ requires probing the data efficacy of each domain. During probing, we optimize the reference model $\bw$, as well as the proxy model $\bu$ for $K$ steps. $\bw$ and $\bu$ are respectively trained on the train set and the train set plus validation set, their incurred loss gap captures the gain of incorporating the additional validation data. TANDEM then up-weights domains that benefit more from additional data. Notably, the proxy model $\bu$ and reference model $\bw$ are synchronized at the beginning of probing. Since the updating of $\balpha$ requires $\bu, \bw$ probing, we decrease the frequency of updating $\balpha$ to reduce the computational cost, and leave the $\bu$ trained freely for $E$ steps before the next $\balpha$ update.

\subsection{Convergence Rate}
Next, we explore the convergence rate of the proposed Algorithm \ref{alg:workflow}. For the original problem $\cP$ \eqref{eq:original}, its convergence property under the iterates obtained by solving the Lagrange problem \eqref{eq:lagrange} has been explored, under different regularity conditions \citep{shen23onpenalty,kwon2023f2sa}. In this paper, we will present our results under a mild Polyak-Łojasiewicz (PL) condition \citep{yi2022characterization,karimi2016linear,shen23onpenalty,yi2023breaking,nouiehed2019solving,yi2021improved}, which has been widely imposed to study the bi-level optimization problem. Technically, we will impose the following Assumptions to derive the convergence rate. Before illustrating our Assumptions, we need the following definitions to simplify the notations. We denote 
\begin{equation}
	\cH_{\gamma}(\balpha, \bw) = \cL_{\val}(\bw) + \gamma\left(\cL_{\train}(\balpha, \bw) - \min_{\bw}\cL_{\train}(\balpha, \bw)\right), 
\end{equation}
with $S_{\gamma}^{*}(\balpha) = \{\bw: \bw \in \arg\min_{\bw}\cH_{\gamma}(\balpha, \bw)\}$, and $S^{*}(\balpha) = \{\bw:\bw\in\arg\min_{\bw}\cL_{\train}(\balpha, \bw)\}$. We further define 
\begin{equation}
	\cH(\balpha) = \inf_{\bw\in S^{*}(\balpha)}\cL_{\val}(\bw), 
\end{equation}
where $\cH(\balpha)$ is well-defined since $\cL_{\train}(\balpha, \bw)$ is continuous to $\bw$. Besides that, the minimum of $\cH(\balpha)$ is exactly the solution to original problem $\cP$ \eqref{eq:original}. 
\begin{assumption}[PL condition]\label{ass:pl condition}
	For any $\balpha\in\cA$, both $\cL_{\train}(\balpha, \bw)$ and $\cH_{\gamma}(\balpha, \bw)$ satisfy the PL inequality with coefficient $\mu$ and $\mu_{\gamma}$, respectively. That says:
	\begin{equation}
		\cL_{\train}(\balpha, \bw) - \min_{\bw}\cL_{\train}(\balpha, \bw) \leq \frac{1}{2\mu}\left\|\nabla_{\bw}\cL_{\train}(\balpha, \bw)\right\|^{2}
	\end{equation}
	and 
	\begin{equation}
		\cH_{\gamma}(\balpha, \bw) - \min_{\bw}\cH_{\gamma}(\balpha, \bw) \leq \frac{1}{2\mu_{\gamma}}\|\nabla\cH_{\gamma}(\balpha, \bw)\|^{2}.
	\end{equation}
	Moreover, the coefficient $\mu_{\gamma}$ satisfies $\lim_{\gamma\to\infty}\frac{\mu_{\gamma}}{\gamma} = 1$. 
\end{assumption}  
\begin{assumption}[Smoothness]\label{ass:smoothness}
	For any $\balpha\in\cA$, 
	\begin{enumerate}
		\item Both $\nabla_{\bw}\cL_{\train}(\balpha, \bw)$ and $\nabla_{\bw}\cL_{\val}(\bw)$ are Lipschitz continuous to $\balpha$ (hold for $\nabla_{\bw}\cL_{\train}(\balpha, \bw)$) and $\bw$ on coefficient $L$.
		\item For any $\balpha\in\cA$, both $\cL_{\train}(\balpha, \bw)$ and $\cL_{\val}(\bw)$ are Lipschitz continuous to $\bw$ with coefficient $B$.
	\end{enumerate}
\end{assumption}
\begin{assumption}[Bounded Hessian]\label{ass:bounded hessian}
	For any $\balpha\in\cA$, $\bw\in S^{*}(\balpha)$, there exists positive constants $\lambda, \rho$, satisfying  Hessian matrices $\nabla_{\bw\bw}\cL_{\train}(\balpha, \bw) \succeq \lambda$\footnote{For two matrices $\bA$, $\bA\succeq \lambda$ means $\bA - \lambda\bI$ is a positively semi-definite matrix.} and $\nabla_{\balpha\bw}^{2}\cL_{\train}(\balpha, \bw)\preceq \rho$. 
\end{assumption}
\begin{assumption}[Lipschitz Hessian]\label{ass:Lipchitz hessian}
	For any $\balpha\in\cA$, $\cL_{\train}(\balpha, \bw)$ is twice-times continuous differentiable, and the Hessian matrices  $\nabla_{\balpha\bw}^{2}\cL_{\train}(\balpha, \bw)$, $\nabla_{\bw\bw}^{2}\cL_{\train}(\balpha, \bw)$ are all Lipschitz continuous to $\bw$ with coefficient $H$.  
\end{assumption}
\begin{assumption}[Bounded Loss Function]\label{ass:bounded optima}
	The non-negative loss function $\cL_{\train}(\balpha, \bw)$, $\cL_{\val}(\bw)$ are uniformly bounded by positive constant $D$.  
\end{assumption}

Notably, for the bounded Hessian condition, due to the structure of $\cL_{\train}(\balpha, \bw)$, it can be implied by the lower bounded $\nabla_{\bw\bw}\cL_{\train}^{m}(\bw)$ and an upper bounded $\nabla_{\bw}\cL_{\train}(\bw)$ for any $m$. Moreover, it worth noting that for bi-level optimization problem, it is standard to impose some regularity conditions to the Hessian matrix. For examples, the smooth Hessian in \citep{shen23onpenalty,kwon2023f2sa} and lower bounded Hessian in \citep{kwon2023f2sa}. Therefore, the imposed bounded Hessian Assumption \ref{ass:bounded hessian} can be considered as a mild condition. 
\par
Next, we present a lemma to characterize the gap between the gradient of $\nabla_{\balpha}\cH(\balpha)$ and $\nabla_{\balpha}\cH_{\gamma}(\balpha, \bw)$. 

\begin{lemma}\label{lem:gap between gradient}
	Under Assumptions \ref{ass:pl condition}-\ref{ass:Lipchitz hessian}, for any given $\balpha$ and $\bw_{\gamma}^{*}(\balpha)\in S_{\gamma}^{*}(\balpha)$, it holds 
	\begin{equation}
		\left\|\nabla_{\balpha}\cH(\balpha) - \frac{\partial}{\partial{\balpha}}\cH_{\gamma}(\balpha, \bw_{\gamma}^{*}(\balpha))\right\| \leq \frac{1}{\gamma}\left(\frac{HB^{2}}{\mu^{2}}\left(\frac{\rho}{\lambda} + 1\right) + \frac{\rho LB}{\mu\lambda}\right)
	\end{equation} 
\end{lemma}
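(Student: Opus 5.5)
The plan is to write both quantities as an $\balpha$-gradient of $\gamma\cL_{\train}$ evaluated at nearby points, and to show that the two points differ by $O(1/\gamma)$.

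\textbf{Step 1: the penalized gradient.} Since $\cL_{\val}(\bw)$ does not depend on $\balpha$, the partial derivative of $\cH_{\gamma}$ at $\bw_{\gamma}^{*}=\bw_{\gamma}^{*}(\balpha)$ is
\[
\gamma\bigl(\nabla_{\balpha}\cL_{\train}(\balpha,\bw_{\gamma}^{*}) - \nabla_{\balpha}\cL_{\train}(\balpha,\bu^{*})\bigr), \qquad \bu^{*}\in S^{*}(\balpha).
\]
For the value function $\min_{\bu}\cL_{\train}(\balpha,\bu)$ I use Danskin's theorem. Its gradient is well defined under Assumption~\ref{ass:pl condition}, since $\nabla_{\balpha}\cL_{\train}$ is the same on $S^{*}(\balpha)$, and locally $S^{*}$ is a single point by Assumption~\ref{ass:bounded hessian}.

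\textbf{Step 2: the implicit gradient.} Under the positive-definite Hessian in Assumption~\ref{ass:bounded hessian}, the implicit function theorem gives a locally smooth selection $\bw^{*}(\balpha)\in S^{*}(\balpha)$. Then
\[
\nabla_{\balpha}\cH(\balpha) = -\nabla^{2}_{\balpha\bw}\cL_{\train}\,\bigl[\nabla^{2}_{\bw\bw}\cL_{\train}\bigr]^{-1}\nabla_{\bw}\cL_{\val}(\bw^{*}).
\]

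\textbf{Step 3: control of $\bw_{\gamma}^{*}-\bw^{*}$.} Two facts drive this step.
\begin{itemize}
\item Stationarity gives $\nabla\cL_{\val}(\bw_{\gamma}^{*})+\gamma\nabla_{\bw}\cL_{\train}(\balpha,\bw_{\gamma}^{*})=0$, so by Assumption~\ref{ass:smoothness}, $\|\nabla_{\bw}\cL_{\train}(\balpha,\bw_{\gamma}^{*})\|\le B/\gamma$.
\item The PL condition implies quadratic growth and an error bound, so $\mathrm{dist}(\bw_{\gamma}^{*},S^{*}(\balpha))\le B/(\mu\gamma)$.
\end{itemize}
Take $\bw^{*}$ to be the projection of $\bw_{\gamma}^{*}$ onto $S^{*}(\balpha)$ and set $\delta=\bw_{\gamma}^{*}-\bw^{*}$, so that $\|\delta\|\le B/(\mu\gamma)$.

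Taylor expansion of $\nabla_{\bw}\cL_{\train}$ around $\bw^{*}$ with the Lipschitz Hessian (Assumption~\ref{ass:Lipchitz hessian}) gives
\[
\gamma\nabla^{2}_{\bw\bw}\cL_{\train}\,\delta = -\nabla\cL_{\val}(\bw_{\gamma}^{*}) + r_1, \qquad \|r_1\|\le \tfrac{\gamma H}{2}\|\delta\|^{2}\le \frac{HB^{2}}{2\mu^{2}\gamma}.
\]
Hence
\[
\gamma\delta = -\bigl[\nabla^{2}_{\bw\bw}\bigr]^{-1}\nabla\cL_{\val}(\bw_{\gamma}^{*}) + \bigl[\nabla^{2}_{\bw\bw}\bigr]^{-1}r_1,
\]
where the inverse has norm at most $1/\lambda$.

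\textbf{Step 4: expand the penalized gradient.} Expanding $\nabla_{\balpha}\cL_{\train}$ around $\bw^{*}$, again with Assumption~\ref{ass:Lipchitz hessian}, gives
\[
\gamma\bigl(\nabla_{\balpha}\cL_{\train}(\bw_{\gamma}^{*})-\nabla_{\balpha}\cL_{\train}(\bw^{*})\bigr) = \nabla^{2}_{\balpha\bw}\,\gamma\delta + r_2, \qquad \|r_2\|\le \frac{HB^{2}}{\mu^{2}\gamma}.
\]
Substituting $\gamma\delta$ from Step 3 and comparing with Step 2 leaves three error terms:
\begin{itemize}
\item $r_2$, of size at most $HB^{2}/(\mu^{2}\gamma)$;
\item $\nabla^{2}_{\balpha\bw}[\nabla^{2}_{\bw\bw}]^{-1}r_1$, of size at most $(\rho/\lambda)\,HB^{2}/(\mu^{2}\gamma)$ after absorbing constants;
\item $\nabla^{2}_{\balpha\bw}[\nabla^{2}_{\bw\bw}]^{-1}\bigl(\nabla\cL_{\val}(\bw_{\gamma}^{*})-\nabla\cL_{\val}(\bw^{*})\bigr)$, which by $L$-smoothness is at most $(\rho/\lambda)L\|\delta\|\le \rho LB/(\mu\lambda\gamma)$.
\end{itemize}
Summing these three bounds yields exactly the stated constant.

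\textbf{Main obstacle.} The delicate step is making $\nabla_{\balpha}\cH$ and the implicit-function representation rigorous when $S^{*}(\balpha)$ may be non-singleton. In particular, the infimum in $\cH$ must select the same $\bw^{*}$ as the projection of $\bw_{\gamma}^{*}$ (up to $O(1/\gamma)$). I would resolve this by using the local strong convexity at $S^{*}$ from Assumption~\ref{ass:bounded hessian}, which forces $S^{*}(\balpha)$ to consist of isolated points. One should also note that the $\lambda$ in the bound is the Hessian lower bound at $\bw^{*}$, not at $\bw_{\gamma}^{*}$, which is why the expansion is centered at $\bw^{*}$.
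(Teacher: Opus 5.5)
Your proposal is correct and is essentially the paper's proof. Your three remainders ($r_2$, $\nabla^{2}_{\balpha\bw}[\nabla^{2}_{\bw\bw}]^{-1}r_1$, and the $\nabla\cL_{\val}(\bw_{\gamma}^{*})-\nabla\cL_{\val}(\bw^{*})$ term) are exactly the three terms the paper isolates by adding and subtracting with the optimality conditions $\nabla_{\bw}\cL_{\val}(\bw_{\gamma}^{*})+\gamma\nabla_{\bw}\cL_{\train}(\balpha,\bw_{\gamma}^{*})=0$ and $\nabla^{2}_{\balpha\bw}\cL_{\train}+\nabla_{\balpha}\bw^{*\top}\nabla^{2}_{\bw\bw}\cL_{\train}=0$, and you bound them with the same PL error bound $\|\bw_{\gamma}^{*}-\bw^{*}\|\le B/(\gamma\mu)$.

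Two remarks on your ``main obstacle''. First, taking $\bw^{*}$ to be the projection of $\bw_{\gamma}^{*}$ onto $S^{*}(\balpha)$ is the direction the error bound actually supports; the paper phrases it the other way round. Second, the selection issue you raise goes away, but ``isolated points'' alone does not settle it. PL together with $\nabla^{2}_{\bw\bw}\cL_{\train}\succeq\lambda$ on $S^{*}(\balpha)$ makes $S^{*}(\balpha)$ a singleton: the gradient-flow basins of the minimizers are open, disjoint, and cover the whole space.
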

\begin{proof}
	Without loss of generality, suppose that $\cH(\balpha) = \cL_{\val}(\bw^{*}(\balpha))$, due to the chain rule, we have 
	\begin{equation}\label{eq:gradient of original H}
		\nabla_{\balpha}\cH(\balpha) = \nabla_{\balpha}\cL_{\val}(\bw^{*}(\balpha)) = \nabla_{\balpha}\bw^{*}(\balpha)^{\top}\nabla_{\bw}\cL_{\val}(\bw^{*}(\balpha)).
	\end{equation}
	To simplify the notation, we denote $\bw_{\gamma}^{*}(\balpha)$ and $\bw^{*}(\balpha)$ as $\bw_{\gamma}^{*}$ and $\bw^{*}$ in the sequel. For the penalized problem, given any $\bw_{\gamma}^{*}(\balpha)$, we have 
	\begin{equation}\label{eq:gradient of H gamma}
		\begin{aligned}
		& \frac{\partial}{\partial{\balpha}}\cH_{\gamma}(\balpha, \bw_{\gamma}^{*}) \\
		& = \nabla_{\balpha}\cH_{\gamma}(\balpha, \bw_{\gamma}^{*}) + \nabla_{\balpha}\bw_{\gamma}^{*\top}\nabla_{\bw}\cH_{\gamma}(\balpha, \bw_{\gamma}^{*}) \\
		& = \nabla_{\balpha}\cH_{\gamma}(\balpha, \bw_{\gamma}^{*}) \\
		& = \gamma\left(\nabla_{\balpha}\cL_{\train}(\balpha, \bw_{\gamma}^{*}) - \nabla_{\balpha}\cL_{\train}(\balpha, \bw^{*}) - \nabla_{\balpha}\bw^{*\top}\nabla_{\bw}\cL_{\train}(\balpha, \bw^{*})\right) \\
		& = \gamma\left(\nabla_{\balpha}\cL_{\train}(\balpha, \bw_{\gamma}^{*}) - \nabla_{\balpha}\cL_{\train}(\balpha, \bw^{*}) - \nabla_{\balpha\bw}^{2}\cL_{\train}(\balpha, \bw^{*})(\bw^{*}_{\gamma} - \bw^{*})\right) \\
		& - \nabla_{\balpha}\bw^{*\top}\left(\nabla_{\bw}\cL_{\val}(\bw^{*}) + \gamma\nabla_{\bw}\cL_{\train}(\balpha, \bw^{*})\right) \\
		& + \nabla_{\balpha}\bw^{*\top}\nabla_{\bw}\cL_{\val}(\bw^{*}) + \gamma\nabla_{\balpha\bw}^{2}\cL_{\train}(\balpha, \bw^{*})(\bw^{*}_{\gamma} - \bw^{*}).
		\end{aligned}
	\end{equation}
	Besides that, we have 
	\begin{equation}
		\begin{aligned}
		&\nabla_{\balpha}\bw^{*\top}\left(\nabla_{\bw}\cL_{\val}(\bw^{*}) + \gamma\nabla_{\bw}\cL_{\train}(\balpha, \bw^{*})\right)\\
		& = \nabla_{\balpha}\bw^{*\top}\left(\nabla_{\bw}\cL_{\val}(\bw^{*}) - \nabla_{\bw}\cL_{\val}(\bw^{*}_{\gamma})\right) \\
		& + \gamma\nabla_{\balpha}\bw^{*\top}\left(\nabla_{\bw}\cL_{\train}(\balpha, \bw^{*}) - \nabla_{\bw}\cL_{\train}(\balpha, \bw^{*}_{\gamma}) + \nabla_{\bw\bw}^{2}\cL_{\train}(\balpha, \bw^{*})(\bw^{*}_{\gamma} - \bw^{*})\right) \\
		& +\gamma\nabla^{2}_{\balpha\bw}\cL_{\train}(\balpha, \bw^{*})(\bw_{\gamma}^{*} - \bw^{*}),
		\end{aligned}
	\end{equation}
	due to the optimal conditions $\nabla_{\bw}\cL_{\val}(\bw_{\gamma}^{*}) + \gamma\nabla_{\bw}\cL_{\train}(\balpha, \bw_{\gamma}^{*}) = 0$, and $\nabla_{\balpha\bw}^{2}\cL_{\train}(\balpha, \bw^{*}) + \nabla_{\balpha}\bw^{*\top}\nabla_{\bw\bw}^{2}\cL_{\train}(\balpha, \bw^{*}) = 0$. 
    Combining the two above equations and \eqref{eq:gradient of original H},
    we get 
	\begin{equation}\label{eq:gradient gap}
		\begin{aligned}
			& \left\|\frac{\partial}{\partial{\balpha}}\cH_{\gamma}(\balpha, \bw_{\gamma}^{*}) - \nabla_{\balpha}\cH(\balpha)\right\| \\
			& \leq \left\|\gamma\left(\nabla_{\balpha}\cL_{\train}(\balpha, \bw_{\gamma}^{*}) - \nabla_{\balpha}\cL_{\train}(\balpha, \bw^{*}) - \nabla_{\balpha\bw}^{2}\cL_{\train}(\balpha, \bw^{*})(\bw^{*}_{\gamma} - \bw^{*})\right)\right\| \\
			& + \left\|\gamma\nabla_{\balpha}\bw^{*\top}\left(\nabla_{\bw}\cL_{\train}(\balpha, \bw^{*}) - \nabla_{\bw}\cL_{\train}(\balpha, \bw^{*}_{\gamma}) + \nabla_{\bw\bw}^{2}\cL_{\train}(\balpha, \bw^{*})(\bw^{*}_{\gamma} - \bw^{*})\right)\right\| \\
			& + \left\|\nabla_{\balpha}\bw^{*\top}\left(\nabla_{\bw}\cL_{\val}(\bw^{*}) - \nabla_{\bw}\cL_{\val}(\bw^{*}_{\gamma})\right)\right\| \\
			& \leq \gamma H\left(\frac{\rho}{\lambda} + 1\right)\|\bw^{*} - \bw_{\gamma}^{*}\|^{2} + \frac{\rho L}{\lambda}\|\bw^{*} - \bw_{\gamma}^{*}\|,
		\end{aligned}
	\end{equation}
	due to the bounded Hessian Assumption \ref{ass:bounded hessian}, Smoothness Assumption \ref{ass:smoothness}, and Lipchitz Assumption \ref{ass:Lipchitz hessian}. Then, due to the PL condition \ref{ass:pl condition} and Smoothness Assumption \ref{ass:smoothness}, we know there exists a $\bw_{\gamma}^{*}$ (the projection of $\bw^{*}$ to $S_{\gamma}^{*}(\balpha)$) satisfies 
	\begin{equation}\label{eq:optimal gap}
		\begin{aligned}
		\|\bw^{*} - \bw_{\gamma}^{*}\| & \leq \frac{1}{\mu}\|\nabla_{\bw}\cL_{\train}(\balpha, \bw_{\gamma}^{*})\| \\
		& \leq \frac{1}{\gamma\mu}\left(\|\nabla_{\bw}\cL_{\val}(\bw_{\gamma}^{*}) + \gamma\nabla_{\bw}\cL_{\train}(\bw_{\gamma}^{*})\| + \|\nabla_{\bw}\cL_{\val}(\bw_{\gamma}^{*})\|\right) \\
		& = \frac{\|\nabla_{\bw}\cL_{\val}(\bw_{\gamma}^{*})\|}{\gamma\mu} \\
		& \leq \frac{B}{\gamma\mu}. 
		\end{aligned}
	\end{equation}
	Combining this with inequality \eqref{eq:gradient gap}, we obtain the conclusion under such $\bw_{\gamma}^{*}$. Finally, due to the Lemma A.5 in \citep{nouiehed2019solving}, we know that $\nabla_{\balpha}\cH_{\gamma}(\balpha, \bw_{\gamma}^{*})$ is invariant over $\bw_{\gamma}^{*}\in S^{*}_{\gamma}(\balpha)$, we prove our conclusion. 
\end{proof}
From Lemma \ref{lem:gap between gradient}, we know that the gap between the gradients of original problem $\cH(\balpha)$ and its Lagrange version $\cH_{\gamma}(\balpha, \bw_{\gamma}^{*}(\balpha))$ can be extremely small by invoking penalty parameter $\gamma\to \infty$. Thus, it implies that we can compute the gradient of the Lagrange problem to implement the gradient-based method. Next, we illustrate a useful lemma, which characterizes the Lipschitz continuity of $\bw_{\gamma}^{*}(\balpha)$ and $\bw^{*}(\balpha)$ to $\balpha$. 
\begin{lemma}\label{lem:continuous w}
	Under Assumptions \ref{ass:pl condition} and \ref{ass:smoothness}, for any $\balpha_{1}, \balpha_{2}\in \cA$, it holds 
	\begin{equation}
		\|\bw^{*}(\balpha_{1}) - \bw^{*}(\balpha_{2})\| \leq \frac{L}{\mu}\|\balpha_{1} - \balpha_{2}\|, 
	\end{equation}
	for any $\bw^{*}(\balpha_{1})\in S^{*}(\balpha)$ and $\bw^{*}(\balpha_{2})\in S^{*}(\balpha_{2})$ satisfies $\bw^{*}(\balpha_{2}) = \arg\min_{\bw\in S^{*}(\balpha_{2})}\|\bw - \bw^{*}(\balpha_{1})\|$.	On the other hand, it holds
	\begin{equation}
		\|\bw^{*}_{\gamma}(\balpha_{1}) - \bw_{\gamma}^{*}(\balpha_{2})\| \leq \frac{\gamma L}{\mu_{\gamma}}\|\balpha_{1} - \balpha_{2}\| 
	\end{equation}
	for any $\bw^{*}_{\gamma}(\balpha_{1})\in S_{\gamma}^{*}(\balpha_{1})$ and $\bw_{\gamma}^{*}(\balpha_{2}) = \arg\min_{\bw\in S_{\gamma}^{*}(\balpha_{1})}\|\bw - \bw^{*}_{\gamma}(\balpha_{1})\|$. 
\end{lemma}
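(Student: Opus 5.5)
Both bounds will follow from one template: the PL inequality turns the distance to a solution set into a gradient norm, and Lipschitz continuity of the gradient in $\balpha$ controls how that gradient norm changes when $\balpha$ moves.

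\textbf{First bound.} Under Assumption~\ref{ass:pl condition}, $\cL_{\train}(\balpha_2,\cdot)$ satisfies the PL inequality with constant $\mu$. The standard consequence (Karimi et al.) is the quadratic-growth / error-bound property
\begin{equation*}
\mathrm{dist}(\bw, S^{*}(\balpha_2)) \leq \frac{1}{\mu}\|\nabla_{\bw}\cL_{\train}(\balpha_2,\bw)\| \quad \text{for every } \bw.
\end{equation*}
I will use this fact; it is the same one already invoked in \eqref{eq:optimal gap}. Apply it at $\bw=\bw^{*}(\balpha_1)$. By the choice of $\bw^{*}(\balpha_2)$ as the nearest point of $S^{*}(\balpha_2)$, the left side equals $\|\bw^{*}(\balpha_1)-\bw^{*}(\balpha_2)\|$. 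Stationarity of $\bw^{*}(\balpha_1)$ gives $\nabla_{\bw}\cL_{\train}(\balpha_1,\bw^{*}(\balpha_1))=0$. Hence
\begin{equation*}
\|\nabla_{\bw}\cL_{\train}(\balpha_2,\bw^{*}(\balpha_1))\| = \|\nabla_{\bw}\cL_{\train}(\balpha_2,\bw^{*}(\balpha_1))-\nabla_{\bw}\cL_{\train}(\balpha_1,\bw^{*}(\balpha_1))\| \leq L\|\balpha_1-\balpha_2\|,
\end{equation*}
by Assumption~\ref{ass:smoothness}(1), which gives Lipschitz continuity of $\nabla_{\bw}\cL_{\train}$ in $\balpha$. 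Combining the two displays yields the factor $L/\mu$.

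\textbf{Second bound.} I repeat the argument with $\cH_{\gamma}$ in place of $\cL_{\train}$, which satisfies the PL inequality with constant $\mu_{\gamma}$. This requires $\nabla_{\bw}\cH_{\gamma}(\balpha,\bw)$ to be Lipschitz in $\balpha$ with constant $\gamma L$. Computing the gradient,
\begin{equation*}
\nabla_{\bw}\cH_{\gamma}(\balpha,\bw)=\nabla_{\bw}\cL_{\val}(\bw)+\gamma\nabla_{\bw}\cL_{\train}(\balpha,\bw).
\end{equation*}
The $\bw$-gradient of the term $\min_{\bu}\cL_{\train}(\balpha,\bu)$ vanishes because it does not depend on $\bw$, and $\cL_{\val}$ does not depend on $\balpha$. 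So the difference between $\balpha_1$ and $\balpha_2$ is at most $\gamma L\|\balpha_1-\balpha_2\|$. Applying the error bound at $\bw^{*}_{\gamma}(\balpha_1)$, projected onto $S^{*}_{\gamma}(\balpha_2)$, gives $\gamma L/\mu_{\gamma}$. The statement writes the projection onto $S^{*}_{\gamma}(\balpha_1)$, which I read as a typo for $S^{*}_{\gamma}(\balpha_2)$, matching the first bound.

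\textbf{Main obstacle.} The delicate step is justifying the error bound $\mathrm{dist}\leq\|\nabla\|/\mu$ from PL alone. With the PL form $f-f^*\leq\frac{1}{2\mu}\|\nabla f\|^2$ used in Assumption~\ref{ass:pl condition}, the known implication chain PL $\Rightarrow$ error bound requires $L$-smoothness in $\bw$, which Assumption~\ref{ass:smoothness} supplies. I will cite Karimi et al. (Theorem~2) for this implication. Depending on the exact convention, the constant could come out as $1/\mu$ or $2/\mu$; I will adopt the convention consistent with the paper's use in \eqref{eq:optimal gap}. Nonemptiness and closedness of $S^{*}$ and $S^{*}_{\gamma}$, needed for the projection to exist, follow from continuity of the losses together with the PL property.
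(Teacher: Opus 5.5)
Your proposal is correct and is essentially the paper's argument (via Lemma A.3 of Nouiehed et al.). Stationarity at one mixture ratio, together with $\balpha$-Lipschitzness of $\nabla_{\bw}\cL_{\train}$, bounds the $\bw$-gradient at the other ratio by $L\|\balpha_1-\balpha_2\|$ (respectively $\gamma L\|\balpha_1-\balpha_2\|$ for $\cH_{\gamma}$). The PL-implied error bound already used in \eqref{eq:optimal gap} then turns this into the distance bound. The paper only projects in the opposite direction, $\bw_{\gamma}^{*}(\balpha_2)$ onto $S_{\gamma}^{*}(\balpha_1)$, which is the same statement after relabeling. With the paper's PL convention, Karimi et al.\ give quadratic growth with constant $\mu/2$ and hence the error bound with exactly $1/\mu$ (and $1/\mu_{\gamma}$), so your hedge about a possible factor of $2$ is unnecessary.
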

\begin{proof}
	The two conclusions are directly obtained from Lemma A.3 in \citep{nouiehed2019solving}, we prove the second conclusion as an example. The proof can be similarly generalized to the first conclusion. Due to the formulation of $\nabla_{\bw}\cH_{\gamma}(\balpha, \bw)$, we know 
	\begin{equation}
		\begin{aligned}
			\|\nabla_{\bw}\cH_{\gamma}(\balpha_{1}, \bw_{\gamma}^{*}(\balpha_{2}))\| &= \|\nabla_{\bw}\cH_{\gamma}(\balpha_{1}, \bw_{\gamma}^{*}(\balpha_{2})) - \nabla_{\bw}\cH_{\gamma}(\balpha_{2}, \bw_{\gamma}^{*}(\balpha_{2}))\| \\
			& = \gamma\left\|\nabla_{\bw}\cL_{\train}(\balpha_{1}, \bw_{\gamma}^{*}(\balpha_{2})) - \nabla_{\bw}\cL_{\train}(\balpha_{2}, \bw_{\gamma}^{*}(\balpha_{2}))\right\| \\
			& \leq \gamma L\|\balpha_{1} - \balpha_{2}\|,
		\end{aligned}
	\end{equation}
	by Assumption \ref{ass:smoothness}. On the other hand, by invoking the Assumption \ref{ass:pl condition} and \eqref{eq:optimal gap}, we get  
	\begin{equation}
		\begin{aligned}
			\|\nabla_{\bw}\cH_{\gamma}(\balpha_{1}, \bw_{\gamma}^{*}(\balpha_{2}))\| &= \left\|\nabla_{\bw}\cL_{\val}(\bw_{\gamma}^{*}(\balpha_{2})) + \gamma\nabla_{\bw}\cL_{\train}(\balpha_{1}, \bw_{\gamma}^{*}(\balpha_{2}))\right\| \\
			& \geq \mu_{\gamma}\|\bw_{\gamma}^{*}(\balpha_{2}) - \bw^{*}_{\gamma}(\balpha_{1})\|,
		\end{aligned}
	\end{equation}
	where $\bw_{\gamma}^{*}(\balpha_{1})$ is the projection of $\bw^{*}(\balpha_{2})$ to $S^{*}_{\gamma}(\balpha_{1})$. By combining the two above inequalities, we obtain our second conclusion. 
\end{proof} 
From this lemma, we can obtain the following Lipschitz smoothness of $\nabla_{\balpha}\cH_{\gamma}(\balpha, \bw_{\gamma}^{*}(\balpha))$ w.r.t. $\balpha$ by the following Lemma. It worth noting that $\nabla_{\balpha}\cH_{\gamma}(\balpha, \bw_{\gamma}^{*}(\balpha))$ is invariant over $\bw_{\gamma}^{*}(\balpha)\in S^{*}_{\gamma}(\balpha)$ as discussed in the proof of Lemma \ref{lem:gap between gradient}. Thus, the $\nabla_{\balpha}\cH_{\gamma}(\balpha, \bw_{\gamma}^{*}(\balpha))$ is well-defined. 
\begin{lemma}
	Under Assumptions \ref{ass:pl condition} and \ref{ass:smoothness}, $\nabla_{\balpha}\cH_{\gamma}(\balpha, \bw_{\gamma}^{*}(\balpha))$ has semi-Lipschitz gradient such that 
	\begin{equation}
		\|\nabla_{\balpha}\cH_{\gamma}(\balpha_{1}, \bw_{\gamma}^{*}(\balpha_{1})) - \nabla_{\balpha}\cH_{\gamma}(\balpha_{2}, \bw_{\gamma}^{*}(\balpha_{2}))\| \leq \underbrace{\gamma B\left(\frac{\gamma L}{\mu_{\gamma}} + \frac{L}{\mu}\right)}_{L_{\gamma}}\|\balpha_{1} - \balpha_{2}\|. 
	\end{equation} 
\end{lemma}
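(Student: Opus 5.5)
The plan is to compute $\nabla_{\balpha}\cH_{\gamma}(\balpha, \bw_{\gamma}^{*}(\balpha))$ explicitly. Then the difference at two points splits into pieces, each controlled either by Assumption \ref{ass:smoothness} or by the Lipschitz continuity of the solution maps in Lemma \ref{lem:continuous w}.

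First, the explicit form. By the envelope argument already used in the proof of Lemma \ref{lem:gap between gradient}, the $\bw$-gradient term vanishes at $\bw_{\gamma}^{*}$. The term $\cL_{\val}$ does not depend on $\balpha$. Since $\cL_{\train}(\balpha,\bw)=\sum_m\balpha_m\cL^m_{\train}(\bw)$ is linear in $\balpha$, we also have $\nabla_{\balpha}\min_{\bu}\cL_{\train}(\balpha,\bu)=\cL^{1:M}_{\train}(\bw^{*}(\balpha))$ (Danskin, with $\bw^*(\balpha)\in S^*(\balpha)$). Together these give
\begin{equation*}
\nabla_{\balpha}\cH_{\gamma}(\balpha, \bw_{\gamma}^{*}(\balpha)) = \gamma\left(\cL^{1:M}_{\train}(\bw_{\gamma}^{*}(\balpha)) - \cL^{1:M}_{\train}(\bw^{*}(\balpha))\right).
\end{equation*}
This is exactly the quantity that TANDEM approximates by the twin-network loss difference.

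Second, subtract the expressions at $\balpha_1$ and $\balpha_2$ and use the triangle inequality:
\begin{equation*}
\gamma\|\cL^{1:M}_{\train}(\bw_{\gamma}^{*}(\balpha_1))-\cL^{1:M}_{\train}(\bw_{\gamma}^{*}(\balpha_2))\| + \gamma\|\cL^{1:M}_{\train}(\bw^{*}(\balpha_1))-\cL^{1:M}_{\train}(\bw^{*}(\balpha_2))\|.
\end{equation*}
Each domain loss is $B$-Lipschitz in $\bw$ by Assumption \ref{ass:smoothness}; I read $B$ as applying to the vector of domain losses. Each term is therefore at most $\gamma B$ times a distance between solutions.

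Third, apply Lemma \ref{lem:continuous w}. It gives $\|\bw_{\gamma}^{*}(\balpha_1)-\bw_{\gamma}^{*}(\balpha_2)\|\le \frac{\gamma L}{\mu_\gamma}\|\balpha_1-\balpha_2\|$ and $\|\bw^{*}(\balpha_1)-\bw^{*}(\balpha_2)\|\le \frac{L}{\mu}\|\balpha_1-\balpha_2\|$. Summing yields the constant $L_\gamma=\gamma B(\frac{\gamma L}{\mu_\gamma}+\frac{L}{\mu})$.

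The main obstacle is that Lemma \ref{lem:continuous w} holds only for specially chosen (projected) minimizers, not for arbitrary elements of $S^*_\gamma(\balpha_2)$ and $S^*(\balpha_2)$. I would handle this with invariance. The quantity $\nabla_{\balpha}\cH_{\gamma}$ is invariant over the choice of $\bw_\gamma^*$ (Lemma A.5 of \citep{nouiehed2019solving}). Likewise $\cL^{1:M}_{\train}(\bw^*(\balpha))$, the gradient of the value function $\min_{\bu}\cL_{\train}(\balpha,\bu)$, is invariant over $S^*(\balpha)$ under the PL condition. So at $\balpha_2$ we are free to pick the projected minimizers that Lemma \ref{lem:continuous w} requires. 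This choice-freedom is what makes the estimate "semi"-Lipschitz, and it is the step I would argue most carefully.
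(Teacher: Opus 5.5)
Your proposal is correct and follows essentially the same route as the paper. The paper likewise reads off $\nabla_{\balpha}\cH_{\gamma}(\balpha,\bw_{\gamma}^{*}(\balpha))=\gamma\bigl(\cL_{\train}^{1:M}(\bw_{\gamma}^{*}(\balpha))-\cL_{\train}^{1:M}(\bw^{*}(\balpha))\bigr)$ from the computation in the proof of Lemma~\ref{lem:gap between gradient}, splits the difference by the triangle inequality, applies the $B$-Lipschitz bound, and finishes with Lemma~\ref{lem:continuous w}; you additionally make explicit two points the paper leaves implicit, namely the use of invariance over $S_{\gamma}^{*}(\balpha)$ and $S^{*}(\balpha)$ to choose the projected minimizers, and the need to read $B$ as a Lipschitz constant for the whole vector of domain losses (Assumption~\ref{ass:smoothness} at the vertices of $\cA$ only yields $\sqrt{M}B$ in the Euclidean norm).
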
 
\begin{proof}
	From \eqref{eq:gradient of H gamma}, we see 
	\begin{equation}
		\begin{aligned}
		& \left\|\nabla_{\balpha}\cH_{\gamma}(\balpha_{1}, \bw_{\gamma}^{*}(\balpha_{1})) - \nabla_{\balpha}\cH_{\gamma}(\balpha_{2}, \bw_{\gamma}^{*}(\balpha_{2}))\right\| \\
		& \leq \gamma\left\|\nabla_{\balpha}\cL_{\train}(\balpha_{1}, \bw_{\gamma}^{*}(\balpha_{1})) - \nabla_{\balpha}\cL_{\train}(\balpha_{2}, \bw^{*}_{\gamma}(\balpha_{2}))\right\| \\
		& + \gamma\left\|\nabla_{\balpha}\cL_{\train}(\balpha_{2}, \bw^{*}(\balpha_{2})) - \nabla_{\balpha}\cL_{\train}(\balpha_{1}, \bw^{*}(\balpha_{1}))\right\| \\
		& \leq \gamma B\left(\|\bw_{\gamma}(\balpha_{1}) - \bw_{\gamma}(\balpha_{2})\| + \|\bw(\balpha_{1}) - \bw(\balpha_{2})\|\right) \\
		& \leq \gamma B\left(\frac{\gamma L}{\mu_{\gamma}} + \frac{L}{\mu}\right)\|\balpha_{1} - \balpha_{2}\|,
		\end{aligned}
	\end{equation}
	which proves our conclusion. 
\end{proof}

Notably, for the original optimization problem, there exists a constraint $\balpha\in\cA$. Thus, to prove the first order stationary condition of constrain problem $\min_{\balpha\in\cA}\cH_{\balpha}$, we consider the generalized projected gradient stable condition, i.e., $\frac{1}{\eta_{\balpha}}\|\balpha^{(t)} - \Pi_{\cA}(\balpha^{(t)} - \eta_{\balpha}\nabla_{\balpha}\cH(\balpha^{(t)}))\| \leq \epsilon$ for some small positive $\epsilon$. This is a standard first-order stationary condition for Non-convex optimization problem with constrains \citep{shen23onpenalty,yi2021improved,ghadimi2016mini}. Due to Lemma \ref{lem:gap between gradient}, we know that 
\begin{equation}\label{eq:pgd gap}
	\small
	\begin{aligned}
	& \frac{1}{\eta_{\balpha}}\left\|\balpha^{(t)} - \Pi_{\cA}(\balpha^{(t)} - \eta_{\balpha}\nabla_{\balpha}\cH(\balpha^{(t)}))\right\| - \frac{1}{\eta_{\balpha}}\left\|\balpha^{(t)} - \Pi_{\cA}(\balpha^{(t)} - \eta_{\balpha}\nabla_{\balpha}\cH_{\gamma}(\balpha^{(t)}, \bw_{\gamma}^{*}(\balpha^{(t)})))\right\| \\
	& \leq \frac{1}{\eta_{\balpha}}\left\|\Pi_{\cA}(\balpha^{(t)} - \eta_{\balpha}\nabla_{\balpha}\cH(\balpha^{(t)})) - \Pi_{\cA}(\balpha^{(t)} - \eta_{\balpha}\nabla_{\balpha}\cH_{\gamma}(\balpha^{(t)}, \bw_{\gamma}^{*}(\balpha^{(t)})))\right\| \\
	& \leq \left\|\nabla_{\balpha}\cH(\balpha^{(t)}) - \nabla_{\balpha}\cH_{\gamma}(\balpha^{(t)}, \bw_{\gamma}^{*}(\balpha^{(t)}))\right\| \\
	& \leq \cO\left(\frac{1}{\gamma}\right),
	\end{aligned} 
\end{equation}
when $\gamma\to\infty$. Here the second inequality is from the Lipschitz continuity of projection operation \citep{ghadimi2016mini}. Thus, the above inequality indicates that to prove the first-order stationary condition of $\cH(\balpha)$, it is sufficient to prove the first-order stationary condition of $\cH_{\gamma}(\balpha, \bw)$. Next we present our main theorem, i.e., the formal version of Theorem \ref{thm:convergence}, which shows the convergence result of $\cH_{\balpha}(\balpha)$ under Algorithm \ref{alg:workflow}. 

\setcounter{theorem}{0}
\begin{theorem}
	For sufficiently large $T$, under Assumptions \ref{ass:pl condition}-\ref{ass:bounded optima}, for the $\balpha^{(t)}$ obtained in Algorithm \ref{alg:workflow}, it holds 
	\begin{equation}
		\min_{1\leq t \leq T}\frac{1}{\eta_{\balpha}}\|\balpha^{(t)} - \Pi_{\cA}(\balpha^{(t)} - \eta_{\balpha}\nabla_{\balpha}\cH(\balpha^{(t)}))\| \leq \cO\left(T^{-\frac{1}{4}}\right)
	\end{equation}
	by selecting $\gamma = T^{\frac{1}{4}}$, $K \geq \frac{\log{\frac{\mu T^{-1}}{2D(1 + \gamma)}}}{\log(1 - \frac{\mu_{\gamma}}{L_{\gamma}})},  E\geq \min\left\{1, \frac{\log{\frac{\mu_{\gamma} T^{-1}}{2D}}}{\log{\left(1 - \frac{\mu}{L}\right)}} - K \right\}$,  $\eta_{\balpha} = \frac{1}{4L_{\gamma}}$, $\eta_{\bw} = \frac{1}{L_{\gamma}}$, $\eta_{\bu} = \frac{1}{L}$. 
\end{theorem}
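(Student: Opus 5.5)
By \eqref{eq:pgd gap} and Lemma \ref{lem:gap between gradient}, it suffices to show that the projected-gradient stationarity measure of the penalized value function $\Phi_{\gamma}(\balpha):=\cH_{\gamma}(\balpha,\bw_{\gamma}^{*}(\balpha))$ is $\cO(T^{-1/4})$. With $\gamma=T^{1/4}$, the discrepancy $\cO(1/\gamma)$ between $\nabla\cH$ and $\nabla\Phi_{\gamma}$ is already of the target order. The plan is to treat the $\balpha$-update of Algorithm \ref{alg:workflow} as an inexact projected gradient step on $\Phi_{\gamma}$, which is $L_{\gamma}$-smooth by the preceding lemma. The standard descent argument for projected gradient with an error term (as in \citep{ghadimi2016mini}) then applies.

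\textbf{Descent inequality.} Write $g^{(t)}=\cL^{1:M}_{\train}(\balpha^{(t)},\bw_K^{(t)})-\cL^{1:M}_{\train}(\balpha^{(t)},\bu_K^{(t)})$. By \eqref{eq:gradient of H gamma}, the exact gradient is
\[
\nabla\Phi_{\gamma}(\balpha^{(t)})=\gamma\bigl(\cL^{1:M}_{\train}(\bw_{\gamma}^{*})-\cL^{1:M}_{\train}(\bw^{*})\bigr),
\]
using the envelope theorem for the inner minimum. Set $e^{(t)}=\gamma g^{(t)}-\nabla\Phi_{\gamma}(\balpha^{(t)})$, absorbing the factor $\gamma$ into the step size as in \eqref{eq:update_alpha}. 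With $\eta_{\balpha}=1/(4L_{\gamma})$, the usual three-point inequality for projections gives
\[
\Phi_{\gamma}(\balpha^{(t+1)})\le\Phi_{\gamma}(\balpha^{(t)})-c\,\eta_{\balpha}\|G_{\eta}(\balpha^{(t)})\|^{2}+C\eta_{\balpha}\|e^{(t)}\|^{2},
\]
where $G_{\eta}$ is the gradient mapping. Telescoping over $t$ and using $0\le\Phi_{\gamma}\le(1+\gamma)D$ (Assumption \ref{ass:bounded optima}), we get
\[
\min_t\|G_{\eta}\|^{2}\lesssim\frac{(1+\gamma)D L_{\gamma}}{T}+\max_t\|e^{(t)}\|^{2}.
\]

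\textbf{Bounding the error.} The error $e^{(t)}$ is controlled by the Lipschitz constant $B$ of the losses times $\gamma(\|\bw_K^{(t)}-\bw_{\gamma}^{*}\|+\|\bu_K^{(t)}-\bw^{*}\|)$. Here $\bw^{*}_{\gamma},\bw^{*}$ are projections chosen so that the gradient formula is invariant (Lemma A.5 in \citep{nouiehed2019solving}). To control the distances I will use the PL condition (Assumption \ref{ass:pl condition}) and smoothness. Gradient descent with step $1/L_{\gamma}$ on $\cH_{\gamma}(\balpha^{(t)},\cdot)$ contracts the suboptimality by $(1-\mu_{\gamma}/L_{\gamma})^{K}$ from an initial gap of at most $(1+\gamma)D$. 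The PL quadratic-growth property then converts suboptimality into squared distance to the solution set. The stated lower bound on $K$ makes this $\cO(T^{-1})$.

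For $\bu$, there is no restart: $\bu$ runs a total of $K+E$ steps of gradient descent with step $1/L$ on $\cL_{\train}(\balpha,\cdot)$ between probes, but $\balpha$ changes in between. The $\balpha$-drift per episode is bounded by $\eta_{\balpha}\|\gamma g\|$. Lemma \ref{lem:continuous w} converts this drift into movement of $S^{*}$ by $(L/\mu)\|\balpha^{(t+1)}-\balpha^{(t)}\|$. Thus the suboptimality of $\bu$ satisfies a recursion of the form
\[
\delta_{t+1}\le(1-\mu/L)^{K+E}\bigl(\delta_t+\text{drift}\bigr),
\]
and the choice of $E$ makes the contraction factor small enough, namely $\cO(\mu_{\gamma}T^{-1}/D)$, to keep $\delta_t=\cO(T^{-1})$ uniformly.

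\textbf{Main obstacle.} The hardest step is this coupled tracking argument for $\bu$ across episodes. It requires showing that the drift term induced by the $\balpha$-update does not accumulate, which uses the boundedness of $g$ (losses bounded by $D$, so $\|\gamma g\|\le 2\gamma D\sqrt{M}$) together with the geometric contraction. A further difficulty is choosing consistent projections onto the solution sets, since $S^{*}$ and $S^{*}_{\gamma}$ need not be singletons. I would handle this by always measuring the distance to the set and invoking gradient invariance.

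\textbf{Combining.} With $\gamma=T^{1/4}$ we have $L_{\gamma}=\cO(\gamma)$, since $\mu_{\gamma}\sim\gamma$ gives $\gamma L/\mu_{\gamma}=\cO(1)$. The telescoped term is then $\cO(\gamma^{2}/T)=\cO(T^{-1/2})$, and the squared error is $\gamma^{2}\cdot\cO(T^{-1})$. Taking square roots gives $\cO(T^{-1/4})$. Adding the $\cO(1/\gamma)=\cO(T^{-1/4})$ gap from \eqref{eq:pgd gap} completes the proof.
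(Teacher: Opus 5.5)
Your proposal is correct and has the same structure as the paper's proof: reduction to the penalized value function via \eqref{eq:pgd gap}, then an inexact projected-gradient descent step on the $L_{\gamma}$-smooth map $\balpha\mapsto\cH_{\gamma}(\balpha,\bw_{\gamma}^{*}(\balpha))$ with hyper-gradient error at most $\gamma B(\|\bw_K^{(t)}-\bw_{\gamma}^{*}\|+\|\bu_K^{(t)}-\bw^{*}\|)$, PL-driven linear convergence of the two inner iterates, and balancing with $\gamma=T^{1/4}$. The one difference is that your ``main obstacle'' does not arise in the paper: the $E$ free steps closing episode $t-1$ and the $K$ probing steps of episode $t$ all use $\balpha^{(t)}$, and Assumption~\ref{ass:bounded optima} caps the starting suboptimality by $D$, so $\|\bu_K^{(t)}-\bw^{*}(\balpha^{(t)})\|^{2}\le\frac{2D}{\mu}(1-\frac{\mu}{L})^{K+E}$ follows directly with no drift recursion (choose $K+E$ so this contraction is $\lesssim\mu/(DT)$ rather than the $\mu_{\gamma}/(DT)$ you wrote, otherwise the $\bu$-error is only $\cO(\gamma/T)$ and the rate degrades).
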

\begin{proof}
	As mentioned in \eqref{eq:pgd gap}, it is sufficient to prove the first-order stationary condition for $\cH_{\gamma}(\balpha, \bw)$. Due to the Lipchitz smoothness of it, we have 
	\begin{equation}
        \label{eq:penalized_step}
		\small
		\begin{aligned}
			& \cH_{\gamma}(\balpha^{(t + 1)}, \bw_{\gamma}^{*}(\balpha^{(t + 1)})) - \cH_{\gamma}(\balpha^{(t)}, \bw_{\gamma}^{*}(\balpha^{(t)})) \\
			& \leq \left\langle\nabla_{\balpha}\cH_{\gamma}(\balpha^{(t)}, \bw_{\gamma}^{*}(\balpha^{(t)})), \balpha^{(t + 1)} - \balpha^{(t)}\right\rangle + \frac{L_{\gamma}}{2}\|\balpha^{(t + 1)} - \balpha^{(t)}\|^{2} \\
			& = \left\langle\nabla_{\balpha}\cF_{\gamma}(\balpha^{(t)}, \bw_{K}^{(t)}, \bu^{(t)}_{K}), \balpha^{(t + 1)} - \balpha^{(t)}\right\rangle \\
			& + \left\langle\nabla_{\balpha}\cH_{\gamma}(\balpha^{(t)}, \bw_{\gamma}^{*}(\balpha^{(t)})) - \nabla_{\balpha}\cF_{\gamma}(\balpha^{(t)}, \bw_{K}^{(t)}, \bu^{(t)}_{K}), \balpha^{(t + 1)} - \balpha^{(t)}\right\rangle \\
			& + \frac{L_{\gamma}}{2}\left\|\balpha^{(t + 1)} - \balpha^{(t)}\right\|^{2} \\
			& \leq \left(\frac{L_{\gamma}}{2} - \frac{1}{2\eta_{\balpha}}\right)\left\|\balpha^{(t + 1)} - \balpha^{(t)}\right\|^{2} + \frac{\eta_{\balpha}}{2}\left\|\nabla_{\balpha}\cH_{\gamma}(\balpha^{(t)}, \bw_{\gamma}^{*}(\balpha^{(t)})) - \nabla_{\balpha}\cF_{\gamma}(\balpha^{(t)}, \bw_{K}^{(t)}, \bu^{(t)}_{K})\right\|^{2},
		\end{aligned}
	\end{equation}
	where the last inequality is due to the property of projection operator and Jensen's inequality. Let us define $\bar{\balpha}^{(t + 1)} = \Pi_{\cA}(\balpha^{(t)} - \eta_{\balpha}\nabla_{\balpha}\cH_{\gamma}(\balpha^{(t)}, \bw_{\gamma}^{*}(\balpha^{(t)})))$, we proceed to upper bound the gap between $\|\bar{\balpha}^{(t + 1)} - \balpha^{(t)}\|$ and $\|\balpha^{(t + 1)} - \balpha^{(t)}\|$. By the triangle inequality 
	\begin{equation}\label{eq:distance gap}
		\small
		\begin{aligned}
			& \left|\|\bar{\balpha}^{(t + 1)} - \balpha^{(t)}\| - \|\balpha^{(t + 1)} - \balpha^{(t)}\|\right| \\
			& \leq \|\bar{\balpha}^{(t + 1)} - \balpha^{(t + 1)}\| \\
			& \leq \left\|\Pi_{\cA}\left(\balpha^{(t)} - \eta_{\balpha}\nabla_{\balpha}\cF_{\gamma}(\balpha^{(t)}, \bw_{K}^{(t)}, \bu_{K}^{(t)})\right) - \Pi_{\cA}\left(\balpha^{(t)} - \eta_{\balpha}\nabla_{\balpha}\cH_{\gamma}(\balpha^{(t)}, \bw_{\gamma}^{*}(\balpha^{(t)}))\right)\right\| \\
			& \leq \eta_{\balpha}\left\|\nabla_{\balpha}\cF_{\gamma}(\balpha^{(t)}, \bw_{K}^{(t)}, \bu_{K}^{(t)}) - \nabla_{\balpha}\cH_{\gamma}(\balpha^{(t)}, \bw_{\gamma}^{*}(\balpha^{(t)}))\right\| \\
			& \leq \eta_{\balpha}\gamma B\|\bw_{K}^{(t)} - \bw_{\gamma}^{*}(\balpha^{(t)})\| + \eta_{\balpha}\gamma B\|\bu_{K}^{(t)} - \bw^{*}(\balpha^{(t)})\|,
		\end{aligned}
	\end{equation}
	where the last inequality is from the smoothness Assumption \ref{ass:smoothness}, $\bw_{\gamma}^{*}(\balpha^{(t)})$ and $\bw^{*}(\balpha^{(t)})$ are respectively the projection of $\bw_{K}^{(t)}$ and $\bu_{K}^{(t)}$ to $S_{\gamma}^{*}(\balpha^{(t)})$ and $S^{*}(\balpha^{(t)})$. Firstly, due to the PL-condition and Lipschitz smoothness of $\cL_{\train}$, we have 
	\begin{equation}
		\small
		\begin{aligned}
			\cL_{\train}(\balpha^{(t)}, \bu_{k + 1}^{(t)}) - \cL_{\train}(\balpha^{(t)}, \bu_{k}^{(t)}) & \leq \left\langle\nabla_{\bw}\cL_{\train}(\balpha^{(t)}, \bu_{k}^{(t)}), \bu_{k + 1}^{(t)} - \bu_{k}^{(t)}\right\rangle + \frac{L}{2}\|\bu_{k + 1}^{(t)} - \bu_{k}^{(t)}\|^{2} \\
			& = -\frac{1}{2L}\|\nabla_{\bw}\cL_{\train}(\balpha^{(t)}, \bu_{k}^{(t)})\|^{2} \\
			& \leq -\frac{\mu}{L}\left(\cL_{\train}(\balpha^{(t)}, \bu_{k}^{(t)}) - \cL_{\train}(\balpha^{(t)}, \bw^{*}(\balpha^{(t)}))\right),
		\end{aligned}
	\end{equation}
	which implies 
	\begin{equation}\label{eq:u convergence}
		\small
		\begin{aligned}
			\|\bu_{K}^{(t)} - \bw^{*}(\balpha^{(t)})\|^{2} & \leq \frac{2}{\mu}\left(1 - \frac{\mu}{L}\right)^{K + E}\left(\cL_{\train}(\balpha^{(t)}, \bu_{0}^{(t)}) - \cL_{\train}(\balpha^{(t)}, \bw^{*}(\balpha^{(t)}))\right) \\
			& \leq \frac{2}{\mu}\left(1 - \frac{\mu}{L}\right)^{K + E}D \\
			& = \cO\left(\frac{1}{T}\right). 
		\end{aligned}
	\end{equation}
	due to the selection of $K$ and $E$ and the PL condition of $\cL_{\train}(\balpha, \bw)$ (which implies the error bound condition \citep{karimi2016linear}). Similarly, we can prove that 
	\begin{equation}\label{eq:w convergence}
		\small
		\begin{aligned}
			\|\bw_{K}^{(t)} - \bw^{*}_{\gamma}(\balpha^{(t)})\|^{2} & \leq \frac{2}{\mu_{\gamma}}\left(1 - \frac{\mu}{L}\right)^{K}\left(\cH_{\gamma}(\balpha^{(t)}, \bw_{K}^{(t)}) - \cH_{\gamma}(\balpha^{(t)}, \bw^{*}_{\gamma}(\balpha^{(t)}))\right) \\
			& \leq \frac{2}{\mu_{\gamma}}\left(1 - \frac{\mu_{\gamma}}{L_{\gamma}}\right)^{K}\left(1 + \gamma\right)D \\
			& = \cO\left(\frac{1}{T}\right). 
		\end{aligned}
	\end{equation}
	Combining \eqref{eq:penalized_step} \eqref{eq:distance gap}, \eqref{eq:u convergence}, and \eqref{eq:w convergence}, we have 
	\begin{equation}
		\cH_{\gamma}(\balpha^{(t + 1)}, \bw_{\gamma}^{*}(\balpha^{(t + 1)})) - \cH_{\gamma}(\balpha^{(t)}, \bw_{\gamma}^{*}(\balpha^{(t)})) \leq -\frac{1}{4\eta_{\balpha}}\left\|\balpha^{(t + 1)} - \balpha^{(t)}\right\|^{2} + \cO\left(\frac{\eta_{\balpha}\gamma^{2}}{T^{\frac{4}{3}}}\right),
	\end{equation}
	so that, by combining \eqref{eq:distance gap}, we get  
	\begin{equation}
		\small
		\begin{aligned}
		\frac{1}{T}\sum_{t=1}^{T}\frac{1}{\eta_{\balpha}^{2}}& \left\|\bar{\balpha}^{(t + 1)} - \balpha^{(t)}\right\|^{2} \leq \frac{1}{T}\sum_{t=1}^{T}\frac{2}{\eta_{\balpha}^{2}}\left[\left\|\balpha^{(t + 1)} - \balpha^{(t)}\right\|^{2} + \left|\|\bar{\balpha}^{(t + 1)} - \balpha^{(t)}\| - \|\balpha^{(t + 1)} - \balpha^{(t)}\|\right|^{2}\right] \\
		& \leq \frac{\cH_{\gamma}(\balpha^{(0)}, \bw_{\gamma}^{(*)}(\balpha^{(0)})) - \inf_{\balpha}\cH_{\gamma}(\balpha^{(0)}, \bw_{\gamma}^{(*)}(\balpha^{(0)}))}{\eta_{\balpha}T} + \cO\left(\frac{\gamma^{2}}{T}\right)  \\
		& = \cO\left(\frac{\gamma^{2}}{T}\right) \\
        & = \cO\left(T^{-\frac{1}{2}}\right).
		\end{aligned}
	\end{equation}
	Due to the definition of $\bar{\balpha}^{(t + 1)}$, combining this with \eqref{eq:pgd gap}, and the value of $\gamma$ proves our conclusion. 
\end{proof}





\section{Proof of the Proposition}
\label{sec:proof_trivil}

\setcounter{proposition}{0}
\begin{proposition}
Assume $\mathcal{L}^m_{\text{train}} = \mathcal{L}^m_{\text{val}}$, the uniform mixture ratio $\bar{\alpha}_m = \frac{1}{M}$ for $m = 1, \dots, M$ constitutes a valid solution of \eqref{eq:bi-level}.
\end{proposition}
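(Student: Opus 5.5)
The plan is a direct lower-bound argument that exhibits uniform weighting as attaining the minimum of the outer objective in \eqref{eq:bi-level}. Under the assumption $\cL^m_{\train}=\cL^m_{\val}$ for every $m$, the outer objective becomes $\cL_{\val}(\bw)=\sum_{m=1}^M\cL^m_{\train}(\bw)=M\,\cL_{\train}(\bar{\balpha},\bw)$, where $\bar{\balpha}=(1/M,\dots,1/M)$. So the validation loss is, up to the positive constant $M$, exactly the inner training objective evaluated at the uniform mixture.

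\textbf{Step 1 (lower bound).} For any $\balpha\in\cA$ and any feasible $\bw(\balpha)\in S^{*}(\balpha)$,
\[
\cL_{\val}(\bw(\balpha)) = M\,\cL_{\train}(\bar{\balpha},\bw(\balpha)) \ge M\min_{\bw}\cL_{\train}(\bar{\balpha},\bw).
\]
This holds because $\bw(\balpha)$ is just one particular point, so its value cannot be below the global minimum. The right-hand side does not depend on $\balpha$, so it lower-bounds the outer objective over the whole simplex.

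\textbf{Step 2 (attainment).} Take $\balpha=\bar{\balpha}$ and any $\bw(\bar{\balpha})\in S^{*}(\bar{\balpha})=\arg\min_{\bw}\cL_{\train}(\bar{\balpha},\bw)$. Then
\[
\cL_{\val}(\bw(\bar{\balpha}))=M\min_{\bw}\cL_{\train}(\bar{\balpha},\bw),
\]
which meets the bound from Step 1. Hence $\bar{\balpha}$ attains the minimum of \eqref{eq:bi-level}, i.e.\ it is a valid solution. As a side remark, $\bw(\bar{\balpha})$ simultaneously minimizes the validation loss over all $\bw$, which is the intuitive content of the proposition.

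\textbf{Main subtlety.} The delicate point is how to handle set-valued $S^{*}(\balpha)$ when the inner minimizer is not unique. Step 2 works for every element of $S^{*}(\bar{\balpha})$, and Step 1 works for every element of every $S^{*}(\balpha)$. The conclusion therefore holds under both the optimistic and pessimistic readings of \eqref{eq:bi-level}, including the $\cH(\balpha)=\inf_{\bw\in S^{*}(\balpha)}\cL_{\val}(\bw)$ convention used in Appendix~\ref{sec:convergence}. I would also note that existence of a minimizer of $\cL_{\train}(\bar{\balpha},\cdot)$ is implicitly assumed, as it is throughout the paper via $S^{*}$. Beyond this, no step should pose real difficulty. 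The result only says $\bar{\balpha}$ is \emph{a} solution, not the unique one, so no uniqueness argument is required.
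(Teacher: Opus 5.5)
Your proposal is correct and takes essentially the same route as the paper. Under $\mathcal{L}^m_{\text{train}}=\mathcal{L}^m_{\text{val}}$, the outer objective is $M$ times the uniformly weighted inner loss, so comparing $\bw^{*}(\bar{\balpha})$ against the particular point $\bw^{*}(\balpha)$ via inner-level optimality gives the required inequality. Your explicit check that both steps hold for every element of the possibly set-valued $S^{*}(\cdot)$ is a small piece of care the paper leaves implicit.
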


\begin{proof}
Let $\mathcal{L}^m_{\text{train}} = \mathcal{L}^m_{\text{val}} = \mathbb{E}_{\text{data}}[\mathcal{L}^m_{\text{data}}] := \mathcal{L}^m$. To establish the above, it suffices to show:
\begin{equation*}
\sum^M_{m=1} \mathcal{L}_{\text{val}}^m(\boldsymbol{w}^{\ast}(\bar{\boldsymbol{\alpha}})) \leq \sum^M_{m=1} \mathcal{L}_{\text{val}}^m(\boldsymbol{w}^{\ast}(\boldsymbol{\alpha}))
\end{equation*}
which is equivalent to:
\begin{equation} \label{eq:inequality1}
    \sum^M_{m=1} \bar{\alpha}_m \mathcal{L}^m(\boldsymbol{w}^{\ast}(\bar{\boldsymbol{\alpha}})) \leq \sum^M_{m=1} \bar{\alpha}_m \mathcal{L}^m(\boldsymbol{w}^{\ast}(\boldsymbol{\alpha})) 
\end{equation}
Because $\boldsymbol{w}^{\ast}(\bar{\boldsymbol{\alpha}})$ is the minimizer of the inner-level problem under uniform weighting, we have:
\begin{align}
    \sum^M_{m=1} \bar{\alpha}_m \mathcal{L}^m(\boldsymbol{w}^{\ast}(\bar{\boldsymbol{\alpha}})) \leq \sum^M_{m=1} \bar{\alpha}_m \mathcal{L}^m(\boldsymbol{w})
\end{align}
for any $\boldsymbol{w}$. In particular, by choosing $\boldsymbol{w} = \boldsymbol{w}^{\ast}(\boldsymbol{\alpha})$, the right-hand side becomes the expression in~\eqref{eq:inequality1}, completing the proof.
\end{proof}

\section{Implementations and Hyper-parameters}
\label{sec:hyper_params}
\begin{table}[]
\caption{Hyper-parameters of TANDEM for different application scenarios}
\label{tab:hyper_params}
\centering
\small
\begin{tabular}{@{}lccccc@{}}
\toprule
                                           & Data Aundent  & \multicolumn{3}{c}{Data Restricted}                         & Supervised Fine-tuning \\ \cmidrule(l){2-6} 
                                           & GPT-like 160M & 160M   & \multicolumn{1}{c}{410M} & \multicolumn{1}{c}{1B}  & Qwen2-0.5B             \\ \midrule
Batch Size                                 & 8             & 128    & 128                      & 128                     & 32                     \\
Learning Rate $\eta_{\boldsymbol{u}}^E$      & 5e-5          & 5e-4   & 5e-4                    & 5e-4                    & 4e-6                   \\
Learning Rate $\eta_{\boldsymbol{u}}^K$      & 1e-2          & 1e-2   & 1e-2                    & 1e-2                    & 1e-2                   \\
Learning Rate $\eta_{\boldsymbol{w}}^K$      & 1e-2          & 1e-2   & 1e-2                    & 1e-2                    & 1e-2                   \\
Learning Rate $\eta_{\boldsymbol{\alpha}}$ & 2e-3          & 4e-3   & 4e-3                    & 4e-3                    & 4e-3                   \\
Learning Rate Scheduler                    & Cosine        & Cosine & Cosine                   & Cosine                  & Cosine                 \\
Penalty $\gamma$                           & 1.0           & 1.0    & 1.0                      & 1.0                     & 1.0                    \\
Probing Steps K                            & 5             & 5      & 5                        & 5                       & 10                     \\
Free Training Steps E                      & 20            & 5      & 5                        & 5                       & 10                     \\
Total Steps (w.r.t  $\boldsymbol{u}$)      & 40000         & 5000   & 5000                     & 5000                    & 5000                   \\
Context Length                             & 2048          & 512    & \multicolumn{1}{c}{512}  & \multicolumn{1}{c}{512} & 512                    \\
Weight Decay                               & 1e-2          & 1e-2   & 1e-2                    & 1e-2                    & 1e-2                   \\
Gradient Clipping                          & 1.0           & 1.0    & 1.0                      & 1.0                 & 1.0                       \\ \bottomrule
\end{tabular}
\end{table}

\paragraph{Implementation Details} We use the gradient descent optimizer for the $K$ step $\boldsymbol{\boldsymbol{u}}$ and $\boldsymbol{\boldsymbol{w}}$ update. Their learning rates $\eta_{\boldsymbol{u}}^K$ and $\eta_{\boldsymbol{w}}^K$ are kept the same so that the domain-wise loss difference $\mathcal{L}_{\text {train }}^m\left(\boldsymbol{w}_K\right)-\mathcal{L}_{\text {train }}^m\left(\boldsymbol{u}_K\right)$ faithfully reflects the gain of incorporating the additional validation data. For the $E$ step $\boldsymbol{\boldsymbol{u}}$ update, we utilize the Adam optimizer for fast training. 

\vspace{-0.5ex}
\paragraph{Hyper-parameter settings} The detailed hyper-parameters for the TANDEM algorithms in the conventional data-abundant pretraining scenario, the data-restricted training scenario, and the supervised fine-tuning are shown in Table~\ref{tab:hyper_params}. 

\section{Supervised Learning Data Statistics}
\label{sec:sft_statistics}
In the supervised fine-tuning case, we use 6 major categories from Natural Instructions~\cite{mishra2022crosstask,wang2022supernatural}: Textual Entailment, Answer Verification, Text Matching, Information Extraction, Word Extraction, and Text Categorization, each constitutes a task cluster. This corpus comprises 99 tasks ranging from open-ended text generation, multiple choice, and True/False tasks. The statistics are given in Table~\ref{tab:sft_statistics}.

\begin{table}
\makeatletter\def\@captype{table}\makeatother
\setlength{\tabcolsep}{3 pt}{
\begin{minipage}{0.5\textwidth}
\centering
\small
\captionsetup{margin=0.00cm}
\caption{Statistics of the SFT data.} 
\label{tab:sft_statistics}
\begin{tabular}{@{}lc@{}}
\toprule
Method                 & Num. Sample \\ \midrule
Textual Entailment     & 79332       \\
Answer Verification    & 13195       \\
Text Matching          & 47297       \\
Information Extraction & 31053       \\
Word Extraction        & 17294       \\
Text Categorization    & 89572       \\ \bottomrule
\end{tabular}
\end{minipage}
\hspace{1ex}
\makeatletter\def\@captype{table}\makeatother
\begin{minipage}{0.5\textwidth}
\centering
\small
\captionsetup{margin=0.00cm}
\caption{Statistics of the task-level SFT data.}
\label{tab:task_level_statistics}
\begin{tabular}{@{}lc@{}}
\toprule
Method                 & Num. Sample \\ \midrule
SQuAD1.1  & 6498       \\
AMRSum    & 6500       \\
MuTual    & 6500       \\
SemEval   & 5996       \\
SST2      & 6495       \\
BoolQ     & 6500       \\ \bottomrule
\end{tabular}
\end{minipage}
}
\end{table}

\begin{table}[]
\caption{Comparison for the 500M Qwen-2 model on a mixture of 6 tesk-level SFT tasks. 
}
\label{tab:task_level_sft_500M}
\centering
\small
\setlength\tabcolsep{2.5 pt}
\begin{tabular}{@{}lcccccccc@{}}
\toprule
Method   & SQuAD1.1 & AMRSum & MuTual & SemEval & SST2  & BoolQ & Test Loss $\downarrow$ & Avg. Metric $\uparrow$ \\ \midrule
Uniform  & 72.62    & 45.18 & 72.72  & \textbf{89.75}   & 87.75 & 80.29 & 0.591 & 74.72  \\
DoReMi   & 71.40    & 43.40 & 70.97  & 88.50   & 87.00 & 77.43 & 0.686 &  73.11   \\
DoGE     & 71.26    & 44.81 & 71.67  & 89.00   & 88.30 & \textbf{81.04} & 0.563 & 74.35    \\
Skill-It & 72.14    & 44.21 & 73.07  & 89.40   & 88.40 & 80.34 & 0.539 & 74.60   \\
Aioli    & 72.35    & 45.01 & 73.57  & 89.10  & 88.10 & 79.64 & 0.542 & 74.63   \\
TANDEM   & \textbf{72.73}   & \textbf{45.19} & \textbf{73.77}  & 89.70   & \textbf{88.70} & 80.04 & \textbf{0.508} & \textbf{75.03}  \\ \bottomrule
\end{tabular}
\end{table}
Besides, we delve into a more fine-grained task-level SFT case, and select 6 tasks SQuAD1.1, AMRSum, MuTual, SemEval, SST2, and BoolQ. The statistics are given in Table~\ref{tab:task_level_statistics}. For the text generation tasks SQuAD1.1 and AMRSum, we report the Rouge-L~\cite{lin2004rouge} score. For the multi-choice tasks (MuTual, SemEval) and Yes/No tasks (SST2, BoolQ), we focus on the accuracy. The test loss is reported as well. We experiment with Qwen2-500M with $K$ = 20, $E$ = 10, batch size 32, and context length 512 for 2000 steps. The result is shown in Table~\ref{tab:task_level_sft_500M}. In this case, the improvement in test loss is more significant than that in the final evaluation metrics, likely due to the imperfect alignment between them.

\section{Comparison with Standard Deviation}
\label{sec:appendix_std}
We test each method in Table~\ref{tab:sota_slimpajama_160M}, Table~\ref{tab:sota_slimpajama_modelsize} 3 times. The averaged perplexity and standard deviation are reported in Table~\ref{tab:sota_slimpajama_modelsize_wstd}. 

\begin{table}[]
\caption{Comparison of models of different sizes in the data-abundant pretraining scenario and data restricted scenario with standard deviation. }
\label{tab:sota_slimpajama_modelsize_wstd}
\centering
\begin{tabular}{@{}lcccc@{}}
\toprule
\multirow{2}{*}{} & Data Aundant Regime            & \multicolumn{3}{c}{Data Restricted Regime}                                \\ \cmidrule(l){2-5} 
                  & \multicolumn{1}{c|}{160M Avg.} & \multicolumn{1}{c|}{160M Avg.} & \multicolumn{1}{c|}{410M Avg.} & 1B Avg. \\ \midrule
Uniform           & 25.74$_{\pm\text{0.13}}$   &          31.53$_{\pm\text{0.11}}$                      &   29.59$_{\pm\text{0.02}}$                &    29.91$_{\pm\text{0.09}}$     \\
DoReMi            & 28.32$_{\pm\text{0.12}}$                                &       36.91$_{\pm\text{0.09}}$                         &     54.61$_{\pm\text{0.60}}$                            &    56.53$_{\pm\text{0.53}}$     \\
DoGE              & 26.60$_{\pm\text{0.21}}$                               &    30.10$_{\pm\text{0.05}}$                            &      27.45$_{\pm\text{0.02}}$                          &    -     \\
Skill-It          & 25.87$_{\pm\text{0.07}}$                              &     29.24$_{\pm\text{0.08}}$                           &      27.70$_{\pm\text{0.03}}$                           &    27.15$_{\pm\text{0.17}}$     \\
Aioli             & 25.66$_{\pm\text{0.14}}$                               &      31.19$_{\pm\text{0.25}}$                          &     28.79$_{\pm\text{0.06}}$                           &     28.07$_{\pm\text{0.03}}$    \\
TANDEM            & \textbf{25.43}$_{\pm\text{0.15}}$                               &      \textbf{28.07}$_{\pm\text{0.07}}$                          &                  \textbf{25.00}$_{\pm\text{0.01}}$              &     \textbf{24.35}$_{\pm\text{0.03}}$    \\ \bottomrule
\end{tabular}
\end{table}

\section{Additional Experiments}
\label{sec:additional_exps}

\subsection{Generalization Gap Analyses}
\label{sec:generalization_gap_analyses}

\begin{figure}[t]
\centering
\begin{minipage}{0.325\linewidth}
    \centering
    \subfigure{
    \label{fig:gengap_data_aboundant_pretrain}
    \includegraphics[width=1.0\textwidth]{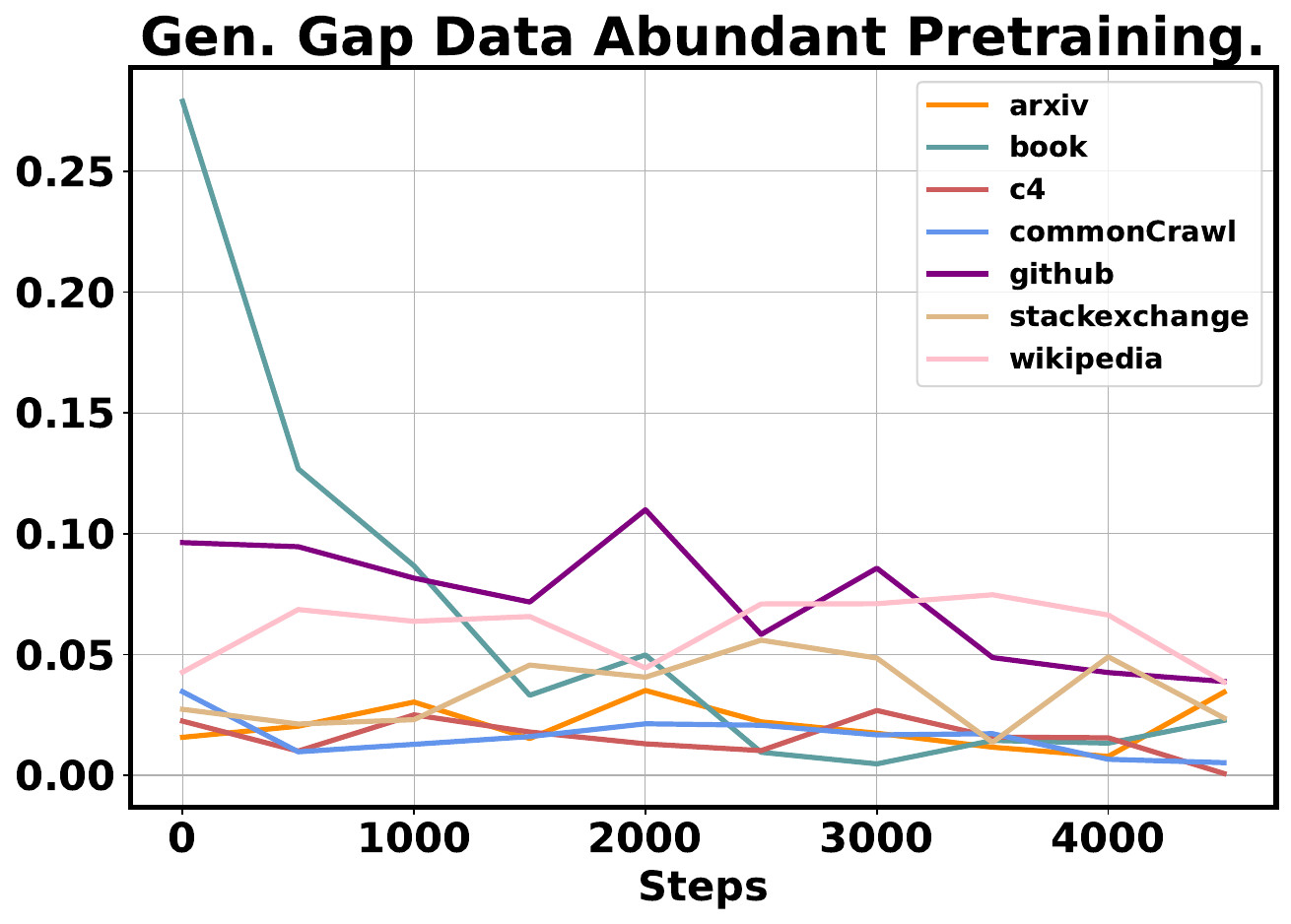}}
\end{minipage}
\begin{minipage}{0.325\linewidth}
    \centering
    \subfigure{	
    \label{fig:data_scarce_pretrain}
    \includegraphics[width=0.98\linewidth]{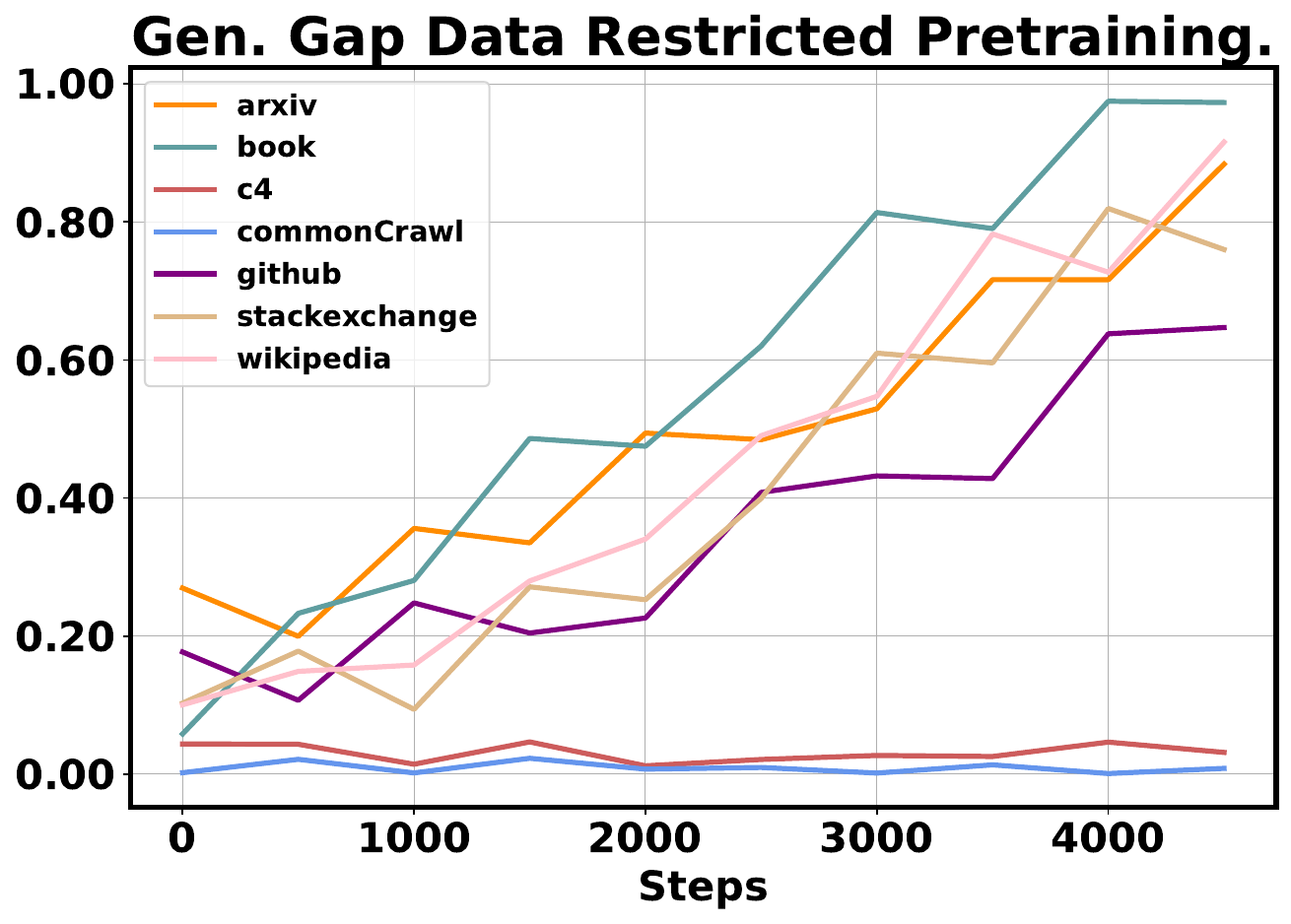}}
\end{minipage}
\begin{minipage}{0.325\linewidth}
    \centering
    \subfigure{	
    \label{fig:instruction_tuning}
    \includegraphics[width=1.0\linewidth]{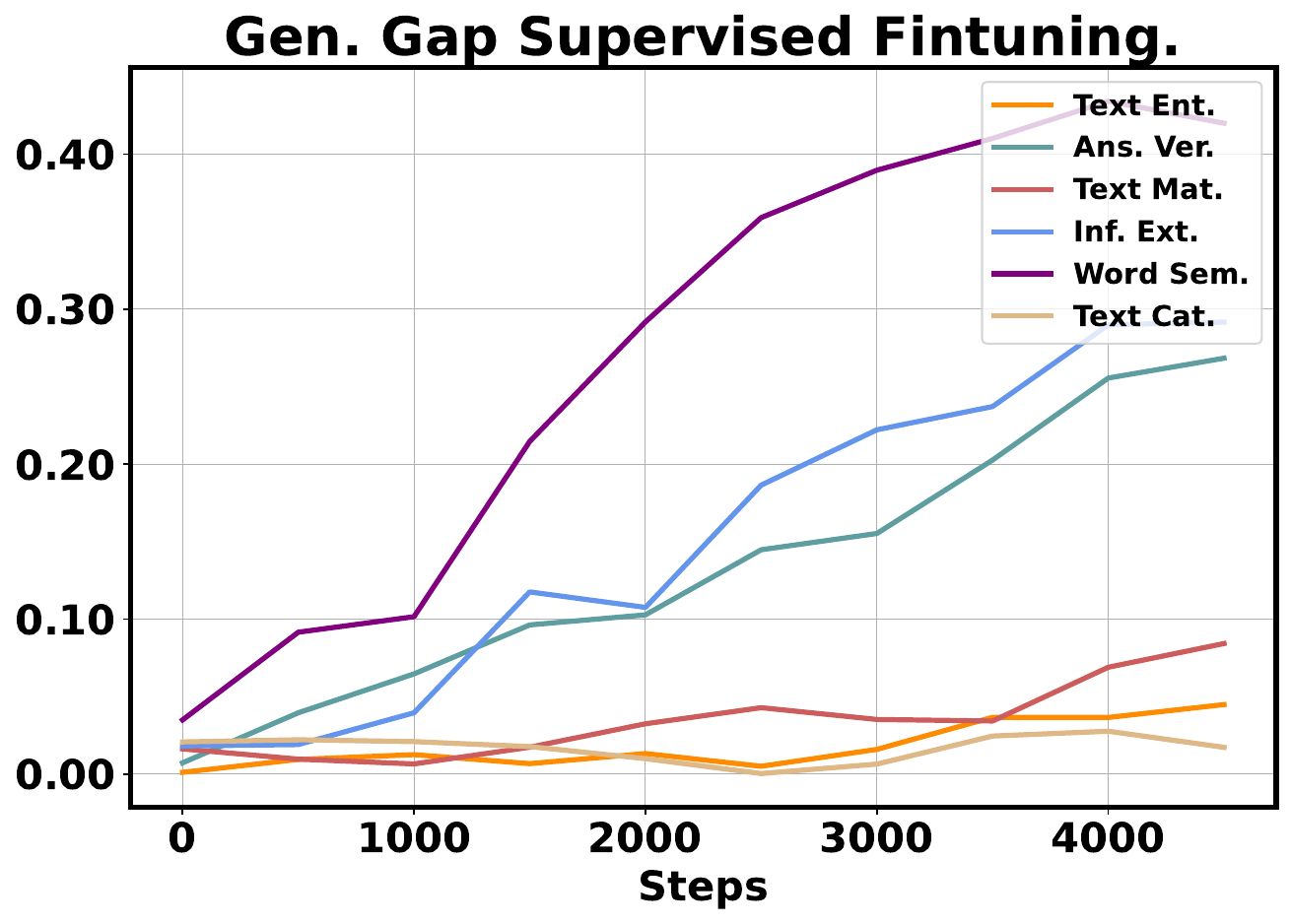}}
\end{minipage}
\caption{Step-wise generalization gap $\left|\mathcal{L}_{\text {val}}^m-\mathcal{L}_{\text {train }}^m\right|$ evolution under three scenarios. (a) data-abundant pretraining  (b) data-restricted pretraining and (c) supervised fine-tuning. }
\label{fig:generalization_gap_evolve}
\end{figure}

To validate our analysis in Section~\ref{sec:trivil_solution}, we plot the change of generalization gap $\left|\mathcal{L}_{\text {val}}^m-\mathcal{L}_{\text {train }}^m\right|$ during training under the three circumstances~Figure~\ref{fig:generalization_gap_evolve}. In the data-abundant scenario, the generalization gaps of each domain do not change much and are kept close to 0. According to the analysis in Section \ref{sec:trivil_solution}, Uniform is a valid solution in this scenario. In the data-restricted scenario, we see an increase in the generalization gap, particularly on small domains (Arxiv, Book, Wikipedia), because the sample repetition kicks in. In the large domains (C4, CommonCrawl) where there is no data repetition, the generalization remains small. SFT is of the same case. Generalization gap increase in small tasks (Word Sementics, Information Extraction), while kept small for large tasks (Text Categorization, Text Matching, Textual Entailment). The results on all three scenarios validate our analysis in Section~\ref{sec:trivil_solution}.

\subsection{Larger Scale Experiments}
To validate the effectiveness of TANDEM in practice, we conduct experiments to test its performance under more realistic conditions. This includes scaling up both the dataset size and the model size.

\paragraph{Larger Data Scale} 
\begin{table}[]
\caption{Comparison for larger data scale (6B tokens).}
\label{tab:past_chinchilla}
\begin{tabular}{@{}lllllllll@{}}
\toprule
        & Arxiv & Book  & C4    & CC    & Github & Stackexchange & Wikipedia & Average \\ \midrule
Uniform & 10.28 & 32.37 & 38.53 & 34.57 & 5.86   & 10.76         & 15.26     & 17.09   \\
TANDEM  & 10.26 & 29.59 & 32.52 & 29.52 & 6.01   & 10.68         & 16.02     & 16.25   \\ \bottomrule
\end{tabular}
\end{table}

We train the 160M models on the full 6B version of sampled SlimPajama, which constitutes a more practical past Chinchilla~\cite{hoffmann2022chhinchilla}~(Chinchilla optimal requires $\approx$3.2B data for the 160M model) case. From Table~\ref{tab:past_chinchilla}, we see that TANDEM still outperforms Uniform, though it seems that the improvement ($\sim$ 1) in this case is not as significant as in the 300M data case ($\sim$ 3.5). As the training proceeds and the perplexity goes down, further gains become inherently harder to achieve. This result is still significant, showing that careful mixing still matters in this "over-trained" case.

\paragraph{Larger Model Scale}
\begin{table}[]
\caption{Comparison for larger (3B) model in the data restricted regime.}
\label{tab:larger_model_drs}
\centering
\small
\begin{tabular}{@{}lllllllll@{}}
\toprule
        & Arxiv & Book  & C4    & CC    & Github & Stackexchange & Wikipedia & Average \\ \midrule
Uniform & 17.20 & 53.83 & 60.14 & 55.45 &  9.85  &  22.08    &  41.76    &  31.03  \\
TANDEM  & 15.73 & 42.85 & 43.60 & 39.59 &  7.49  &  17.26    &  29.20    &  23.81  \\ \bottomrule
\end{tabular}
\end{table}

\begin{table}[]
\caption{Comparison for the larger (3B) model in the supervised fine-tuning.}
\label{tab:larger_model_sft}
\centering
\small
\setlength\tabcolsep{2.5 pt}
\begin{tabular}{@{}lcccccccc@{}}
\toprule
Method   & Textual Ent. & Answer Ver. & Text Mat. & Inf. Ext. & Word Sem.  & Text Cat. & Avg. Metric $\uparrow$ & Test Loss $\downarrow$\\ \midrule
Uniform  & 91.9    & 74.8 & 88.8  &  79.2  & 88.8 & 85.5 & 84.9 &  0.189  \\
TANDEM   & 92.4    & 76.9 & 89.6  &  80.1  & 88.0 & 87.9 & 85.8  &  0.174 \\ \bottomrule
\end{tabular}
\end{table}

We conduct DMO with 3B models in the data-restricted scenario (Pythia 2.9B) and SFT (Qwen-2.5-3B). Further scaling up necessitates engineering upgrades to fit the two models ($\boldsymbol{u}$ and $\boldsymbol{w}$) within the 80GB memory, while also demanding substantial time and computational resources. We skip the data-abundant scenario where Uniform is already a valid solution. From Table \ref{tab:larger_model_drs}, we see that as the model goes larger, inappropriate mixture ratios (Uniform) lead to more obvious negative outcomes(e.g. severe overfitting on the overly sampled small domains). While TANDEM consistently generates proper mixture ratios. For SFT (Table~\ref{tab:larger_model_sft}), TANDEM still outperforms the Uniform baseline, showing its effectiveness.

\subsection{Sensitivity of $K$ and $E$}
\label{sec:sensitivity}

\begin{table}
\makeatletter\def\@captype{table}\makeatother
\setlength{\tabcolsep}{3 pt}{
\begin{minipage}{0.5\textwidth}
\centering
\captionsetup{margin=0.00cm}
\caption{The effect of $K$}
\label{tab:K_sensitive}
\begin{tabular}{@{}lllll@{}}
\toprule
           & k=1   & k=3      &k=5   & k=10  \\ \midrule
Perplexity & 29.57 & 28.31    &28.07 & 28.05 \\ \bottomrule
\end{tabular}
\end{minipage}
\makeatletter\def\@captype{table}\makeatother
\begin{minipage}{0.5\textwidth}
\centering
\captionsetup{margin=0.00cm}
\caption{The effect of $E$}
\label{tab:E_sensitive}
\begin{tabular}{@{}llll@{}}
\toprule
           & $E$=1   & $E$=5       & $E$=10  \\ \midrule
Perplexity & 27.99 & 28.05     & 28.56 \\ \bottomrule
\end{tabular}
\end{minipage}
}
\end{table}
\paragraph{$K$:} $K$ is the number of probing steps used to estimate the proper update direction (\eqref{eq:updata_u} $\sim$ \eqref{eq:update_alpha}). We conduct a sensitivity analysis with $K = 1, 3, 5, 10$ to see how it will affect the final model performance. From Table~\ref{tab:K_sensitive}, we see that too small $K$ may results in less fidel $\boldsymbol{\alpha}$ update direction, thus sub-optimal data mixture ratio and higher perplexity. When $K$ is large enough for $\boldsymbol{\alpha}$ update probing,  increasing $K$ will not induce further benefits. In our experiments, for relatively stable pretraining, we use a smaller $K=5$ in the data-abundant and data-restricted scenarios. The gradient variance in SFT is much higher, so we use larger $K=10$.

\paragraph{$E$:} Given the total number of training steps (amounts of data to be used), $E$ determines the number of updates during DMO: $T = \frac{total\  steps}{E}$. We test $E=1,5,10$ in the 160M data-restricted scenario. From Table~\ref{tab:E_sensitive} we see that TANDEM is not sensitive to $E$ as long as $T$ is large enough so that $\boldsymbol{\alpha}$ is sufficiently updated. In our experiments, for the data-abundant scenario, we chose $E=20$ so that the mixture ratio is updated for $T = 2000$ steps. As one $\boldsymbol{\alpha}$ update requires additional $K$ steps $\boldsymbol{w}$ updates, a larger $T$ means higher additional computational cost. To reduce the computational overhead, we set $E$ in the data-restricted scenario and SFT to ensure $T=1000$ and $T=500$ respectively. 

\subsection{Downstream Tasks}
\begin{table}[]
\caption{Downstream evaluation after training on SlimPajama in the data-restricted scenario.}
\label{tab:down_stream}
\small
\begin{tabular}{@{}lcccccccc@{}}
\toprule
Method   & ARC-C & ARC-E & BoolQ & HellaSwag & LAMBADA & PiQA  & WinoGrande & Average \\ \midrule
Uniform  & \textbf{25.76} & 19.58 & 60.70 & 18.62     & 9.88    & \textbf{45.43} & 50.04      & 32.85   \\
DoReMi   & 21.36 & 19.58 & 60.61 & 11.66     & 6.44    & 42.94 & 49.57      & 31.06   \\
DoGE     & 22.03 & 24.87 & 51.35 & 18.50     & 10.07   & 44.45 & 47.28      & 31.39   \\
Skill-it & 18.98 & \textbf{26.10} & 50.82 & 24.99     & \textbf{11.47}   & 43.36 & 50.59      & 31.40   \\
Aioli    & 20.34 & 19.58 & \textbf{61.71} & 19.59     & 10.89   & 44.18 & 49.41      & 32.41   \\
TANDEM   & 20.34 & 20.81 & 57.55 & \textbf{25.59}     & 11.20   & 40.53 & \textbf{51.07}      & \textbf{32.92}   \\ \bottomrule
\end{tabular}
\end{table}
We also evaluate the 160M model pre-trained with SlimPajama data on ARC-C, ARC-E, BoolQ, HellaSwag, LAMBADA, PiQA and WinoGrande using the Language Model Evaluation Harness~\cite{gao2024evalharness}. From Table~\ref{tab:down_stream}, we see that TANDEM outperforms all the baselines, validating its effectiveness.

\subsection{The Effectiveness of the Proxy Model}

\begin{table}[]
\caption{The effectiveness of the proxy model}
\label{tab:proxy_model}
\centering
\small
\begin{tabular}{@{}lcll@{}}
\toprule
Method              & Data Abundant (Perpl.) & Data Restricted (Perpl.) & SFT (Acc.) \\ \midrule
Uniform             &        25.74           &       31.53              &     74.72
\\
Online (Proxy)              &        25.72           &        29.63             &     74.49     \\
Two-Stage (default) &        25.43           &        28.07             &     75.03       \\ \bottomrule
\end{tabular}
\end{table}

It might be expected that the online-trained proxy model could also perform well. We evaluate the proxy model $\boldsymbol{u}$, which has been trained with adaptive domain weights $\boldsymbol{\alpha}^{(t)}$ in all the three scenarios, and compare it to the default two-stage (DMO and then train) trained model. Counter-intuitively, the performance of the online trained model falls behind the default two-stage trained model as well as the uniform baseline (Table~\ref{tab:proxy_model}). This result coincides with the findings in previous works, e.g., DoReMi and DoGE. We hypothesize that the frequent change of data distribution deteriorates the training process.

\subsection{A priori Trained Reference model Restricts DoReMi}

\begin{table}[]
\caption{DoReMi is restricted by the a priori trained reference model}
\label{tab:analysis_doremi}
\centering
\small
\begin{tabular}{@{}lcll@{}}
\toprule
Method              &  160M & 410M & 1B \\ \midrule
Uniform             &   31.53     &     29.59        &  29.91
\\
Natural              &  30.97     &     27.82        &  27.30     \\
DoReMi-Natural       &  30.83     &     27.26        &  26.07   \\
TANDEM               &  \textbf{28.07}   &  \textbf{25.00}   &  \textbf{24.35}       \\ \bottomrule
\end{tabular}
\end{table}
In the experiments, DoReMi~\cite{xie2023doremi} doesn't perform well in the data-restricted scenarios. We hypothesize that the performance of DoReMi is restricted by the a priori trained reference model. Following previous works, we train the reference model on training data obtained using the Uniform strategy. This reference model, however, is not well-suited for the data-restricted scenario, as the many small domains overfit and cannot provide faithful signals for DMO. For a more comprehensive comparison, we train another reference model with the natural data mixture ratio (See Figure~\ref{fig:slim_pajama_statistics}, the data statistics). This setting ensures no severe overfitting happens. The result in Table~\ref{tab:analysis_doremi} validates our hypothesis. 


\section{Visualization of the Optimized Mixture Ratios.}
\label{sec:visualization}
We visualize the mixture ratio learned in each application scenario in Figure~\ref{fig:final_alpha_abundant_pt}, Figure~\ref{fig:final_alpha_scarce_pt_diffsize_pt} and Figure~\ref{fig:final_alpha_sft}. For the baselines, the average proportion over the entire training trajectory is taken. While for TANDEM, we use the average mixture ratio at the last 10$\%$ training trajectory.

\begin{figure}[h]
	\centering
	\includegraphics[width=0.6\textwidth]{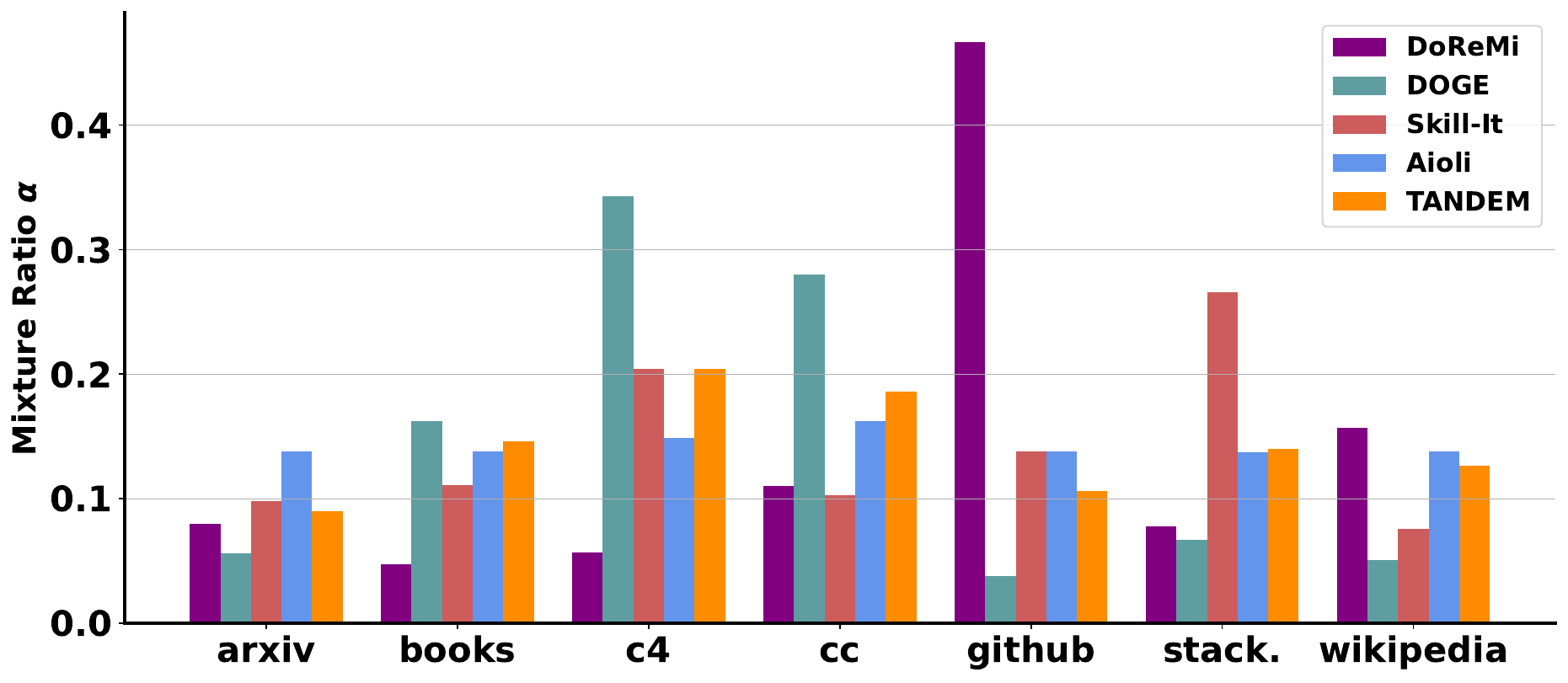}
	\caption{Mixture ratio learned by different methods in the data-abundant pretraining.}
	\label{fig:final_alpha_abundant_pt} 
\end{figure}

\begin{figure}[h]
    \small
    \centering
	\subfigure[160M Model] {   
		\includegraphics[width=0.6\textwidth]{figs/final_alpha_scarce_pt.pdf} 
	} 
	\subfigure[410M Model] {   
	\includegraphics[width=0.6\textwidth]{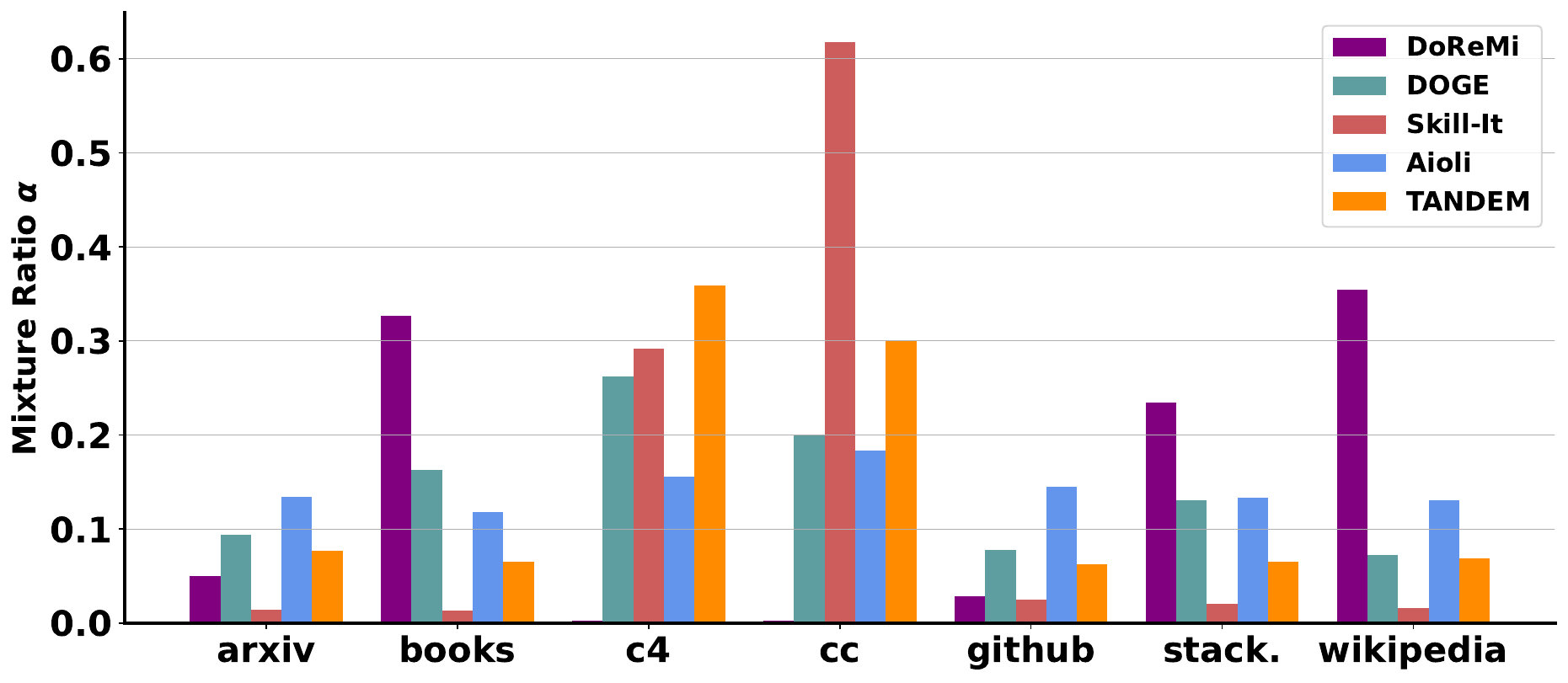}
	}
        \subfigure[1B Model] {   
		\includegraphics[width=0.6\textwidth]{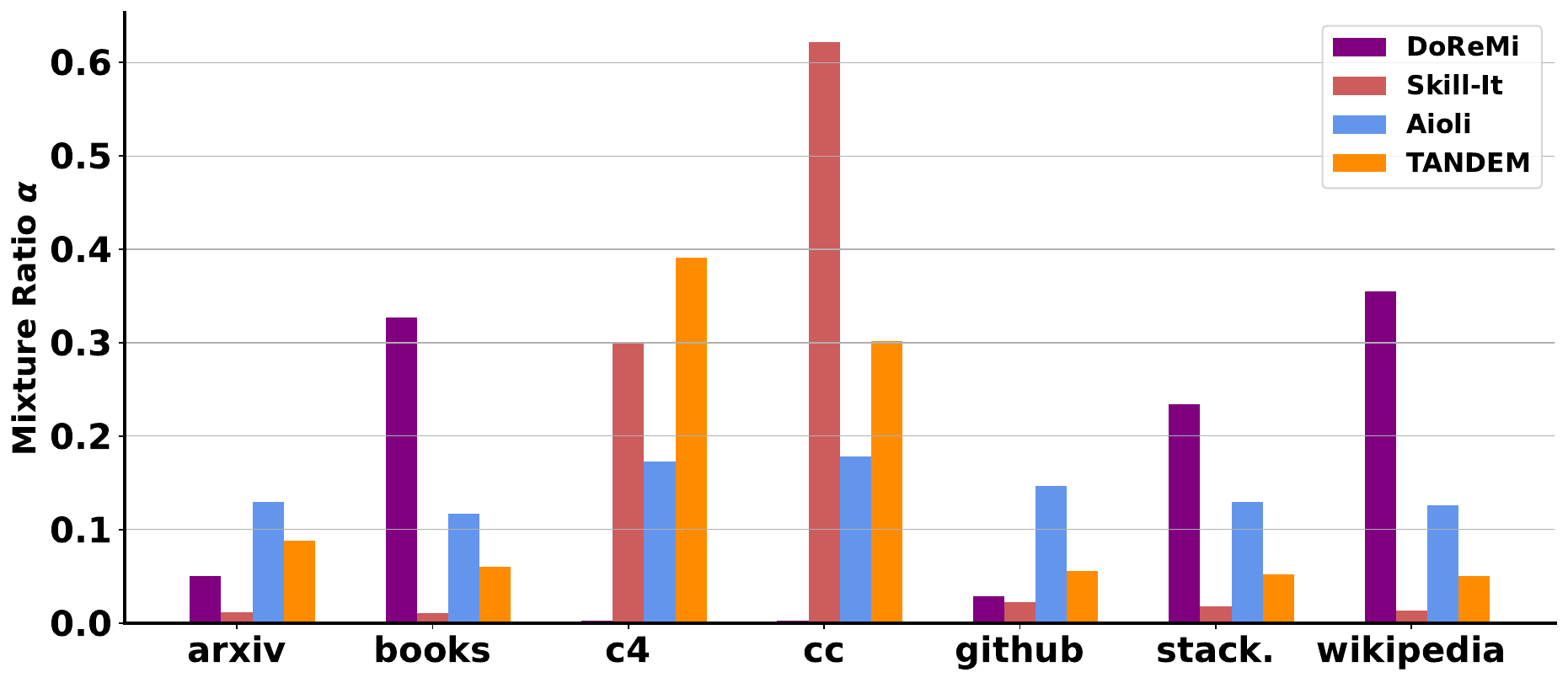}
	}
	\caption{Mixture ratio learned by different methods in the data-restricted pretraining.}
    \label{fig:final_alpha_scarce_pt_diffsize_pt}
\end{figure}

\begin{figure}[h]
	\centering
	\includegraphics[width=0.6\textwidth]{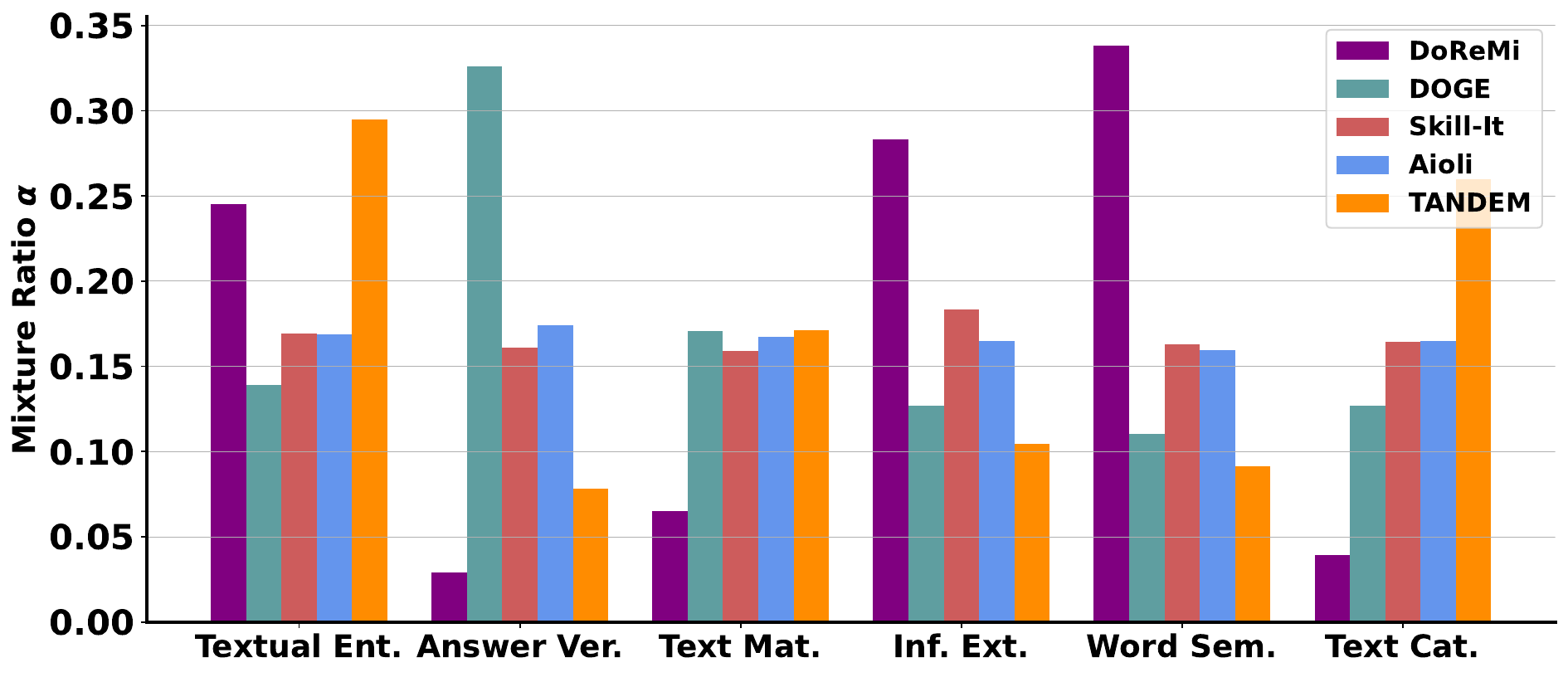}
	\caption{Mixture ratio learned by different methods in the supervised fine-tuning.}
	\label{fig:final_alpha_sft} 
\end{figure}

\newpage
\section{Summary of Notations}
\label{sec:summary_of_notations}

\begin{table}[H]
\caption{Summary of the notations used throughout this paper. Variables only used in theoretical analysis are grayed for better readability. }
\setlength\tabcolsep{2.5 pt}
\begin{tabular}{@{}lll@{}}
\toprule
Topic           & Notation          & Explanation             \\ \midrule
Data Sets       & $M$               & The number of domains.           \\
                & $\cD_m$           & The $m$-th domain data.           \\
                & $\cD_{\text{train}}$    & Train set.               \\
                & $\cD_{\text{val}}$      & Validation set.                                                                \\ \midrule
Models \& Parameters & $\bu$             & Parameters of the proxy model.                                                                \\
                      & $\bw$                                                           & Parameters of the reference model.                                                  \\
\rowcolor{gray!40}    & $\bw^{*}$                                                       & Optimal solution for the lower level problem.                                                  \\
\rowcolor{gray!40}    & $\bw_{\gamma}^*$     & Optimal solution for the penalized problem.                                             \\
\rowcolor{gray!40}   & $\bS^*(\boldsymbol{\alpha})$                                                               & Solution set for the  lower level problem. \\
\rowcolor{gray!40}   & $\bS_\gamma^*(\boldsymbol{\alpha})$                                                               & Solution set for the  penalized problem.                                           \\
                     & $\balpha$                                                    & Data mixture ratio                                           \\
                     & $\cA$                                                    & The probability simplex.                                           \\ \midrule
Problems \& Losses        & $\mathcal{L}_{\text{train}}^m$        &  Train loss on the $m$-th domain.      \\  

                                   &  $\mathcal{L}_{\text{val}}^m$        &    Validation loss on the $m$-th domain.       \\
                                   &  $\mathcal{L}_{\text{train}}$        &   Overall train loss weighted by the mixture ratio $\balpha$.  \\
                                   &  $\mathcal{L}_{\text{val}}$        &   Overall validation loss.  \\
                                   &  $\cH(\balpha)$        &   The upper-level loss with the lower-level problem optimized.  \\
                                   &  $\cH_{\gamma}(\balpha, \bw)$        &   The loss of the penalized problem.  \\
                                   \midrule
\rowcolor{gray!40} Function Properties        & $\mu$     & PL coefficient of the lower-level problem.        \\
\rowcolor{gray!40}           & $\mu_{\gamma}$                                                          &  PL coefficient of the penalized problem. \\ 
\rowcolor{gray!40}           & $L$                                                          &  Lipschitz constant for $\nabla_{\boldsymbol{w}} \mathcal{L}_{\text {train }}(\boldsymbol{\alpha}, \boldsymbol{w})$ on $\bw$. \\
\rowcolor{gray!40}           & $B$                                                          &  Lipschitz constant for $\mathcal{L}_{\text {train }}(\boldsymbol{\alpha}, \boldsymbol{w})$ and $\mathcal{L}_{\text {val }}(\boldsymbol{w})$ on $\bw$. \\
\rowcolor{gray!40}           & $\lambda$ and $\rho$                                                          &  Hessian $\nabla^2_{\boldsymbol{w} \boldsymbol{w}} \mathcal{L}_{\text {train }}(\boldsymbol{\alpha}, \boldsymbol{w}) \succeq \lambda$ and $\nabla_{\boldsymbol{\alpha} \boldsymbol{w}}^2 \mathcal{L}_{\text {train }}(\boldsymbol{\alpha}, \boldsymbol{w}) \preceq \rho$ \\
\rowcolor{gray!40}           & $H$                                                          &  Lipschitz constant for $\nabla_{\boldsymbol{\alpha} \boldsymbol{w}}^2 \mathcal{L}_{\text {train }}(\boldsymbol{\alpha}, \boldsymbol{w})$ and $\nabla_{\boldsymbol{w} \boldsymbol{w}}^2 \mathcal{L}_{\text {train }}(\boldsymbol{\alpha}, \boldsymbol{w})$. \\
\rowcolor{gray!40}           & $D$                                                          &  Upper bound for the train/validation loss. \\
\rowcolor{gray!40}           & $L_\gamma$                                                          &  Lipschitz constant for $\nabla_{\boldsymbol{\alpha}} \mathcal{H}_\gamma\left(\boldsymbol{\alpha}, \boldsymbol{w}_\gamma^*(\boldsymbol{\alpha})\right)$.  \\
\midrule
Train                 & $t$                                                               & Mixture ratio $\balpha$ training step                                                           \\
                      & $T$                                                               & The total number of $\balpha$ update.                                    \\ 
                       & $k$                                                               & $\bu$ and $\bw$ update step.  \\
                       & $e$                                                               & Free $\bu$ update step.  \\
                      & $K$                                                               & The number of $\bu$, $\bw$ probing update for one $\balpha$ update. \\
                      & $E$                                                               & The number of $\bu$ free update.  \\
                      & $\eta_{\balpha}$                                                               & The learning rate on $\balpha$.  \\
                      & $\eta_{\bu}$                                                               & The learning rate on $\bu$.  \\
                      & $\eta_{\bw}$                                                               & The learning rate on $\bw$.  \\
                      & $\gamma$                                                               & The penalty strength.  \\
                      \bottomrule
\end{tabular}
\end{table}

\section{Limitations and Future Works}
\label{sec:limitations}
Despite the advancements introduced in this work, several challenges remain open for future research. The limitations of this paper are as follows: 
(1) Due to limited computational resources, our experiments were conducted using models of up to 3 billion parameters. Although our experimental results demonstrate TANDEM’s effectiveness at this scale, the constraint on model size limits our ability to verify whether our findings generalize to significantly larger models, such as those with 405 billion parameters.
(2) Our current data mixture optimization (DMO) approach is validated on coarse, naturally occurring domain splits. For example, the SlimPajama corpus consists of seven domains sourced from different origins. The impact of using more fine-grained and intentionally designed domain splits on DMO performance remains unexplored and presents an interesting direction for future investigation.


